\def\bigO2{\mbox{${\cal O}$}}
\def\bigO{O}
\def\1n{\mathbf{1}_n}
\def\0{\mathbf{0}}
\def\1{\mathbf{1}}
\def\etal{{\em et al.}}
\def\bphi{\mbox{\boldmath $\phi$}}
\def\bSigma{{\bm \Sigma} }
\def\btheta{\mbox{\boldmath $\theta$}}
\def\bPhi{\mbox{\boldmath{$\Phi$}}}
\def\bSigma{\mbox{\boldmath{$\Sigma$}}}
\def\btheta{\mbox{\boldmath{$\theta$}}}
\def\A{{\bf A}}
\def\E{{\bf E}}
\def\H{{\bf H}}
\def\I{{\bf I}}
\def\J{{\bf J}}
\def\K{{\bf K}}
\def\M{{\bf M}}
\def\Q{{\bf Q}}
\def\R{{\bf R}}
\def\U{{\bf U}}
\def\X{{\bf X}}
\def\b{{\bf b}}
\def\c{{\bf c}}
\def\d{{\bf d}}
\def\f{{\bf f}}
\def\g{{\bf g}}
\def\h{{\bf h}}
\def\p{{\bf p}}
\def\t{{\bf t}}
\def\x{{\bf x}}
\def\y{{\bf y}}
\def\a1{\mbox{\bf a}_1}
\def\a2{\mbox{\bf a}_2}
\def\a3{\mbox{\bf a}_3}
\def\a4{\mbox{\bf a}_4}
\def\btheta{\mbox{\boldmath{$\theta$}}}
\def\figdir{./figures}
\def\ie{\textit{i.e.}}
\def\eg{\textit{e.g.}}
\def\tra{\text{trace}}
\def\notation{
Bold capital letters denote a matrix $\X$; bold lower-case letters denote a column vector $\x$.
All non-bold letters represent scalars.
$\x_i$ represents the $i$th column of the matrix $\X$.
$x_{ij}$ denotes the scalar in the $i^{th}$ row and $j^{th}$ column of the matrix $\X$.
$x_{j}$ denotes the scalar in the $j^{th}$ element of $\x$.
$\I_n \in \Re^{n \times n}$ is an identity matrix.
$\| \x \|_2=\sqrt{\x^T\x}$ denotes the Euclidean distance.
$\| \X \|_F = \tra(\X^T \X) = \tra(\X \X^T)$ designates the Frobenious norm of a matrix.
}
\theoremstyle{definition}
\newtheorem{mydef}{Definition}[section]
\newtheorem{theorem}{Theorem}[section]
\begin{document}
%
\title{Supervised Descent Method \\ for Solving  Nonlinear Least Squares \\ Problems in Computer Vision}
%
%
%
%

\author{Xuehan~Xiong, and~Fernando~De~la~Torre}

%
%

\markboth{Journal of \LaTeX\ Class Files,~Vol.~6, No.~1, January~2007}%
{Shell \MakeLowercase{\textit{et al.}}: Bare Demo of IEEEtran.cls for Computer Society Journals}
%


\IEEEcompsoctitleabstractindextext{%
\begin{abstract}
   Many computer vision problems (e.g., camera calibration, image alignment, structure from
    motion) are solved with nonlinear optimization methods. It is generally accepted that second order descent methods are the most robust, fast, and reliable approaches for nonlinear optimization of a general smooth function. However, in the context of computer vision, second order descent methods have two main drawbacks:   (1) the function might not be analytically differentiable and numerical approximations are impractical, and (2) the Hessian may be large and not positive definite. To address these issues, this paper proposes {\em generic descent maps}, which are average ``descent directions'' and rescaling factors learned in a supervised fashion.
    Using {\em generic descent maps}, we derive a practical algorithm - Supervised Descent Method (SDM) - for minimizing Nonlinear Least Squares (NLS) problems. During training, SDM learns a sequence of decent maps that minimize the NLS. In testing, SDM minimizes the NLS objective using the learned descent maps without computing the Jacobian or the Hessian. We prove the conditions under which the SDM is guaranteed to converge. We illustrate the  effectiveness and accuracy of SDM in three computer vision problems: rigid image alignment, non-rigid image alignment, and 3D pose estimation. In particular, we show how SDM achieves state-of-the-art performance in the problem of facial feature detection. The code has been made available at \url{www.humansensing.cs.cmu.edu/intraface}.
\end{abstract}

\begin{keywords}
Newton's method, Lucas-Kanade, nonlinear least squares, face alignment, image alignment, pose estimation
\end{keywords}}

\maketitle

\IEEEdisplaynotcompsoctitleabstractindextext

%
\IEEEpeerreviewmaketitle

\section{Introduction}
%
%

%
%
%
%
\IEEEPARstart{M}{athematical} optimization plays a fundamental role in solving many problems in computer vision. This is evidenced by the significant number of papers using optimization techniques in any major computer vision conferences. 
Many important problems in computer vision, such as structure from motion, image alignment,
optical flow, or camera calibration can be posed as nonlinear optimization problems.  There are a large number of different approaches to solve these continuous nonlinear optimization problems based on first and second order methods, such as gradient descent~\cite{AbouDF10} for dimensionality reduction, Gauss-Newton for image alignment~\cite{Lucas-Kanade-IUW81,black96,DelaTorre-Nguyen-CVPR08} or Levenberg-Marquardt (LM)~\cite{More1977TLM} for structure from motion~\cite{buchanan}.

Despite many centuries of history, Newton's method and its variants (\eg, Quasi-Newton methods~\cite{berndt74,Broyden65-MC,LBFGS}) are regarded as powerfull optimization tools for finding local minima/maxima of smooth functions when second derivatives are available.  Newton's method makes the assumption that a smooth function $f(\x)$ can be well approximated by a quadratic function in a neighborhood of the minimum. If the Hessian is positive definite, the minimum can be found by solving a system of linear equations.
Given an initial estimate $\x_0 \in \Re^{p \times 1}$, Newton's method creates a sequence of updates as
\vspace{-0.05in}
\begin{equation}
\label{eq::newton-step}
\x_{k+1}=\x_k-\H^{-1}(\x_k)\J(\x_k),
\vspace{-0.015in}
\end{equation}
where $\H(\x_k) \in \Re^{p \times p}$ and $\J(\x_k) \in \Re^{p \times 1}$ are the Hessian matrix and Jacobian matrix evaluated at $\x_k$ (see notation \footnote{\notation}). In the case of Quasi-Newton methods, $\H^{-1}$ can be approximated by analyzing successive gradient vectors.
Newton-type methods have two main advantages over competitors.
First, they are guaranteed to converge provided that, in the neighborhood of the minimum, the Hessian is invertible and the minimizing function is Lipschitz continuous.
Second, the convergence rate is quadratic.

\begin{figure}[t]
\centering
\includegraphics[width=0.90\linewidth]{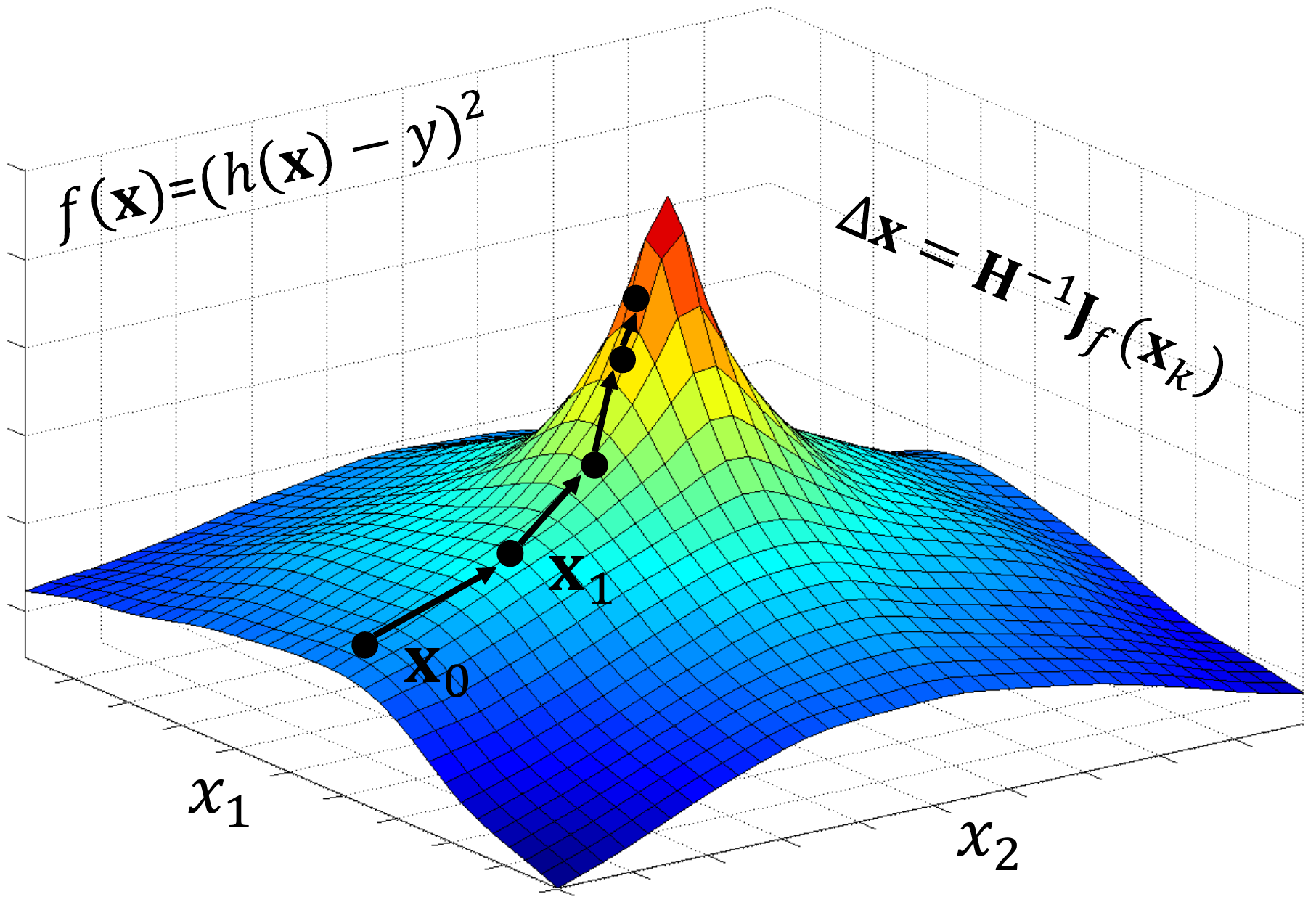}
\\[-2.5pt]
\includegraphics[width=0.90\linewidth]{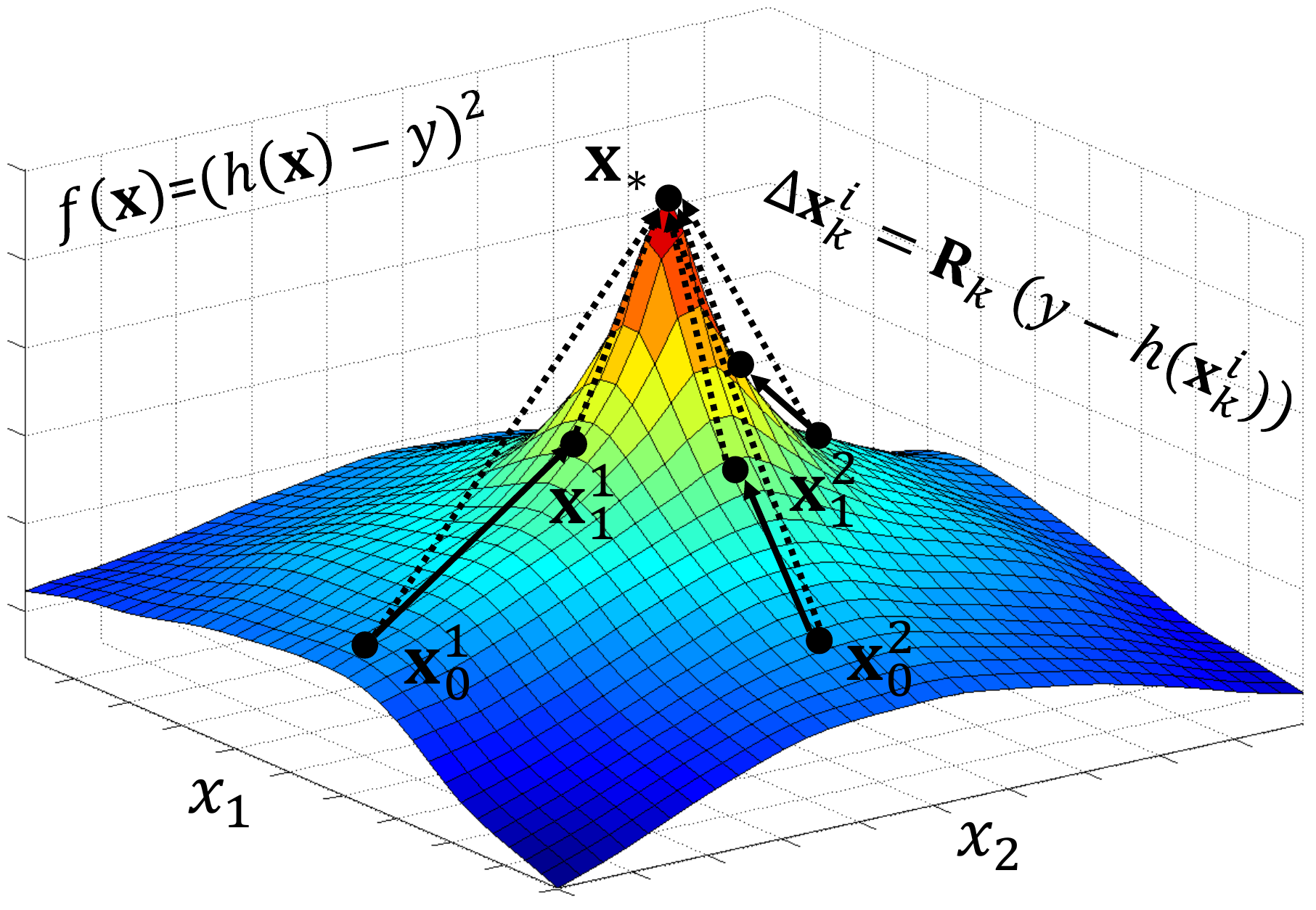}
\caption{a) Newton's method to minimize $f(\x)$. The z-axis is reversed for visualization purposes. b) SDM learns a sequence of generic descent maps $\{\R_k\}$ from the optimal optimization trajectories (indicated by the dotted lines). Each parameter update $\Delta \x^i$ is the product of $\R_k$ and a sample-specific component ($y-h(\x^i_k)$). }
\label{fig::main-idea}
\end{figure}
However, when applying  Newton's method to computer vision problems, three main problems arise:
(1) The Hessian is positive definite at the local minimum, but it may not be positive definite elsewhere; therefore, the Newton steps may not be in the descent direction. LM addressed this issue by adding a damping factor to the Hessian matrix. This increases the robustness of the algorithm but reduces the convergence rate.  (2) Newton's method requires the function to be twice differentiable. This is a strong requirement in many computer vision applications. For instance, the popular SIFT~\cite{Lowe-IJCV04} or HoG~\cite{Dalal-Triggs-CVPR05} features are non-differentiable image operators. In these cases, we can estimate the gradient or the Hessian numerically, but this is typically computationally expensive.
(3) The dimension of the Hessian matrix can be large; inverting the Hessian requires $O(p^3)$ operations and $O(p^2)$ in space, where $p$ is the dimension of the parameter to estimate.  Although explicit inversion of the Hessian is not needed using Quasi-Newton methods, it can still be computationally expensive to use these methods in computer vision problems.
In order to address previous limitations, this paper proposes the idea of learning descent directions (and rescaling factors) in a supervised manner with
a new method called Supervised Descent Method (SDM).

Consider  Fig.~\ref{fig::main-idea} where the goal is to minimize a Nonlinear Least Squares (NLS) function, $f(\x)=(h(\x)-y)^2$,
where $h(\x)$ is a nonlinear function , $\x$ is the vector of parameters to optimize, and $y$ is a known scalar. The $z$-axis has been reversed for visualization purposes. The top image shows the optimization trajectory with Newton's method. The traditional Newton update has to compute the Hessian and the Jacobian at each step. The bottom image illustrates the main idea behind SDM. In training, SDM starts by sampling a set of initial parameters $\{\x^i_0\}$ around the known minimum $\x_*$. For each sample, the optimal parameter update is also known in training (plotted in dotted lines from the point to $\x_*$). SDM will learn a sequence of updates to approximate
 the ideal trajectory. The SDM updates can be decomposed into two parts: a sample specific component (\eg, $h(\x^i_k)-y$)  and a generic Descent Map (DM) $\R_k$.
During training, SDM finds a series of $\{\R_k\}$ such that they gradually move all $\x_k^i$ towards $\x_*$. The updates in the first iteration using learned DM are plotted in black lines. Then, the optimal parameter updates are re-computed (shown in dotted lines) and used for learning the next DM. In testing, $f(\x)$ is optimized using the learned DMs without computing the Jacobian or the Hessian. SDM is particularly useful to minimize
a NLS problem where the template $y$ is the same in testing (\eg, tracking). However, this is not a limitation, and we will develop extensions of SDM for an unseen $y$.

We illustrate the benefits of SDM over traditional optimization methods in terms of speed and accuracy in three NLS problems in computer vision. First, we show how to extend the traditional Lucas-Kanade~\cite{lucas:an:81} 
tracker using HOG/SIFT features, resulting in a more robust tracker. Second, we apply SDM to the problem of facial feature detection and show state-of-the-art results. The code is publicly available.  Third, we illustrate how SDM improves current methods for 3D pose estimation.

\section{Theory of SDM}

This section provides the theoretical basis behind SDM. Section \ref{sec::background} reviews
background mathematical definitions.
Section~\ref{sec::one-dim} illustrates the ideas behind SDM and DMs using one-dimensional functions, and
Section~\ref{sec::high-dim} extends it to high-dimensional functions. Section~\ref{sec::soft-version} and Section \ref{sec::reverse-sdm} derive practical algorithms for different situations.

\begin{figure*}[t]
\centering
\addtolength{\tabcolsep}{-3pt}

\begin{tabular}{cccc}
\raisebox{6.5pt}{\includegraphics[width=0.3\linewidth]{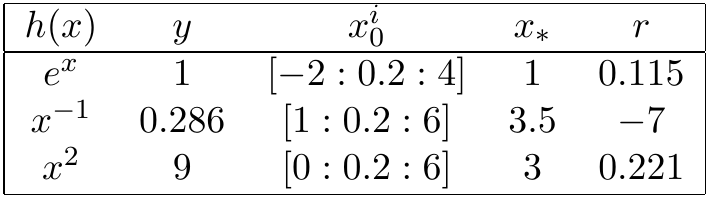}}&
\includegraphics[width=0.22\linewidth]{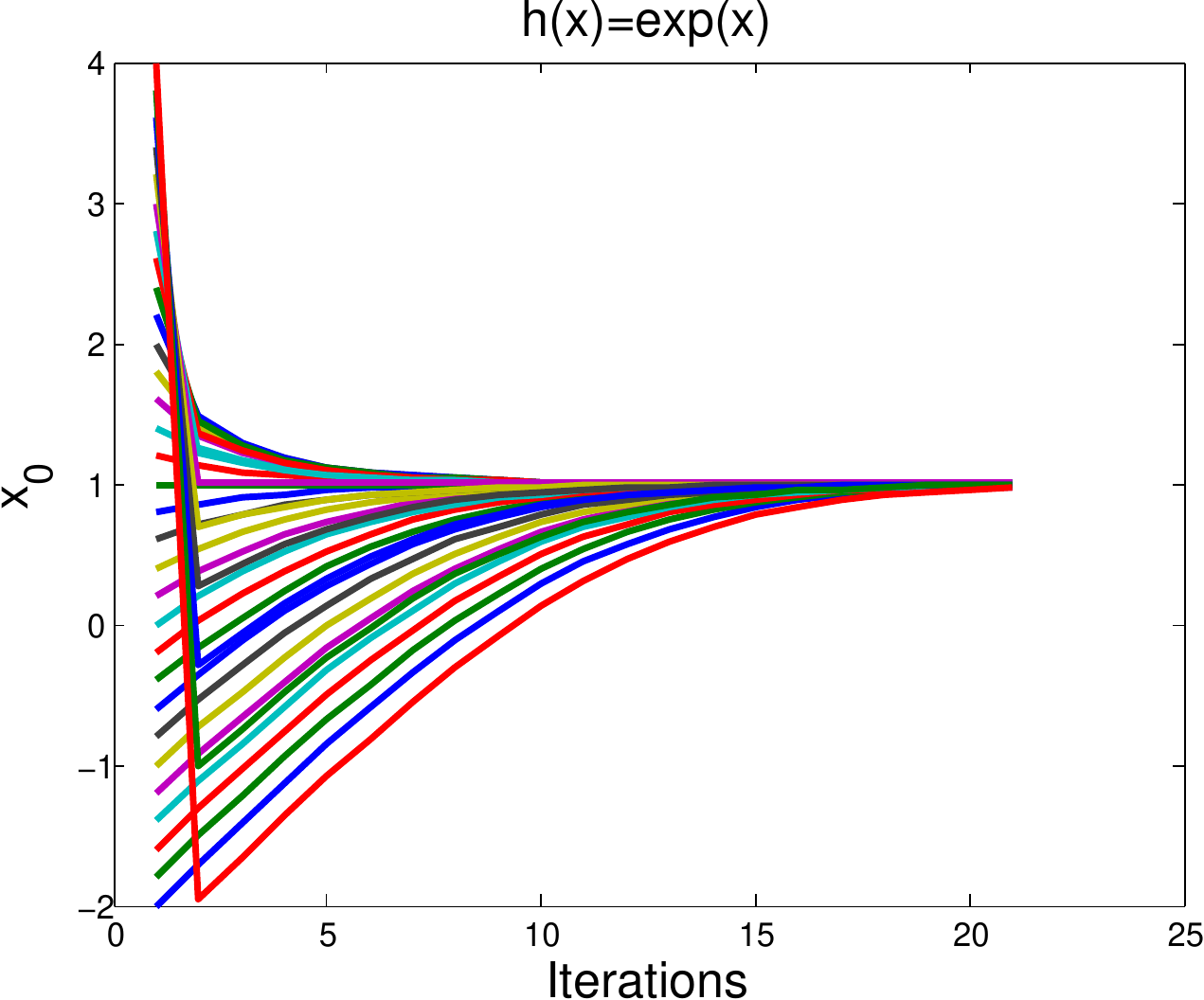}&
\includegraphics[width=0.22\linewidth]{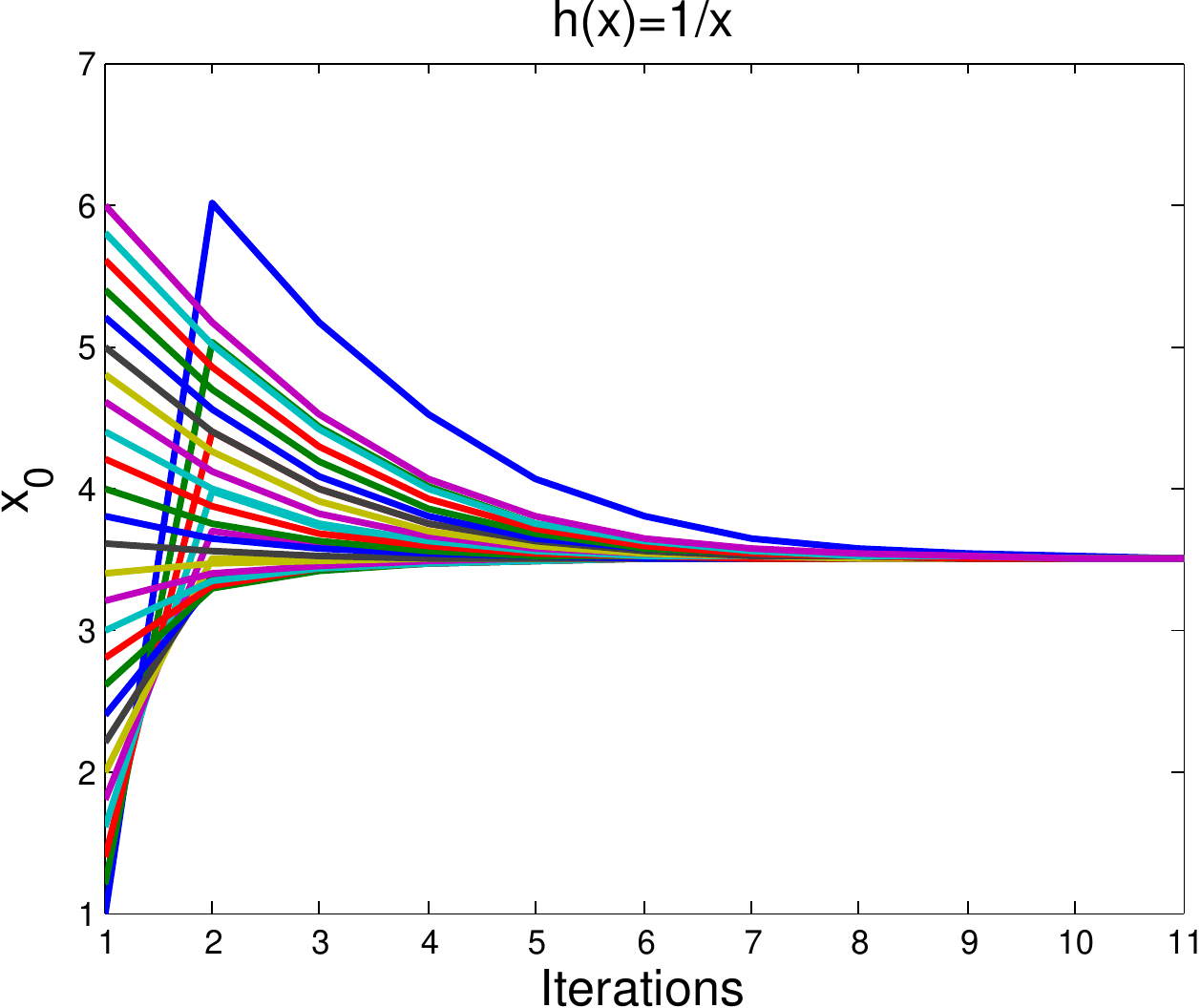}&
\includegraphics[width=0.22\linewidth]{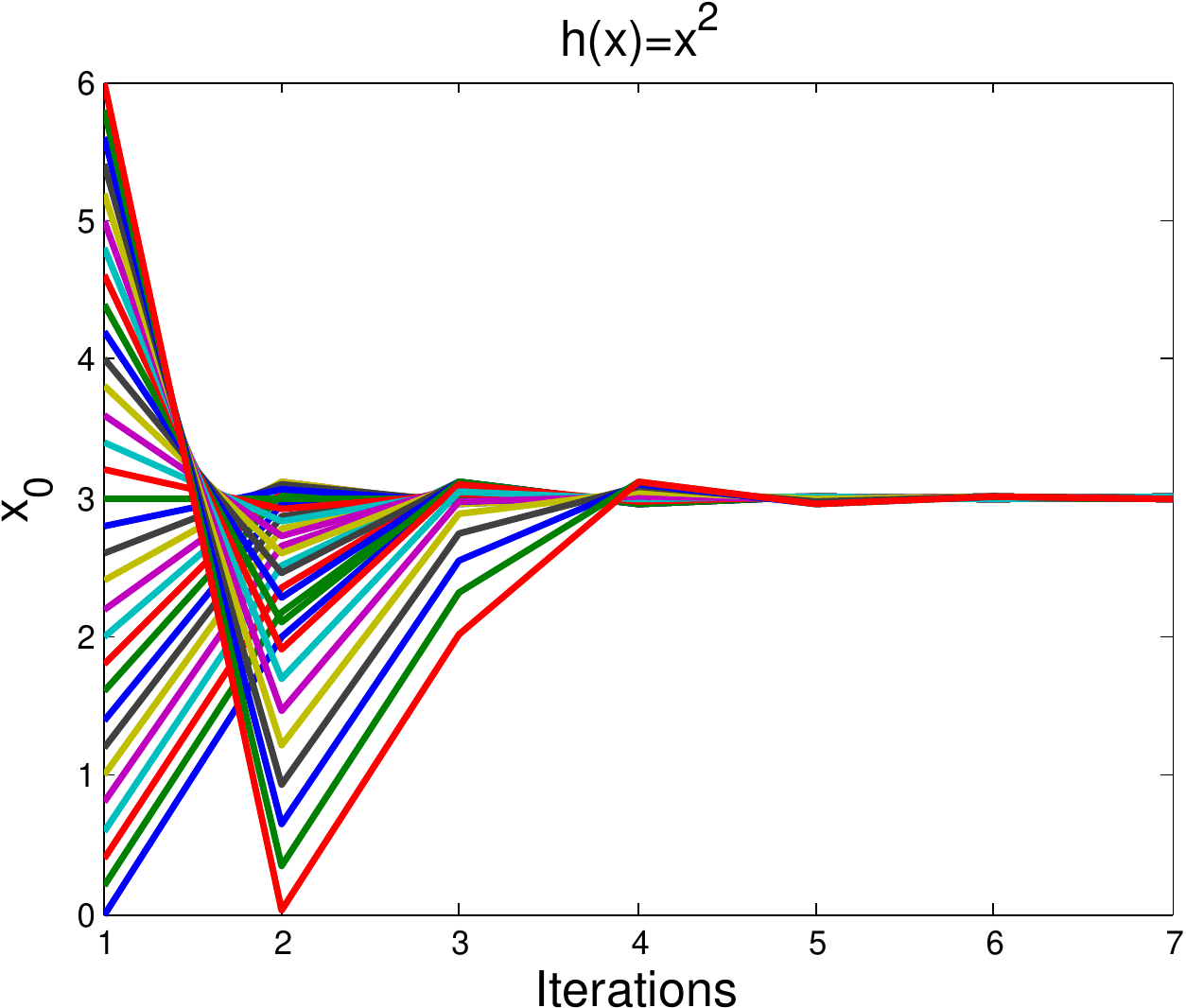}\\[-2.5pt]
(a)&(b)&(c)&(d)
\end{tabular}
\label{fig::analytic}
\caption{a) Experimental setup. The colon usage follows Matlab conventions. b,c,d) Convergence curves of $x_k$ using a generic DM on three functions. Different colors represent different initializations $x_0^i$.}
\end{figure*}
\subsection{Background (Definitions)}
\label{sec::background}
Before deriving SDM, we review two concepts, {\em Lipschitz continuous}~\cite{armijo1966} and {\em monotone operator}~\cite{Rockafellar76-SIAM}.

\begin{mydef}
A function $\f: \Re^n \to \Re^m$ is called {\em locally Lipschitz continuous} anchored at $\x_*$ if there exists a real constant $K \geq 0$ such that
$$
\|\f(\x) - \f(\x_*)\|_2 \leq K\|\x-\x_*\|_2, \forall \x \in U(\x_*),
$$
where $U(\x_*)$ represents a neighborhood of $\x_*$ and $K$ is referred as the Lipschitz constant.
\end{mydef}
 Note that this is different from the traditional Lipschitz continuous definition, which is defined for any pair of $\x$'s.
The advantage of our definition is that it is applicable to other distance metrics besides Euclidean.

\begin{mydef}
A function $\f: \Re^n \to \Re^n$ is called a {\em locally monotone operator} anchored at $\x_*$ if
$$
\langle \x - \x_*, \f(\x) - \f(\x_*)\rangle \geq 0, \forall \x \in U(\x_*),
$$
where $\langle \cdot \rangle$ represents the inner product operator. $\f$ is said to be a {\em strictly locally monotone operator} if
$$
\langle \x - \x_*, \f(\x) - \f(\x_*)\rangle > 0, \forall \x \in U(\x_*),
$$
\end{mydef}

\subsection{One-dimensional Case}
\label{sec::one-dim}
This section derives the theory for SDM in 1D functions. Given a 1D NLS problem,
\begin{equation}
\label{eq::nls}
\min_{x} f(x) = \min_x (h(x)-y)^2,
\end{equation}
where $h(x)$ is a nonlinear function and $y$ is a known scalar. Applying the chain rule to Eq.~\ref{eq::nls},
the gradient descent update rule yields
\begin{equation}
\label{eq::gd2}
x_k = x_{k-1} - \alpha h^{\prime}(x_{k-1}) (h(x_{k-1})-y).
\end{equation}
Gradient-based methods to optimize Eq.~\ref{eq::nls} follow Eq.~\ref{eq::gd2}, but have different ways to compute the step size $\alpha$.
 For example, Newton's method sets $\alpha = \frac{1}{f^{\prime\prime}(x_{k-1})}$. Computing the step size and gradient direction in high-dimensional spaces is computationally expensive
 and can be numerically unstable, especially in the case of non-differentiable functions, where finite differences are required to compute estimates of the Hessian and Jacobian.
 The main idea behind SDM is to avoid explicit computation of the Hessian and Jacobian and learn the ``descent directions'' ($\alpha h^\prime(x_{k-1})$) from training data.
 During training,  SDM samples many different initial configurations in the parameter space $\{x_0^i\}_i$ and
  learns a constant $r \sim \alpha h^\prime(x_{k-1})$, which drives all samples towards the optimal solution $x_*$. We define $r$ more formally below.
\begin{mydef}
A scalar $r$ is called a {\em generic DM} if there exists a scalar $0 < c < 1$ such that $\forall x \in U(x_*)$, $|x_* - x_{k}| \leq c|x_* - x_{k-1}|$. $x_{k}$ is updated using the following equation:
\begin{equation}
\label{eq::update-1d}
x_k = x_{k-1} - r(h(x_{k-1})-h(x_*)).
\end{equation}
\end{mydef}
The existence of a generic DM is guaranteed when both of the following conditions hold:
\begin{enumerate}
\item[1.] $h(x)$ is strictly monotonic around $x_*$.
\item[2.] $h(x)$ is locally Lipschitz continuous anchored at $x_*$ with $K$ as the Lipschitz constant.
\end{enumerate}
A detailed proof is presented in Section~\ref{sec::app} (Theorem~\ref{th::one-dim}). Interestingly, the above two conditions are closely related to the essence of a generic DM. The update rule (Eq.~\ref{eq::update-1d}) can be split into two terms: (1) $r \sim \alpha h^\prime(x_{k-1})$ (generic DM) that is
sample independent, and (2) $(h(x_{k-1})-y)$ that is sample dependent.  $r$ contains only part of the descent direction and needs to be multiplied by $h(x_{k-1})-y$ to produce a descent direction.
Condition 1 ensures that $h^\prime(x)$ does not change signs around $x_*$, while
condition 2 constrains the smoothness of the function, putting an upper bound on step sizes.

In Fig.~\ref{fig::analytic}, we illustrate how to minimize three different functions using a generic DM. The table in Fig.~\ref{fig::analytic}a describes our experimental setup for three different functions $h(x)$ with template $y$, different initial values $x_0^i$, the optimal value $x_*$ and the generic DM $r$. According to Theorem~\ref{th::one-dim}, the DM $r$ is set to be $\text{sign}(h^\prime)(\frac{2}{K}-\epsilon$), where $\epsilon$ is a small positive number, and the Lipschitz constant $K$ is computed numerically in a neighborhood of $x_*$. Figs.~\ref{fig::analytic}bcd plot the convergence curves for each function where $x_k$ is updated using Eq.~\ref{eq::update-1d}.
Note that SDM always converges to the optimal value $x_*$, regardless of the initial value $x_0^i$.

\subsection{Multi-dimensional Case}
\label{sec::high-dim}
This section extends the concept of generic DM to multi-dimensional functions, where $\h: \Re^n \to \Re^m$. For multi-dimensional NLS functions,
the gradient descent update in Eq. \ref{eq::gd2} becomes
\begin{equation}
\x_{k}  =  \x_{k-1} - \alpha\A\J_\h^\top(\x_{k-1})(\h(\x_{k-1})-\y)
\end{equation}
where $\J_\h(\x) \in \Re^{m\times n}$
is the Jacobian matrix, $\A_{n\times n}$ is the identity ($\I_n$) in first order gradient methods, and the inverse Hessian (or an approximation) for second-order methods, $\alpha$ is the step size.
A generic DM $\R$ exists if there exists a scalar $0<c<1$ such that
$$\|\x_* - \x_k\|_2 \leq c\|\x_*-\x_{k-1}\|_2. \forall \x \in U(\x_*)$$
The update rule of Eq.~\ref{eq::update-1d} becomes
\begin{equation}
\label{eq::update-hd}
\x_k =  \x_{k-1} - \R (\h(\x_{k-1})-\h(\x_*)).
\end{equation}
In section~\ref{sec::app}, we prove that the existence of a generic DM if both of the following conditions hold:
\begin{enumerate}
\item[1.] $\g(\x)=\R\h(\x)$ is a strictly locally monotone operator anchored at the optimal solution $\x_*$.
\item[2.] $\h(\x)$ is locally Lipschitz continuous anchored at $\x_*$ with $K$ as the Lipschitz constant.
\end{enumerate}

\subsection{Relaxed SDM}
\label{sec::soft-version}
In the previous section, we have stated the conditions that ensure the existence of a generic DM. However, in many computer vision applications, such conditions may not hold. In this section, we introduce a relaxed version of the generic DM and  we derive a practical algorithm.  Previously, a single matrix $\R$ was used for {\em all} samples. In this section, we extend SDM to learn a
sequence of $\{\R_k\}$ that moves the {\em expected value} of $\x_k$ towards the optimal solution $\x_*$. The relaxed SDM is a sequential algorithm that learns such DMs from training data.

Let us denote $X$ to be a random variable representing the state of $\x$. In the first iteration ($k=0$), we assume that $X_0$ is coming from a known distribution $X_0 \sim P_0$ and use lower case $p_0$ to represent its probability density function. The first DM $\R_0$ is computed by minimizing the expected loss between the true state and the predicted states, given by
\begin{align}
 & \E\|\x_* - X_{1}\|^2_2 = \E\|\x_* - X_{0} + \R_{0}(\h(X_{0})-\h(\x_*))\|^2_2 \nonumber\\
=& \int_{\x_0}\|\x_* - \x_{0} + \R_{0}(\h(\x_{0}^i)-\h(\x_*))\|^2_2 p_0(\x_0) d\x_0\nonumber\\
\label{eq::monte-carlo}
\approx & \sum_{i}\|\x_* - \x^i_{0} + \R_{0}(\h(\x_{0}^i)-\h(\x_*))\|^2_2.
\end{align}
Here we have used Monte Carlo sampling to approximate the integral, and $\x_0^i$ is drawn from the distribution $P_0$. $\x_*,\h(\x_*)$ are known in training and minimizing Eq. \ref{eq::monte-carlo} is simply a linear least squares problem,  which can be solved in closed-form.

It is unlikely that the first generic DM can achieve the desired minimum in one step for all initial configurations.  As in Newton's
method, after an update, we recompute a new generic map (\ie, a new Jacobian and Hessian in Newton's method). In iteration $k$, with $\R_{k-1}$ estimated, each sample $\x_{k-1}^i$ is updated to its new location $\x^i_k$ as follows:
\begin{equation}
\label{eq::hd-update}
\x_k^i =  \x_{k-1}^i - \R_{k-1} (\h(\x_{k-1}^i)-\h(\x_*)).
\end{equation}
This is equivalent to drawing samples from the conditional distribution $P(X_k|X_{k-1}=\x_{k-1}^i)$. We can use the samples to approximate the expected loss as follows:
\begin{align}
& \E\|\x_* - X_{k+1}\|_2^2 = \E\|\x_* - X_{k} + \R_{k}(\h(X_{k})-\h(\x_*))\|^2_2 \nonumber\\
= & \int_{\x_k}\|\x_* - \x_{k} + \R_{k}(\h(\x_{k})-\h(\x_*))\|^2_2 p(\x_{k}|\x_{k-1}) d\x_k \nonumber\\
\label{eq::monte-carlo-t}
\approx & \sum_{i}\|\x_* - \x^i_{k} + \R_{k}(\h(\x_{k}^i)-\h(\x_*))\|^2_2.
\end{align}
Minimizing Eq.~\ref{eq::monte-carlo-t} yields the $k^{th}$ DM. During learning, SDM alternates between minimizing Eq. \ref{eq::monte-carlo-t} and updating Eq. \ref{eq::hd-update} until convergence. In testing, $\x_k$ is updated recursively using Eq.~\ref{eq::hd-update} with learned DMs.

\subsection{Generalized SDM}
\label{sec::reverse-sdm}

Thus far, we have assumed that $\y=\h(\x_*)$ is the same for training as for testing (e.g., template tracking).
This section presents generalized SDM, which addresses the case where $\y$ differs between training and testing.  There are two possible
scenarios: $\y$ is known and $\y$ is unknown. In this section, for simplicity, we will only discuss the case when $\y$ is known and its
applications to  pose estimation. Section~\ref{sec::sdm-face-alignment} discusses the case when $\y$ is unknown and its application to face alignment.

 The goal of 3D pose estimation is to estimate the 3D rigid transformation ($\x$) given a 3D model of the object and its projection ($\y$), and $\h$ is the projection function. See section~\ref{sec::pose}.
In training, the pairs of 3D transformation and image projection are known. In testing, the projection (2D landmarks) is known, but different from those used in training. To address this problem,
we reverse the order of learning. Unlike SDM, the reversed SDM during training always starts at the same initial point $\x_0$ and samples different $\{\x_*^i\}$ around it.
At the same time, for each sample, we compute $\y^i = \h(\x_*^i)$.  The training procedure remains the same as stated in the previous section, except we replace $\h(\x_*)$ in Eq.~\ref{eq::hd-update} with $\y^i$,
\begin{equation}
\label{eq::rsdm-update}
\x_k^i =  \x_{k-1}^i - \R_{k-1} (\h(\x_{k-1}^i)-\y^i)),
\end{equation}
and we replace $\x_*, \h(\x_*)$ in Eq. \ref{eq::monte-carlo-t} with $\x_*^i, \y^i$,
\begin{equation}
\label{eq::rsdm-learning}
\sum_{i}\|\x_*^i - \x^i_{k} + \R_{k}(\h(\x_{k}^i)-\y^i))\|^2_2.
\end{equation}
In testing, we start SDM with the same initial value $\x_0$ that we used in training.

\section{Rigid Tracking}

This section illustrates how to apply SDM to the problem of tracking rigid objects. In particular,
we show how we can extend the classical Lucas-Kanade (LK) method~\cite{lucas:an:81} to efficiently track in HoG~\cite{Dalal-Triggs-CVPR05} space. To the best of our knowledge, this is the first algorithm to perform alignment in HoG space.

\subsection{Previous Work (Lucas-Kanade)}
\label{sec::LK}
The Lucas-Kanade (LK) tracker is one of the early and most popular computer vision
trackers due to its efficiency and simplicity. It formulates image alignment as a NLS problem.
Alignment is achieved by finding the motion parameter $\p$ that minimizes
\begin{equation}
\min_{\p} \ ||\d(\f(\x,\p)) -\t(\x)||_2^2,
\label{eq::LK}
\end{equation}
where $\t(\x)$ is the template, $\x = [x_1, y_1, ... x_l, y_l]^\top$ is a vector containing the coordinates of the pixels to detect/track, and $\f(\x,\p)$ is a vector with entries $[u_1, v_1, ..., u_l, v_l]^\top$ representing a geometric transformation. In this section, we limit the transformation to be affine. That is,  $(u_i, v_i)$ relates to $(x_i, y_i)$ by
$$
\begin{bmatrix}
u_i \\
v_i
\end{bmatrix}
=
\begin{bmatrix}
p_1 & p_2\\
p_4 & p_5
\end{bmatrix}
\begin{bmatrix}
x_i \\
y_i \end{bmatrix}
+
\begin{bmatrix}
p_3 \\
p_6
\end{bmatrix}.
$$
The $i^{th}$ entry of $\d(\f(\x,\p))$ is the pixel intensity of the image $\d$ at $(u_i, v_i)$.
Minimizing Eq. \ref{eq::LK} is a NLS problem because the motion parameters are nonlinearly related to the pixel values (it is a composition of two functions).

Given a template (often the initial frame), the LK method uses Gauss-Newton to minimize Eq.~\ref{eq::LK} by
linearizing the motion parameters around an initial estimate $\p_0$. The $k^{th}$ step of the LK update is
$$
\Delta\p = \H_k^{-1} \left(\frac{\partial\d}{\partial\f}\frac{\partial \f}{\partial \p_k}\right)^\top \big(\t(\x)-\d(\f(\x,\p_k)) \big),
$$
where $\H_k = (\frac{\partial\d}{\partial\f}\frac{\partial \f}{\partial \p_k})^\top(\frac{\partial\d}{\partial\f}\frac{\partial \f}{\partial \p_k})$ is the Gauss-Newton approximation of the Hessian. The motion parameter is then updated as $\p_{k+1} = \p_k + \Delta\p$.

\subsection{An SDM Solution}
\label{sec::sdm-rigid-tracking}
The LK method provides a mathematically sound solution for image alignment problems. However, it is not robust to illumination changes.  Robustness can be achieved
by aligning images w.r.t. some image descriptors instead of pixel intensities, \ie,
\begin{equation}
\label{eq::LK-SIFT}
\min_{\p} \ ||\h(\d(\f(\x,\p))) -\h(\t(\x))||_2^2,
\end{equation}
where $\h$ is some image descriptor function (HoG, in our case).
The LK (Gauss-Newton) update for minimizing \ref{eq::LK-SIFT} can be derived as follows:
$$
\Delta\p = \H_k^{-1} \left(\frac{\partial\h}{\partial\p_k}\right)^\top \big(\h(\t(\x))-\h(\d(\f(\x,\p_k))) \big).
$$
However, the update can no longer computed efficiently: HoG is a non-differentiable image operator, and thus the Jacobian ($\frac{\partial\h}{\partial\p_k}$) has to be estimated numerically at each iteration.

In contrast, SDM minimizes Eq.~\ref{eq::LK-SIFT} by replacing the computationally expensive term $\H_k^{-1} (\frac{\partial\h}{\partial\p_k})^\top$ with a pre-trained DM $\R$, and yields the following update:
\begin{equation}
\label{eq::sdm-template}
\Delta\p = \R_k (\h(\t(\x))-\h(\d(\f(\x,\p_k)))).
\end{equation}
Each update step in SDM is very efficient, mainly consisting of one affine image warping and one HoG descriptor computation of the warped image.
One may notice the inconsistency between Eq. \ref{eq::sdm-template} and the SDM update we previously introduced in Eq. \ref{eq::hd-update}. In the following, we will show the equivalence of the two.

The template can be seen as the HoG descriptors extracted from the image under an identity transformation, $\t(\x) = \t(\f(\x,\p_*))$, where
$$
\p_* =
\begin{bmatrix}
1 & 0 & 0 & 0 & 1 & 0
\end{bmatrix}^\top.
$$
Under the assumption that only affine deformation is involved,
the image $\d$ on which we perform tracking can be interpreted as the template image under an unknown affine transformation $\tilde{\p}$:
$$
\d(\x) = \t(\f(\x,\tilde{\p})).
$$
In Eq. \ref{eq::sdm-template}, image $\d$ warped under the current parameter $\p_k$ can be re-written as
\begin{equation}
\label{eq::affine-transform}
\d(\f(\x,\p_k)) = \t(\f(\f(\x,\tilde{\p}),\p_k)).
\end{equation}
The composition of two affine transformations remains affine, so Eq. \ref{eq::affine-transform} becomes
$$
\d(\f(\x,\p_k)) = \t(\f(\x,\widehat{\p})),
$$
where $\widehat{\p}$ an unknown affine parameter that differs from $\tilde{\p}$.
Therefore, we can re-write Eq. \ref{eq::sdm-template} as
\begin{equation}
\label{eq::sdm-template2}
\Delta\p = \R (\h(\t(\f(\x,\p_*)))-\h(\t(\f(\x,\widehat{\p})))).
\end{equation}
Eq. \ref{eq::sdm-template2} can be further simplified to follow the same form of Eq. \ref{eq::hd-update}:
$$
\Delta\p = \R (\g(\p_*)-\g(\widehat{\p})),
$$
where $\g = \h \circ \t \circ \f$.

In our implementation, we use Eq. \ref{eq::sdm-template} as the update rule instead of Eq. \ref{eq::sdm-template2} because $\p_k$ is a known parameter w.r.t image $\d$ and $\widehat{\p}$ is unknown w.r.t the template image $\t$. The descent maps are learned in the neighborhood of $\p_*$, but as tracking continues, the motion parameter may deviate greatly from $\p_*$. When tracking a new frame, before applying SDM the image is first warped back using the motion parameter estimated in the previous frame so that the optimization happens within a neighborhood of $\p_*$. Therefore, after SDM finishes, we warp back the estimated $\Delta\p$ using the same parameter.

The training of SDM involves sampling initial motion parameters and solving a sequence of linear systems (detailed in section \ref{sec::soft-version}).
We sample $\{\p_0^i\}_i$ around $\p_*$ and those samples are used for approximating the expectation expressed in Eq. \ref{eq::monte-carlo-t}. The details of how we generate initial samples are described in section \ref{sec::exp::rigid-tracking}.

\section{Nonrigid Detection and Tracking}
\label{sec::face-alignment}
In the previous section, we showed how SDM can be used for aligning regions of images that undergo an affine motion. This section extends SDM to detect and track nonrigid objects. In particular, we will show how SDM achieves state-of-the-art performance in facial feature detection and tracking.

\subsection{Previous Work}
\label{sec::fa::prev-work}
This section reviews existing work on face alignment.

{\bf Parameterized Appearance Models (PAMs)}, such as
Active Appearance Models~\cite{Cootes-et-al-PAMI01,DelaTorre-Nguyen-CVPR08,baker-ijcv-04}, Morphable Models~\cite{blanz,poggio2}, Eigentracking~\cite{black96}, and template tracking~\cite{Lucas-Kanade-IUW81,tzimiroREPFAICCV2011}
build an object appearance and shape representation by performing Principal Component Analysis (PCA) on a set of manually labeled data.
Fig.~\ref{fig::init}a illustrates an image labeled with $p$ landmarks ($p=66$ in this case).
After the images are aligned with
Procrustes~\cite{Dryden98}, a shape model is learned by performing PCA on the registered shapes. A linear combination of $k_s$ shape bases $\U^s \in \Re^{2p\times k_s}$ can
reconstruct (approximately) any aligned shape in the training set. Similarly, an appearance model $\U^a \in \Re^{m \times k_a}$ is built by performing PCA on the texture. Alignment is achieved by finding the motion parameter $\p$ and appearance coefficients $\c^a$ that best align the image w.r.t. the subspace $\U^a$, \ie,
\begin{equation}
\min_{\c^a,\p} \ ||\d(\f(\x,\p)) -\U^a\c^a||_2^2,
\label{eqn:PAM}
\end{equation}
In the case of the LK tracker, $\c^a$ is fixed to be $\mathbf{1}_{k_a}$ and $\U^a$ is a subspace that contains a single vector, the reference template.
The notation follows that of Section~\ref{sec::LK};
$\x = [x_1, y_1, \dots, x_l, y_l]^\top$ contains the coordinates of the pixels to track, and $\f(\x,\p)$ is a vector denoted by $[u_1, v_1, ..., u_l, v_l]^\top$ that now includes both affine and nonrigid transformation. That is, $(u_i, v_i)$ relates to $(x_i, y_i)$ by
$$
\begin{bmatrix}
u_i \\
v_i
\end{bmatrix}
=
\begin{bmatrix}
a_1 & a_2\\
a_4 & a_5
\end{bmatrix}
\begin{bmatrix}
x_i^s \\
y_i^s \end{bmatrix}
+
\begin{bmatrix}
a_3 \\
a_6
\end{bmatrix}.
$$
Here
$$
[x_1^s, y_1^s, ... x_l^s, y_l^s]^\top = \overline{\x} + \U^s\c^s,
$$
where $\overline{\x}$ is the mean shape face,
 $\mathbf{a}, \c^s$ are the affine and nonrigid motion parameters respectively, and $\p =[\mathbf{a}; \c^s]$. Similar to the LK method, PAMs algorithms~\cite{black96,baker-ijcv-04,Cootes-et-al-PAMI01,DelaTorre-Nguyen-CVPR08} optimize Eq.~\ref{eqn:PAM} using the Gauss-Newton method.
A more robust formulation of (\ref{eqn:PAM}) can be achieved by either replacing the L-2
norm with a robust error function \cite{black96,Baker_2003_4302} or by performing matching on robust features, such as gradient orientation \cite{tzimiro2012-PAMI}.

{\bf Discriminative approaches} learn a mapping from image features to
motion parameters or landmarks. Cootes \etal~\cite{Cootes-et-al-PAMI01} proposed to fit AAMs by learning a linear regression between the increment of motion parameters $\Delta\p$ and the
appearance differences $\Delta\d$. The linear regressor is a numerical approximation of the Jacobian~\cite{Cootes-et-al-PAMI01}. Following this idea, several discriminative methods that learn a mapping from $\d$ to $\Delta\p$ have been proposed.
Gradient Boosting, first introduced by Friedman~\cite{Friedman01}, has become one of the most popular regressors in face alignment because of its efficiency and ability to model nonlinearities.
Saragih and G\"ocke \cite{Saragih-Goecke-ICCV07} and Tresadern \etal~\cite{Tresadern10}
showed that using boosted regression for AAM discriminative fitting significantly improved over the original linear formulation.
Doll\'{a}r \etal~\cite{Dollar10} incorporated ``pose indexed features'' to
the boosting framework, where features are re-computed at the latest estimate of the landmark locations in addition to learning a new weak regressor at each iteration.
Beyond gradient boosting,  Rivera and Martinez~\cite{Rivera12} explored kernel regression to map from image features directly to landmark locations, achieving surprising results for low-resolution images.
Recently, Cootes \etal~\cite{Cootes-ECCV2012} investigated Random Forest regressors in the context of face alignment. At the same time,
S\'anchez \etal~\cite{Sanchez-ECCV2012} proposed to learn a regression model in the continuous domain to efficiently and uniformly sample the motion space. In the context of tracking, Zimmermann \etal~\cite{Zimmermann-pami09} learned a set of independent linear predictors for
different local motions and then chose a subset of them during tracking. Unlike PAMs, a major problem of discriminative approaches is that the cost function being minimizing is unclear, making these algorithms difficult to analyze theoretically. This paper is the first to formulate a precise cost function for discriminative approaches.

{\bf Part-based deformable models} perform alignment by maximizing the posterior likelihood of part locations given an image. The objective
function is composed of the local likelihood of each part times a global shape prior. Different methods typically vary the optimization methods or the shape prior.
Constrained Local Models (CLM)~\cite{Cristinacce08} model this prior similarly as AAMs, assuming all faces lie in a linear subspace spanned by PCA bases.
Saragih \etal~\cite{Saragih-ICCV2009} proposed a nonparametric representation to model the posterior likelihood and the resulting optimization method is reminiscent of mean-shift.
In~\cite{Belhumeur11}, the shape prior was modeled nonparametrically from training data. Recently, Saragih~\cite{Saragih-CVPR11}
derived a sample specific prior to constrain the output space providing significant improvements over the original PCA prior. Instead of using a global model, Huang et al.~\cite{Metaxas_11} proposed to build separate Gaussian models for each part (\eg, mouth, eyes) to preserve more detailed local shape deformations.
Zhu and Ramanan \cite{Zhu12} assumed that the face shape is a tree structure (for fast inference), and used a part-based model for face detection, pose estimation, and facial feature detection.

\subsection{An SDM Solution}
\label{sec::sdm-face-alignment}
\begin{figure}[t]
\centering
\addtolength{\tabcolsep}{2pt}
\begin{tabular}{cc}
\includegraphics[width=0.35\linewidth]{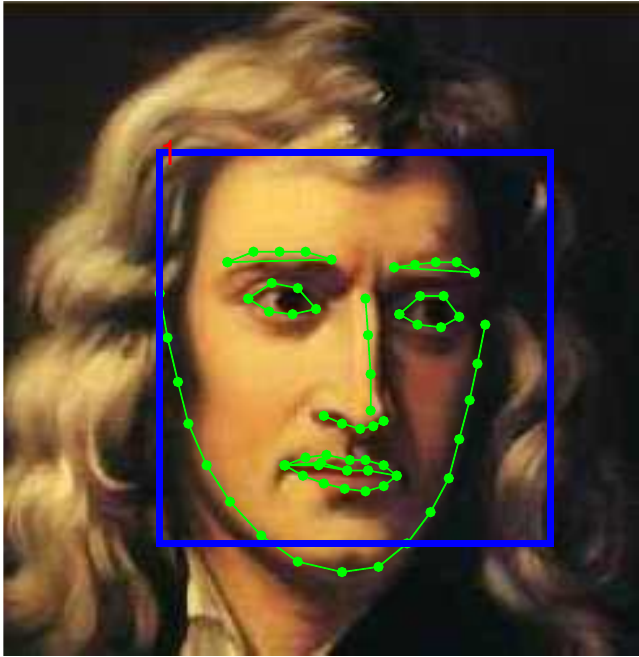}&
\includegraphics[width=0.35\linewidth]{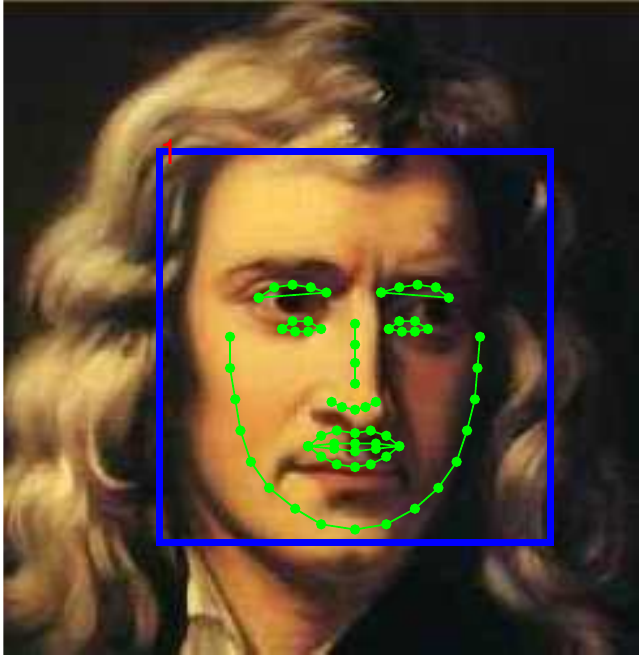}\\
(a) $\x_*$  & (b) $\x_0$
\end{tabular}
\caption{a) Manually labeled image with 66 landmarks. Blue outline indicates face detector.
b) Mean landmarks, $\x_0$, initialized using the face detector. }
\label{fig::init}
\end{figure}
Similar to rigid tracking in section~\ref{sec::sdm-rigid-tracking}, we perform face alignment in the HoG space. Given an image $\d \in \Re^{m \times 1}$ of $m$ pixels, $\d(\x) \in \Re^{p \times 1}$ indexes $p$ landmarks in the image. $\h$ is a nonlinear feature extraction function and $\h(\d(\x)) \in  \Re^{128p \times 1}$ in the case of extracting HoG features.
In this setting, face alignment can be framed as minimizing the following NLS function over landmark coordinates $\x$:
\begin{equation}
\label{eq::face-alignment}
f(\x) = \|\h(\d(\x))-\y_*\|_2^2,
\end{equation}
where $\y_*=\h(\d(\x_*))$ represents the HoG values computed on the local patches extracted from the manually labeled landmarks.
During  training, we assume that the correct $p$ landmarks (in our case $p=66$) are known,
and we will refer to them as $\x_*$ (see Fig.~\ref{fig::init}a).
Also, to simulate the testing scenario, we run the face detector on the training images to
provide an initial configuration of the landmarks ($\x_0$), which corresponds to an average shape (see Fig.~\ref{fig::init}b).

Eq.~\ref{eq::face-alignment} has several fundamental differences with previous work on PAMs (Eq.~\ref{eqn:PAM}). First, in Eq.~\ref{eq::face-alignment}, we do not learn any model of shape or appearance beforehand from training data. Instead, we align the image w.r.t. a template $\y_*$. For the shape, we optimize the landmark locations $\x \in \Re^{2p \times 1}$ directly.
Recall that in traditional PAMs, nonrigid motion is modeled as a linear combination
of shape bases learned by performing PCA on a training set. Our shape formulation is able to generalize better to untrained situations (\eg, asymmetric facial gestures). Second, we use HoG features extracted from patches around the landmarks to achieve a representation robust to illumination changes.

\begin{figure*}[t]
\centering
\setlength{\tabcolsep}{2pt}
\begin{tabular}{ccccc}
\includegraphics[width=0.19\linewidth]{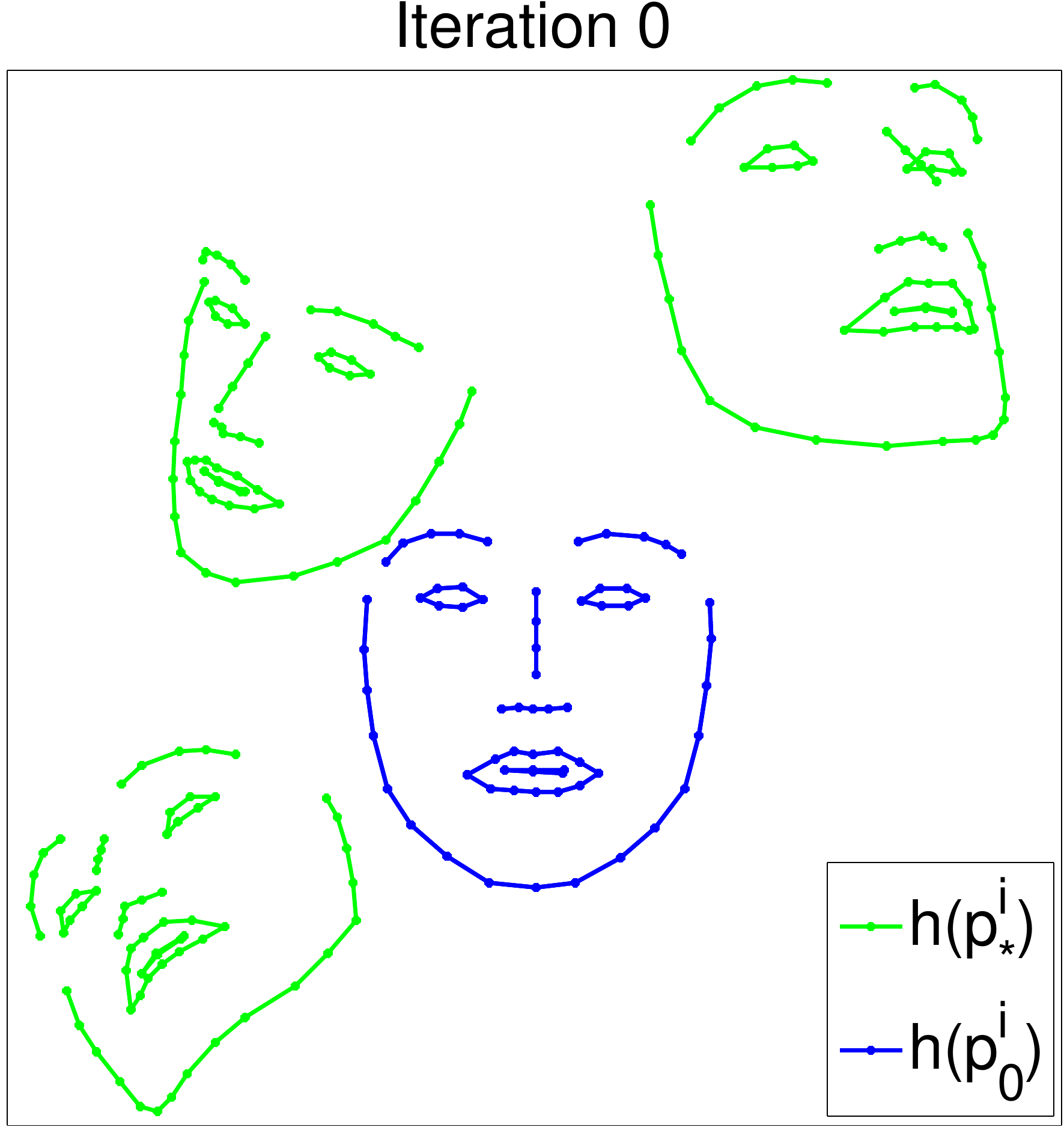}&
\includegraphics[width=0.19\linewidth]{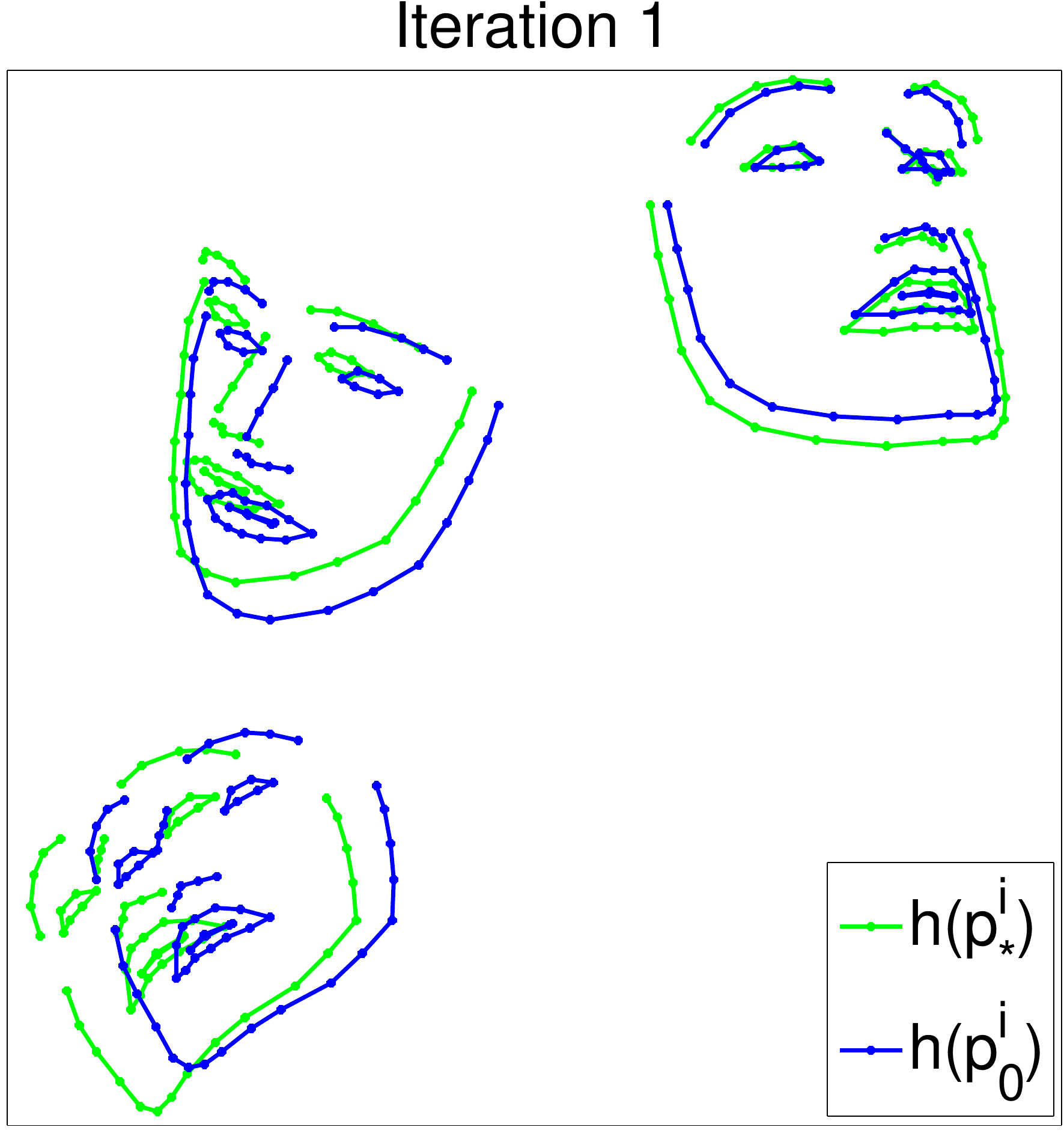}&
\includegraphics[width=0.19\linewidth]{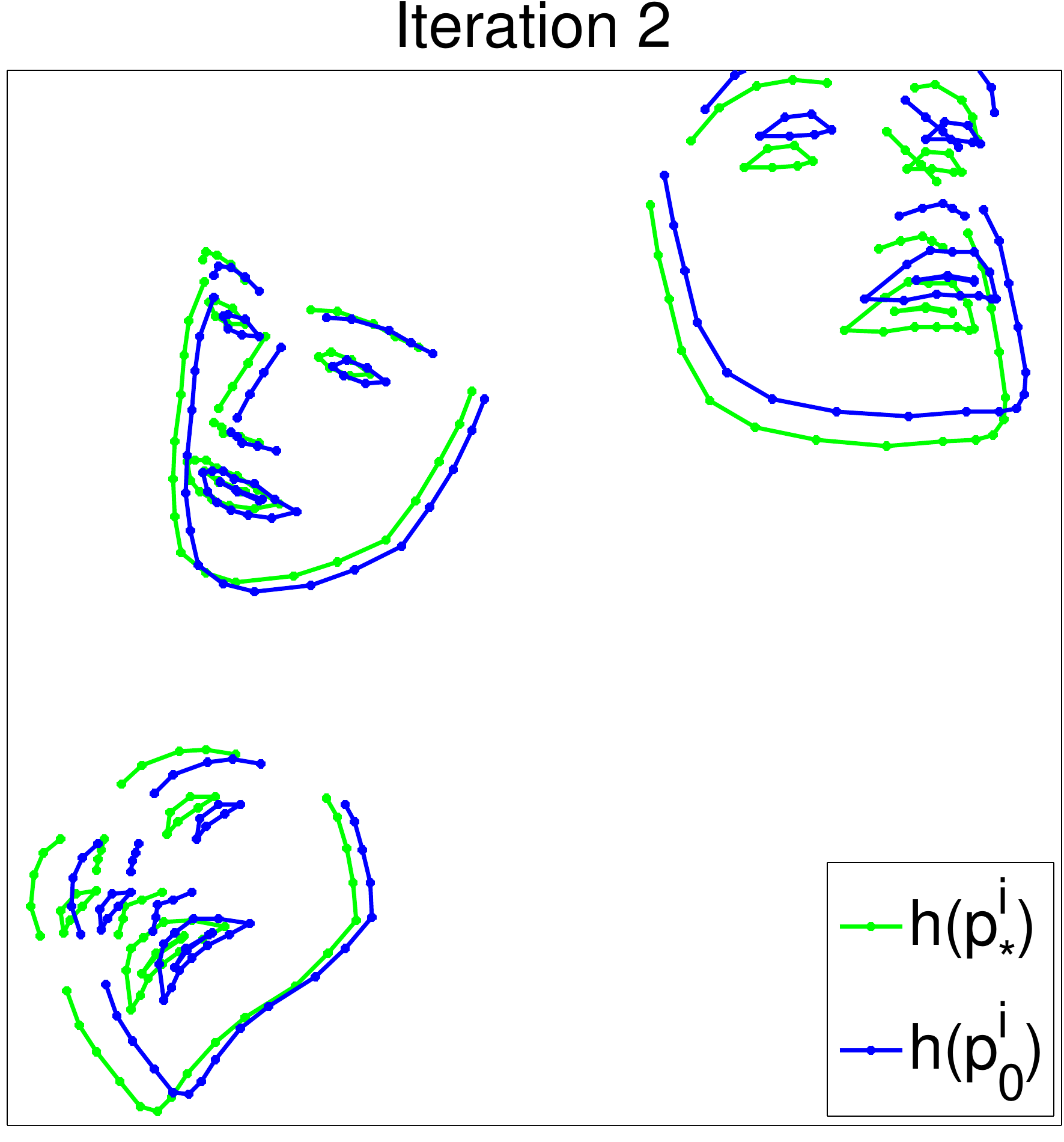}&
\includegraphics[width=0.19\linewidth]{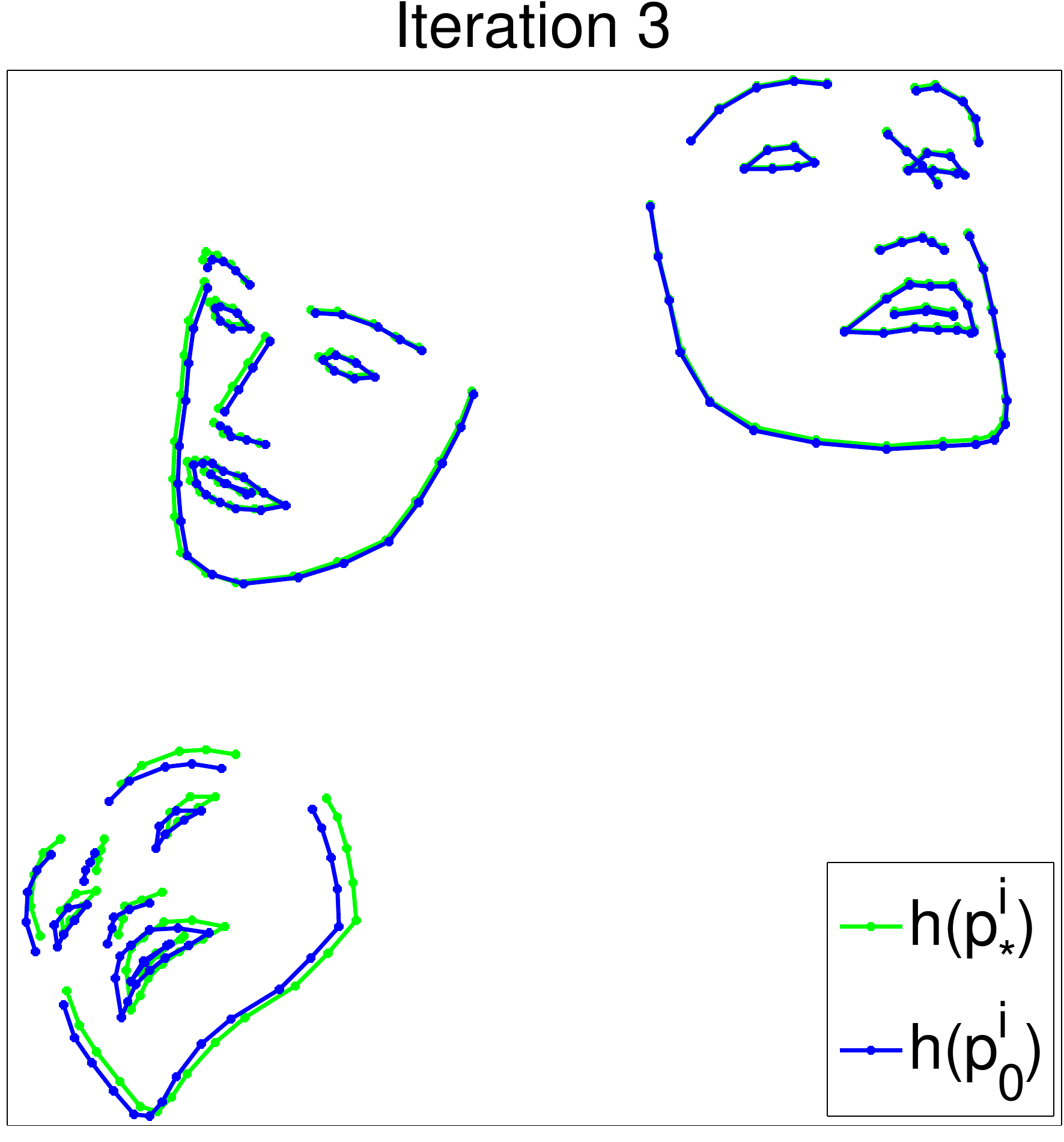}&
\includegraphics[width=0.19\linewidth]{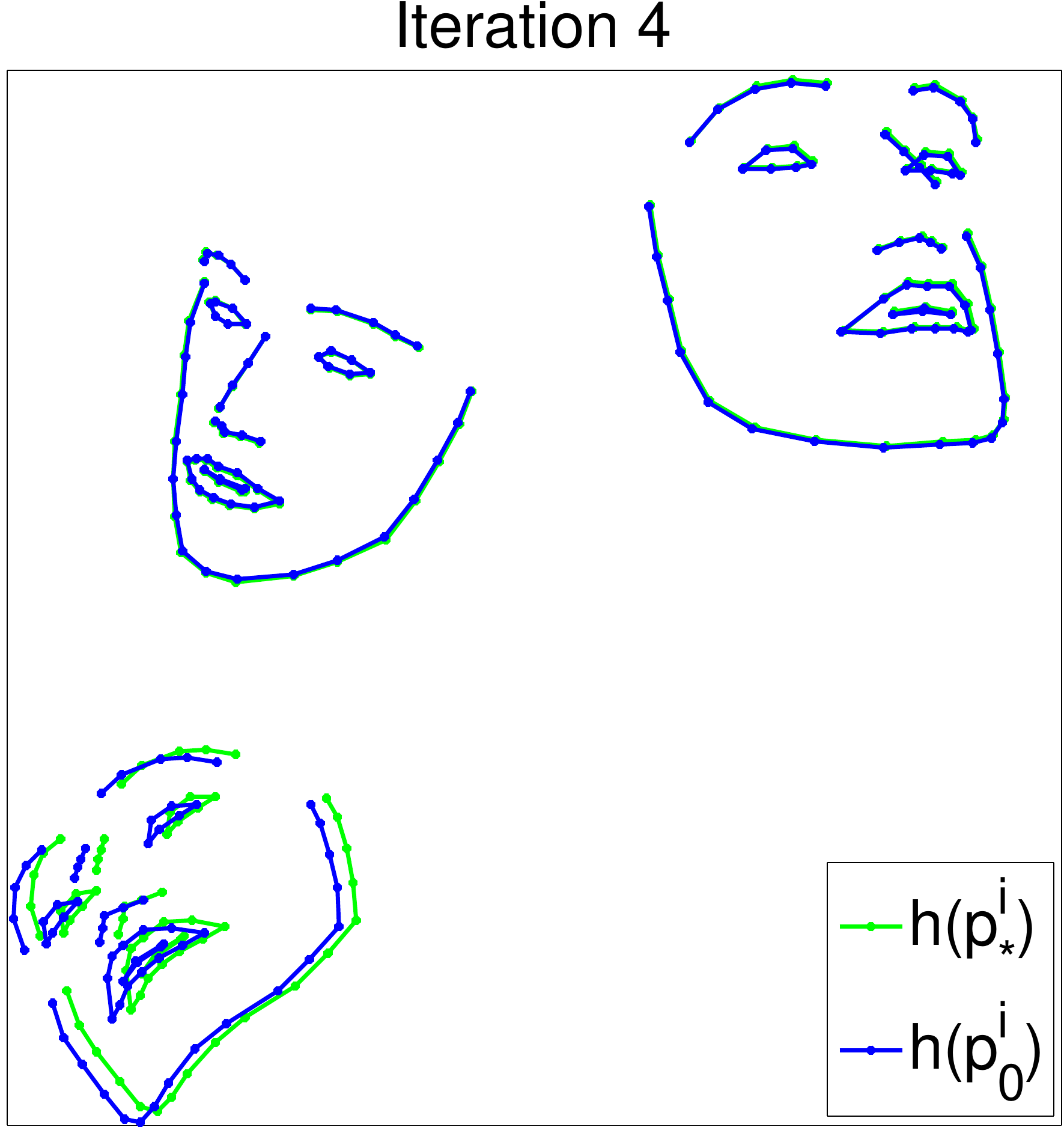}
\end{tabular}
\caption{Three examples of SDM minimizing the reprojection errors through each step. Blue outlines represent the image projections $\h(\p^i_k)$ under the current parameter estimates $\p^i_k$. Green outlines are the projections rendered under the ground truth parameters $\p_*^i$.}
\label{fig::pose}
\end{figure*}
{\bf Unknown $\y_*$}: In face alignment, the $\y_*$ used for testing is unknown and different from those used for training (\ie, the test subject is not one of the training subjects). Furthermore, the function $\h$ is parametrized not only by $\x$, but also by the images (\ie, different subjects or different conditions of subjects). Therefore, SDM learns an additional bias term $\b_k$ to represent the average of $\R_k\h^i(\x_*^i)$ during training. The training step modifies Eq.~\ref{eq::hd-update} to minimize the expected loss over all initializations and images, where the expected loss is given by
\begin{equation}
\label{eq::sdm-learning2}
\sum_{i,j}\|\x_* - \x^{i,j}_{k} + \R_{k}\h(\d^i(\x_{k}^{i,j}))-\b_k\|^2_2.
\end{equation}
We use $i$ to index images and $j$ to index initializations. The update of Eq.~\ref{eq::hd-update} is thus modified to be
\begin{equation}
\label{eq::hd-update2}
\x_k^{i,j} =  \x_{k-1}^{i,j} - \R_{k-1} \h(\d^i(\x_{k-1}^{i,j}))+\b_{k-1}.
\end{equation}
Despite the modification, minimizing Eq.~\ref{eq::hd-update2} is still a linear least squares problem. Note that we do not use $\y_*^i$ in training, although they are available. 
In testing, given an unseen image $\tilde{\d}$ and an initial guess of $\tilde{\x}_0$, $\tilde{\x}_k$ is updated recursively using Eq.~\ref{eq::hd-update2}. If $\tilde{\d}, \tilde{\x}_0$ are drawn from the same distribution that produces the training data, each iteration is guaranteed to decrease the expected loss between $\tilde{\x}_k$ and $\x_*$. 

\subsection{Online SDM}
\label{sec::online-sdm}
SDM may have poor performance on an unseen sample that is dramatically different from those in the training set. It would be desirable to incorporate this new sample into the existing model without re-training. This section describes such a procedure that updates an existing SDM model in an online fashion.

Assume that one is given a trained SDM model, represented by $\{\R_k,\b_k,\bSigma_k^{-1}\}$,
where $\bSigma_k = \bPhi_k \bPhi_k^\top$ and $\bPhi_k$ is the data matrix used
in training the $k^{th}$ descent map.
For a given new face image $\d$ and labeled landmarks $\x_*$, one can compute
the initial landmark perturbation $\Delta\x_0 = \x_*-\x_0$ and the feature
vector extracted at $\x_0$, $\bphi_0 = \h(\d(\x_0))$. Using the well known
recursive least squares algorithm~\cite{RLS}, SDM can be  re-trained by iterating the following three steps:
\begin{enumerate}
\item[1.] Update inverse covariance matrix $\bSigma_k^{-1}$
\begin{equation}
\label{eq::variance}
\bSigma_k^{-1} \leftarrow \bSigma_k^{-1}-\bSigma_k^{-1} \bphi_k (w^{-1}+\bphi_k^\top
\bSigma_k^{-1}\bphi_k)^{-1}\bphi_k^\top\bSigma_k^{-1}.
\end{equation}
\item[2.] Update the generic descent direction $\R_k$
\begin{equation*}
\R_k \leftarrow \R_k + (\Delta\x_k - \R_k\bphi_k) w \bphi_k^\top \bSigma_k^{-1}.
\end{equation*}
\item[3.] Generate a new sample pair $(\Delta\x_{k+1},\bphi_{k+1})$ for re-training in the next iteration
\begin{align*}
\Delta\x_{k+1} & \leftarrow  \Delta\x_k - \R_k \bphi_k\\
\bphi_{k+1} &  \leftarrow \h(\d(\x_*+\Delta\x_{k+1})).
\end{align*}
\end{enumerate}
Setting the weight to be $w=1$  treats every sample equally. For different applications,
one may want the model to emphasize the more recent samples. For example,
SDM with exponential forgetting can be implemented with a small modification of Eq. \ref{eq::variance}:
\begin{equation*}
\label{eq::variance-exp}
\bSigma_k^{-1} \leftarrow \lambda^{-1}[\bSigma_k^{-1}-\bSigma_k^{-1} \bphi_k (\lambda+\bphi_k^\top
\bSigma_k^{-1}\bphi_k)^{-1}\bphi_k^\top\bSigma_k^{-1}],
\end{equation*}
where $0<\lambda<1$ is a discount parameter. Assuming $n$ data points come in order,
the weight on the $i^{th}$ sample is $\lambda^{n-i}$.
Above, we do not explain the update formula for the bias term $\b_k$,
since it is often incorporated into $\R_k$ by augmenting the feature vector with $1$.
Note that in Eq. \ref{eq::variance}, the term in parentheses is a scalar.
Since no matrices need to be inverted, our re-training scheme is very efficient, consisting of only a few matrix multiplications and feature extractions.

\section{3D Pose Estimation}
\label{sec::pose}
In the two applications we have shown thus far, the optimization parameters lie in $\Re^n$ space. In this section, we will show how SDM can also be used to optimize parameters such as a rotation matrix, which belongs to the $SO(3)$ group. 
 
 The problem of 3D pose estimation can be described as follows. Given the 3D model of an object represented as $3D$ points $\M \in \Re^{3\times n}$, 
 its projection $\U \in \Re^{2\times n}$ on an image, and the  intrinsic camera parameters $\K \in \Re^{3\times 3}$, the goal is to estimate the object pose (3D rotation $\Q \in \Re^{3\times 3}$ and translation $\t\in \Re^{3\times 1}$).\footnote{$\Q$ is used for rotation matrix to avoid conflict with DM $\R_k$} This is also known as extrinsic camera parameter calibration.

\subsection{Previous Work}
Object pose estimation is a well-studied problem. If the model and image projection points are perfectly measured, this problem can be solved in closed-form by finding the perspective projection matrix~\cite{Roberts63,Sutherland74-IEEE}. The projection matrix maps the 3D model points to image points in homogeneous coordinates. Since it has 11 unknowns, at least six correspondences are required. However, these approaches are very fragile to noise. Lowe~\cite{Lowe91-PAMI} and Yuan~\cite{Yuan1989-ITRA} improved the robustness of the estimates by minimizing the reprojection error. Since the projection function is nonlinear, they used Newton-Raphson method to optimize it. However, both algorithms require good initial values to converge and for both algorithms, each iteration is an $O(n^3)$ operation (requiring the pseudo-inverse of the Jacobian). DeMenthon and Davis proposed an accurate and efficient POSIT algorithm~\cite{DeMenthon95} that iteratively finds object pose by assuming a scaled orthographic projection.

\subsection{A SDM Solution}

The 3D pose estimation problem can also be formulated as a constrained NLS problem that minimizes the reprojection error w.r.t. $\R$ and $\t$: 
\begin{equation*}
\begin{aligned}
\label{eq::3d-pose}
& \underset{\Q,\t}{\text{minimize}}
& & \|\h(\Q,\t,\M) - \U\|_F\\
& \text{subject to} & & \Q^\top\Q = \I_3 \ \text{and} \ \text{det}(\Q) = 1.
\end{aligned}
\end{equation*}
$\h = \g_2 \circ \g_1$ can be seen as composition of two functions $\g_1$ and $\g_2$, which can be written in closed-form as follows:
\begin{align*}
\g_1(\Q,\t,\X) & =  \K(\Q \X + \mathbf{1}_n^\top \otimes \t),\\
\g_2(\X) & =
\begin{bmatrix}
\x_1^\top \oslash \x_3^\top \\
\x_2^\top \oslash \x_3^\top
\end{bmatrix},
\end{align*}
where $\otimes$ represents the Kronecker product, $\oslash$ denotes element-wise division, and $\x_1^\top$ is the first row vector of $\X$.
We parameterize the rotation matrix as a function of the Euler-angles $\btheta$. Then, the objective function can be simplified into the following unconstrained optimization problem:
$$
\min_{\p} \|\h(\p,\M) - \U\|_F,
$$
where $\p=[\btheta;\t]$. We minimize the above function using reversed SDM introduced in Section \ref{sec::reverse-sdm}. For training SDM, we sample a set of poses $\{\p_*^i\}$ and compute the image projections $\{\U^i\}$ under each pose. Recall that the training of reversed SDM alternates between minimization of Eq.~\ref{eq::rsdm-learning} and updating of Eq.~\ref{eq::rsdm-update}. We rewrite these equations in the context of pose estimation:
\begin{equation}
\label{eq::pose-update}
\p_k^i =  \p_{k-1}^i - \R_{k-1} (\h(\p_{k-1}^i,\M)-\U^i),
\end{equation}
$$
\sum_{i}\|\p_*^i - \p^i_{k} + \R_{k} (\h(\p_{k}^i,\M)-\U^i)\|^2_2.
$$
In testing, given an unseen $\tilde{\U}$, SDM recursively applies the update given by Eq.~\ref{eq::pose-update}.

Fig.~\ref{fig::pose} shows three examples of how the reprojection errors are decreased through each SDM update when performing head pose estimation. In these cases, the SDM always starts at $\p_0$ (see iteration 0 in Fig. \ref{fig::pose}) and quickly converges to the optimal solutions. More results can be found in section \ref{sec::exp::pose} as well as a comparison with the POSIT algorithm.

\section{Experiments}

This section illustrates the benefits of SDM for solving NLS problems in a synthetic example and  three computer vision problems.
First, we illustrate SDM with four analytic functions.
Second, we show how we can use SDM to track planar objects that undergo an affine transformation in HoG space using an LK-type tracker.
Third, we demonstrate how SDM can achieve state-of-the-art facial feature detection results in two face-in-the-wild databases. Finally,
in the third experiment we illustrate how SDM can be applied to pose estimation and compare it with POSIT. 
\begin{figure}[t]
\centering
\includegraphics[width=0.9\linewidth]{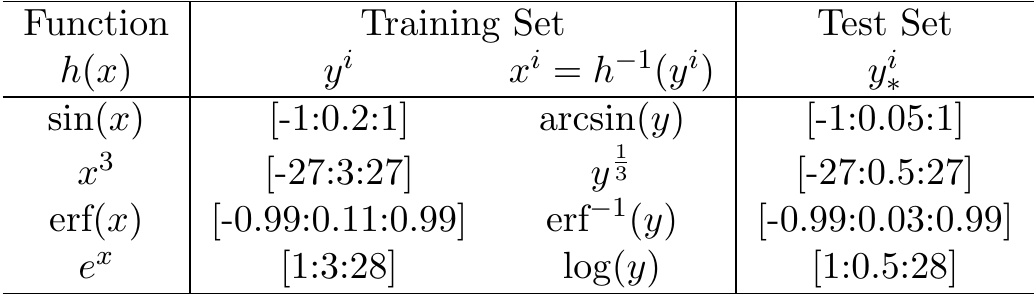}
\caption{Experimental setup for reversed SDM on analytic functions.
$\text{erf}(x)$ is the error function.}
\label{tab::exp::analytic-setup}
\end{figure}
\subsection{Analytic scalar functions}
\label{sec::exp::analytic}

This experiment compares the speed and accuracy performance of reversed SDM against Newton's method on four analytic functions.
The NLS problem that we optimize is:
$$
\min_x f(x) = (h(x)-y_*)^2,
$$
where $h(x)$ is a scalar function (see Fig.~\ref{tab::exp::analytic-setup}) and $y_*$ is a given constant.
Observe that the  $1^{st}$ and $2^{nd}$ derivatives of these functions can be derived analytically.
Assume that we have a fixed initialization $x_0 = c$ and that
we are given a set of training data $\x = \{x^i\}_{i=1}^n$ and $\{h(x^i)\}_{i=1}^n$.
We trained reversed SDM as explained in section~\ref{sec::reverse-sdm}.

The training and testing setup for each function are shown in Fig.~\ref{tab::exp::analytic-setup} using Matlab notation.
We have chosen only invertible functions. Otherwise, for a given $y_*$, multiple solutions may be obtained.
In the training data, the output variables $y$ are sampled uniformly in a local region of $h(x)$,
and their corresponding inputs $x$ are computed by evaluating $y$ at the inverse function of $h(x)$. The test data $\y_* = \{y_*^i\}$ is generated at a finer resolution than in training.

To measure the accuracy of both methods, we computed the normalized least square residuals $\frac{\|\x_k-\x_*\|}{\|\x_*\|}$  at the first 10 steps.
Fig.~\ref{fig::exp::analytic} shows the convergence comparison between  SDM and Newton's method.
Surprisingly, SDM converges with the same number of iterations as Newton's method, but each iteration is faster.
Moreover, SDM is more robust against bad initializations and ill-conditions ($f^{\prime\prime}<0$). For example,
when $h(x)=x^3$, the Newton's method starts from a saddle point and stays there for the following
iterations (observe that in the Fig.~\ref{fig::exp::analytic}b, Newton's method stays at 1).
In the case of $h(x)=e^x$, Newton's method diverges because it is ill-conditioned. Not surprisingly, when Newton's method converges, it provides more accurate estimation than SDM because SDM uses a generic descent map.  If $f$ is quadratic (\eg, $h$ is linear function of $x$), SDM will converge in one iteration because the
average gradient evaluated at different locations will be the same for linear functions.
This coincides with a well-known fact that Newton's method converges in one iteration
for quadratic functions.
\begin{figure}[t]
\centering
\begin{tabular}{cc}
\includegraphics[width=0.48\linewidth]{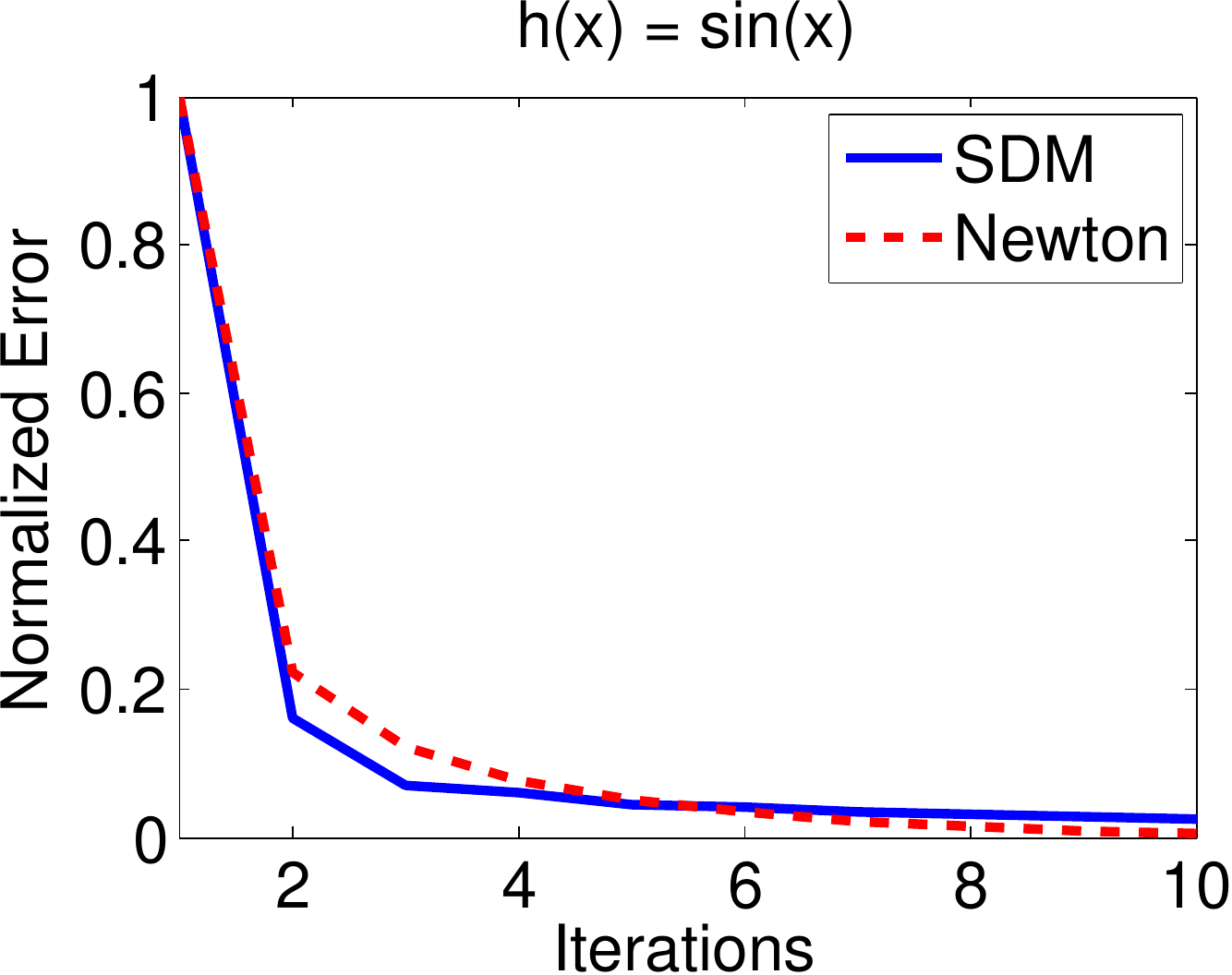}&
\includegraphics[width=0.48\linewidth]{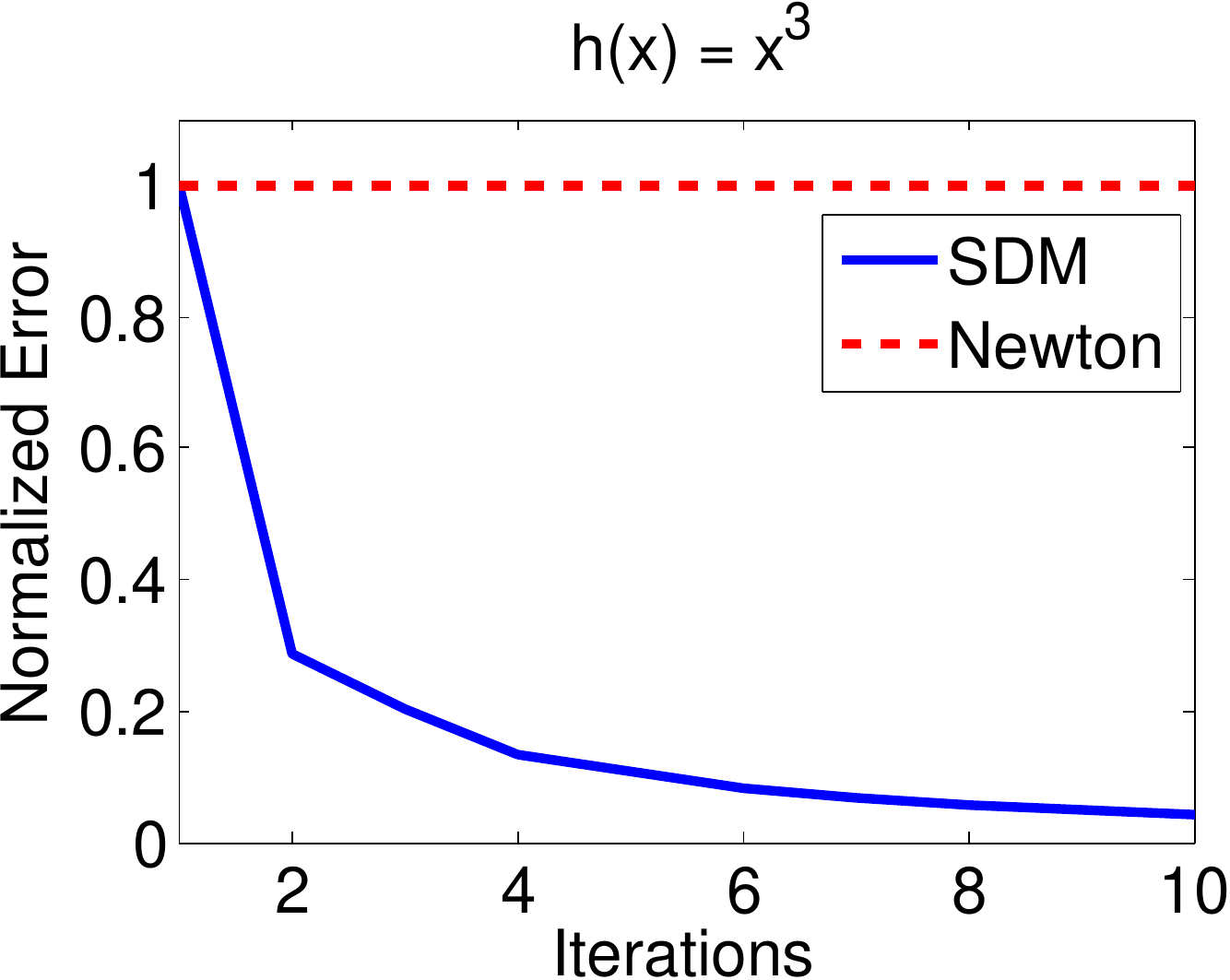}
\\[-2.5pt]
(a) & (b)\\
\includegraphics[width=0.48\linewidth]{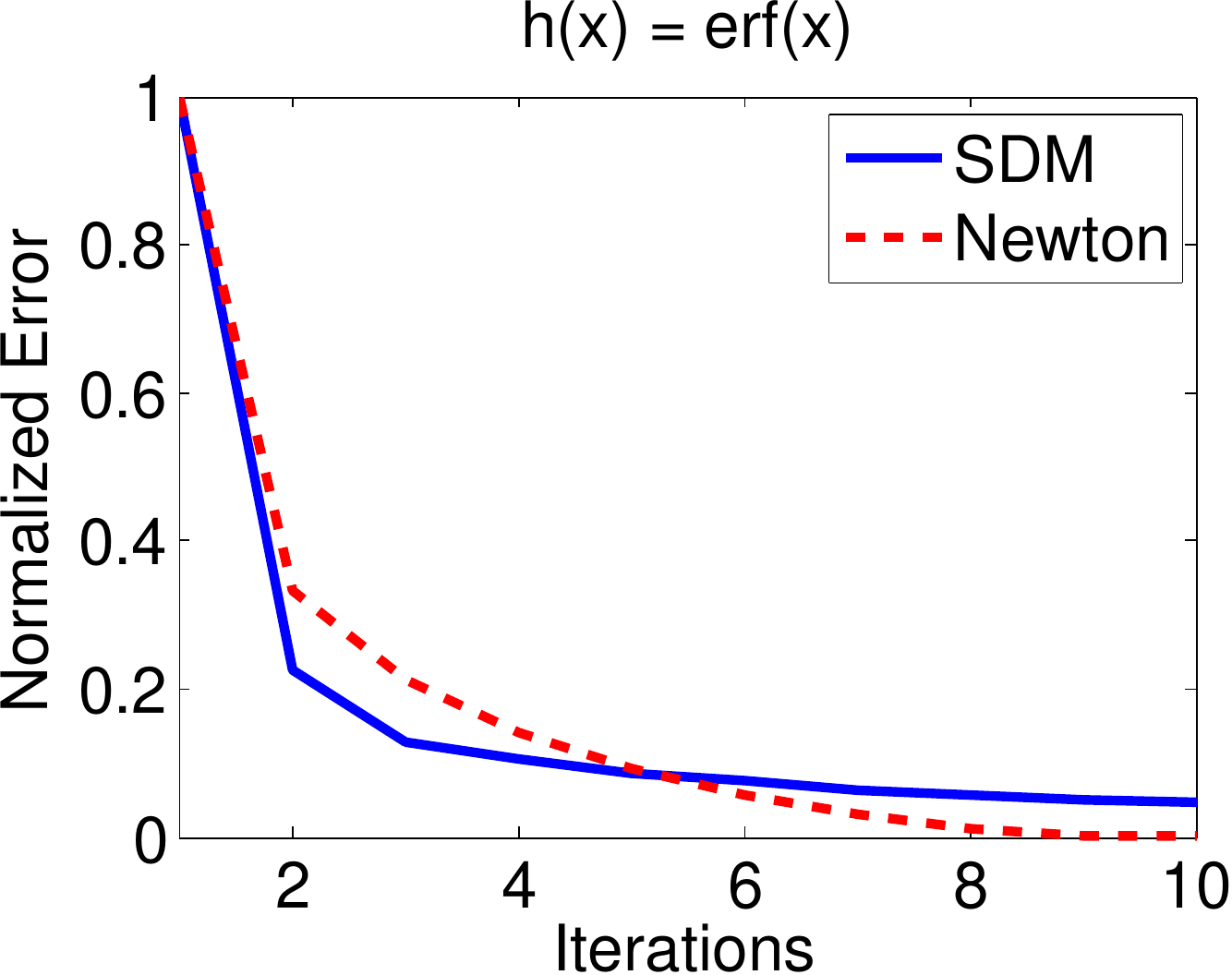}&
\includegraphics[width=0.48\linewidth]{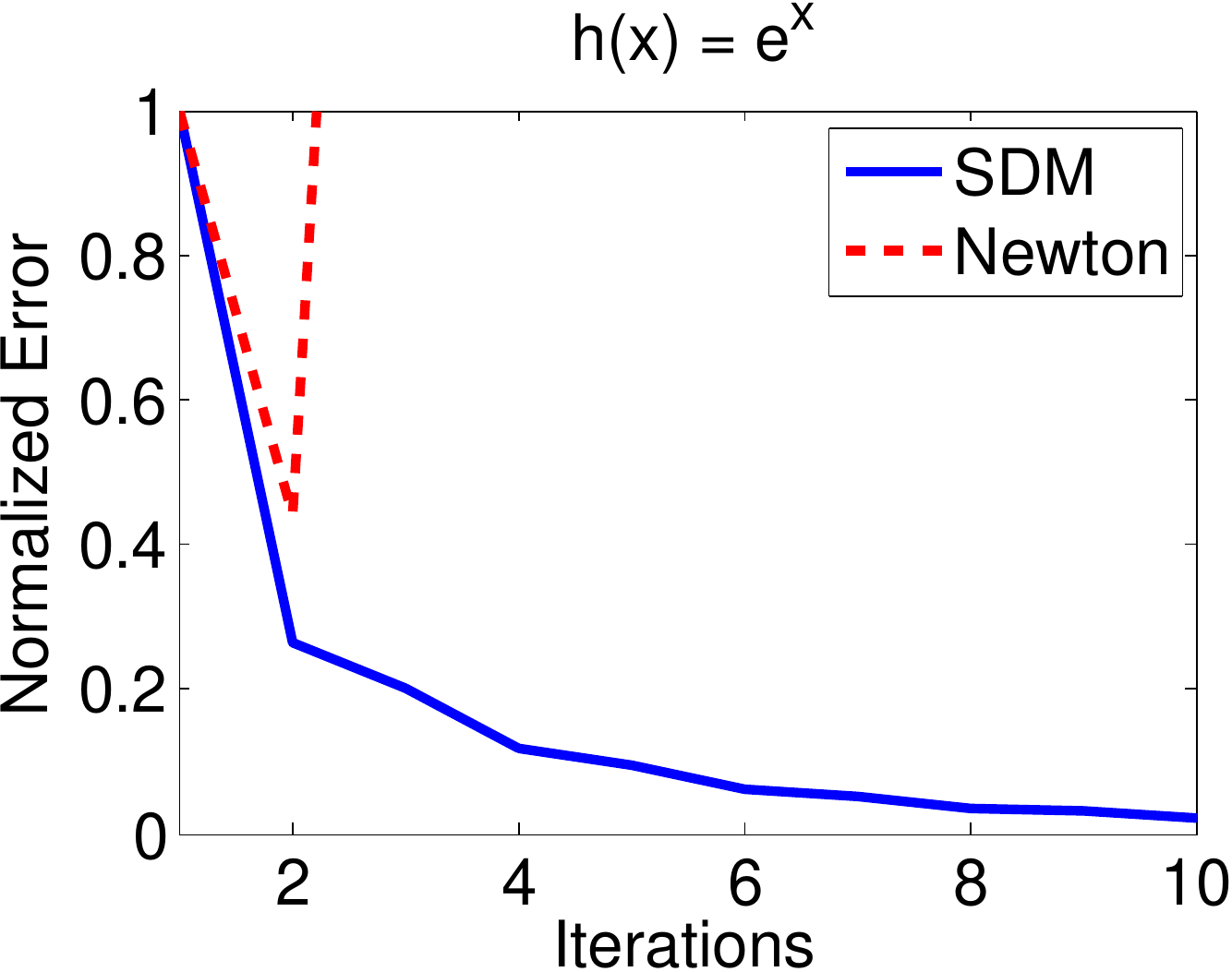}
\\[-2.5pt]
(c) & (d)
\end{tabular}
\vspace{-10pt}
\caption{ Normalized error versus iterations on four analytic functions using Newton's method and SDM.}
\label{fig::exp::analytic}
\end{figure}

\subsection{Rigid tracking}
\label{sec::exp::rigid-tracking}

This section presents the tracking results comparing LK and SDM using an affine transformation.
We used a publicly available implementation of the LK method~\cite{Baker04-ijcv} for tracking a single template. The experiments are conducted on a public dataset\footnote{\url{http://ilab.cs.ucsb.edu/tracking_dataset_ijcv/}} published by~\cite{Gauglitz_IJCV2011}. The dataset features six different planar textures: mansion, sunset, Paris, wood, building, and bricks. Each texture includes $16$ videos, each of which corresponds to a different camera motion path or changes illumination condition. In our experiment, we chose five of the $16$ motions, giving us a total of $30$ videos. The five motions correspond to translation, dynamic lighting, in-plane rotation, out-plane rotation, and scaling. 

Both trackers (SDM and LK) used the same template, which was extracted at a local region on the texture in the first frame. In our implementation of SDM, the motion parameters were sampled from an isotropic Gaussian distribution with zero mean. The standard deviations were set to be
$[0.05, 0.05, 0.05, 0.05, 8, 8]^\top.$ We used 300 samples and four iterations to train SDM.
The tracker was considered lost if there was more than $30\%$ difference between the template and the back-warp image. Note that this difference was computed in HoG space.

Table \ref{tab::exp::rigid-tracking} shows the number of frames successfully tracked by the LK tracker and SDM. SDM performs better than or as well as the LK tracker in $28$ out of the $30$ sequences. We observe that SDM performs much better than LK in translation. One possible explanation is that HoG features are more robust to motion blur. Not surprisingly, SDM performs perfectly in the presence of dynamic lighting because HoG is robust to illumination changes.
In-plane rotation tends to be the most challenging motion for SDM, but even in this case, it is very similar to LK.
\begin{figure}[t]
\label{tab::exp::rigid-tracking}
\centering
\includegraphics[width=1.\linewidth]{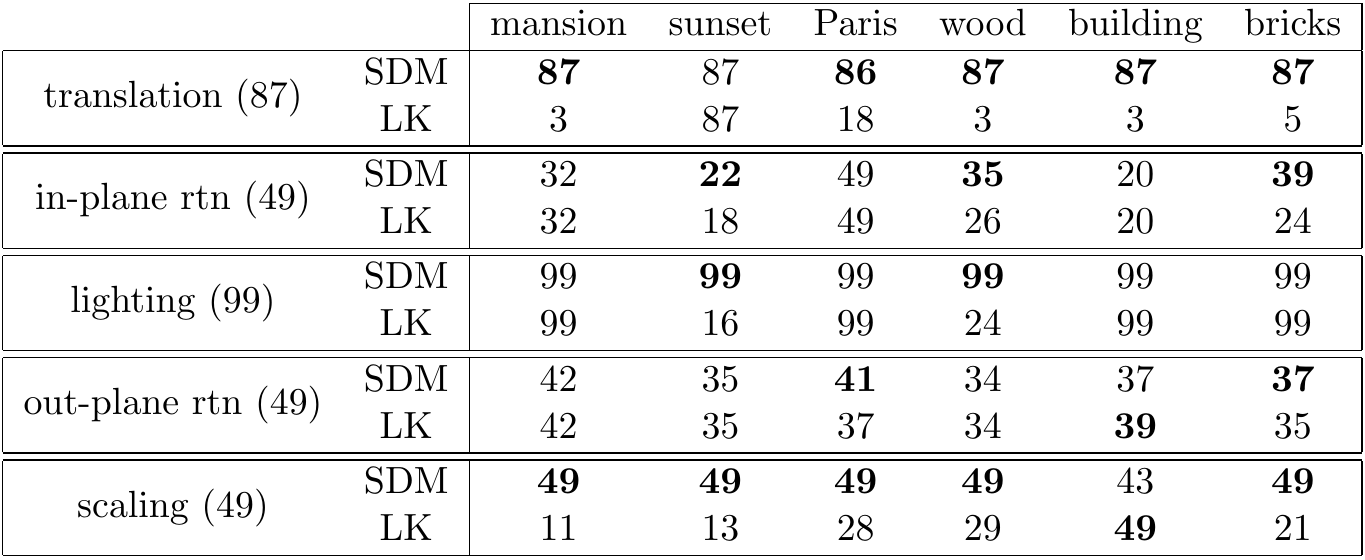}
\caption{Comparison between SDM and LK on rigid tracking experiments. Each entry in the table states the number of frames successfully tracked by each algorithm. The total number of frames is given by number in the parentheses from the first column.}
\end{figure}

\subsection{ Facial feature detection}
\label{sec::exp::detection}
This section reports experiments on facial feature detection in two ``face-in-the-wild" datasets, and compares SDM with state-of-the-art methods.
The two face databases are the LFPW dataset\footnote{\url{http://www.kbvt.com/LFPW/}}~\cite{Belhumeur11} and
the LFW-A\&C dataset~\cite{Saragih-CVPR11}.


The experimental setup is as follows. First, the face is detected using the OpenCV face detector~\cite{opencv_library}.
The evaluation is performed on those images in which a face can be detected.
The face detection rates are 96.7\% on LFPW and 98.7\% on LFW-A\&C, respectively.
The initial shape estimate is given by centering the mean face at the normalized square. The translational and scaling differences between the initial and true landmark locations are also computed, and their means and variances are used for generating Monte Carlo samples in Eq.~\ref{eq::monte-carlo}. We generated 10 perturbed samples for each training image. HoG descriptors are computed on $32\times32$ local patches around each landmark. To reduce the dimensionality of the data, we performed PCA, preserving $98\%$ of the energy on the image features.


{\bf LFPW} dataset contains images downloaded from the web that exhibit
large variations in pose, illumination, and facial expression.
Unfortunately, only image URLs are given and some are no longer valid.
We downloaded 884 of the 1132 training images and 245 of the 300
test images. We followed the evaluation metric used in~\cite{Belhumeur11}, where
the error is measured as the average Euclidean distance between
the 29 labeled and predicted landmarks. The error is then normalized by the
inter-ocular distance.

We compared our approach with two recently proposed methods~\cite{Belhumeur11,Cao12}.
Fig.~\ref{fig::cde} shows the Cumulative Error Distribution (CED) curves of SDM,
Belhumeur \etal~\cite{Belhumeur11}, and our method trained with only one linear regression. Note that SDM
is different from the AAM trained in a discriminative manner with linear regression~\cite{Cootes-et-al-PAMI01} because we do not learn a shape or appearance model.
Note that such curves are computed from $17$ of the $29$ points defined in~\cite{Cristinacce08}, following the convention used in~\cite{Belhumeur11}. Clearly, SDM outperforms~\cite{Belhumeur11} and linear regression. It is also important to notice that a
completely fair comparison is not possible since~\cite{Belhumeur11} was trained
and tested with some images that were no longer available. However, the
average is in favor of our method. The recently proposed method in~\cite{Cao12} is based on boosted regression with pose-indexed features.
To the best of our knowledge this paper reported the state-of-the-art results on LFPW dataset.
In~\cite{Cao12}, no CED curve was given and they reported a mean error ($\times 10^{-2}$) of 3.43.
SDM shows comparable performance with a average of $3.47$.

The first two rows of Fig.~\ref{fig::lfpw} show our results on faces with large variations
in poses and illumination as well as ones that are partially occluded.
The last row displays the {\em worst} 10 results measured by the normalized mean error.
Most errors were caused by the gradient feature's inability
to distinguish between similar facial parts and occluding objects
(\eg, glasses frame and eye brows).

{\bf LFW-A\&C} is a subset of the LFW dataset\footnote{\url{http://vis-www.cs.umass.edu/lfw/}},
consisting of 1116 images of people whose names begin with an 'A' or 'C'.
Each image is annotated with the same 66 landmarks shown in Fig. \ref{fig::init}.
We compared our method with the Principle Regression Analysis (PRA) method~\cite{Saragih-CVPR11},
which proposes a sample-specific prior to constrain the regression output.
This method achieves the state-of-the-art results on this dataset.
Following~\cite{Saragih-CVPR11}, those whose name started with `A' were used for training, giving
us a total of 604 images. The remaining images were used for testing.
Root mean squared error (RMSE) was used to measure the alignment accuracy.
Each image has a fixed size of $250\times250$ and the error was not normalized.
PRA reported a median alignment error of 2.8 on the test set while ours averages 2.7.
The comparison of CED curves can be found in Fig.~\ref{fig::cde}b and our method
outperforms both PRA and Linear Regression.
Qualitative results from SDM on the more challenging samples are plotted in
Fig.~\ref{fig::lfw}.

\begin{figure}[t]
\centering
\addtolength{\tabcolsep}{-2pt}
\begin{tabular}{cc}
\includegraphics[width=0.48\linewidth]{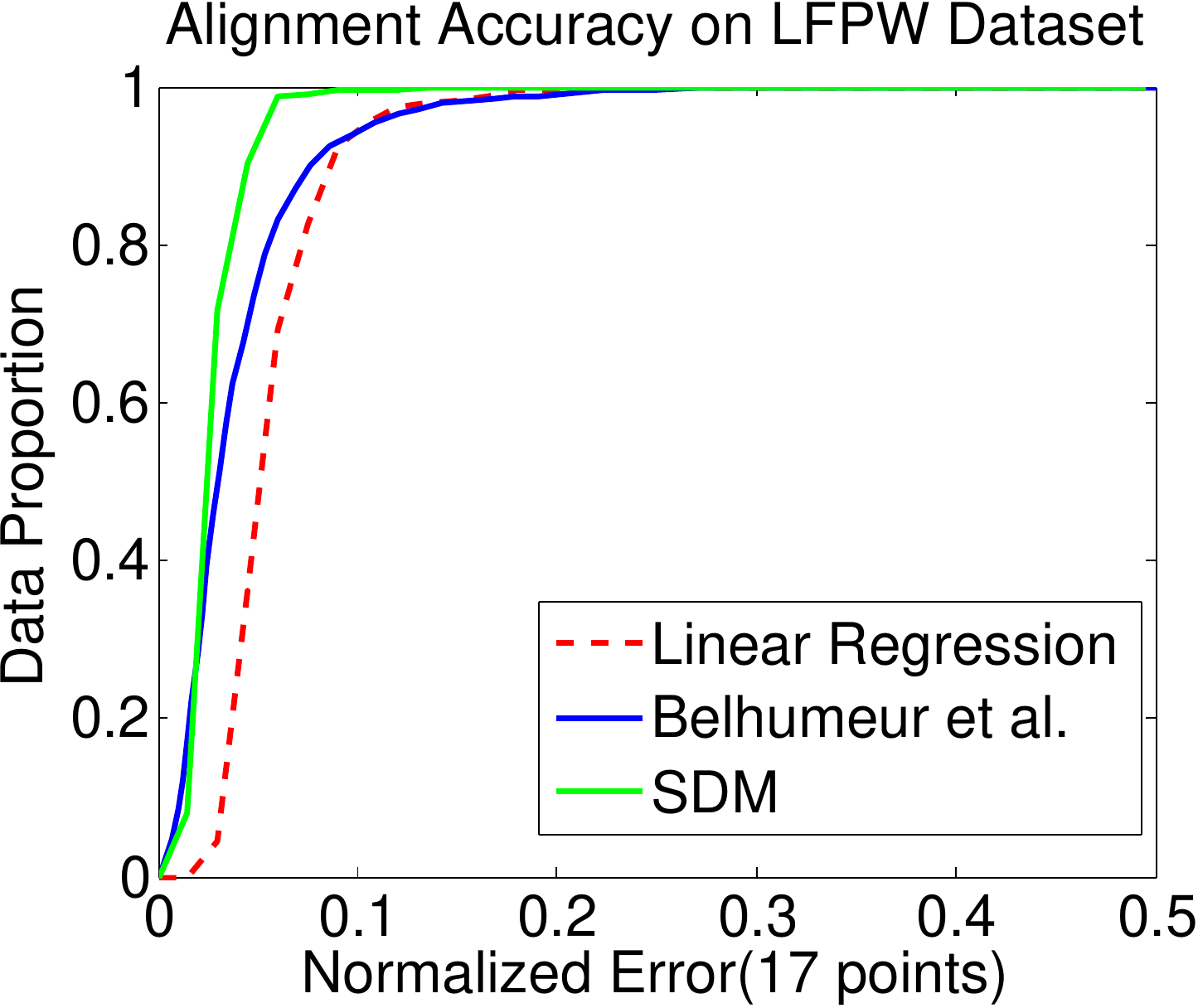}&
\includegraphics[width=0.48\linewidth]{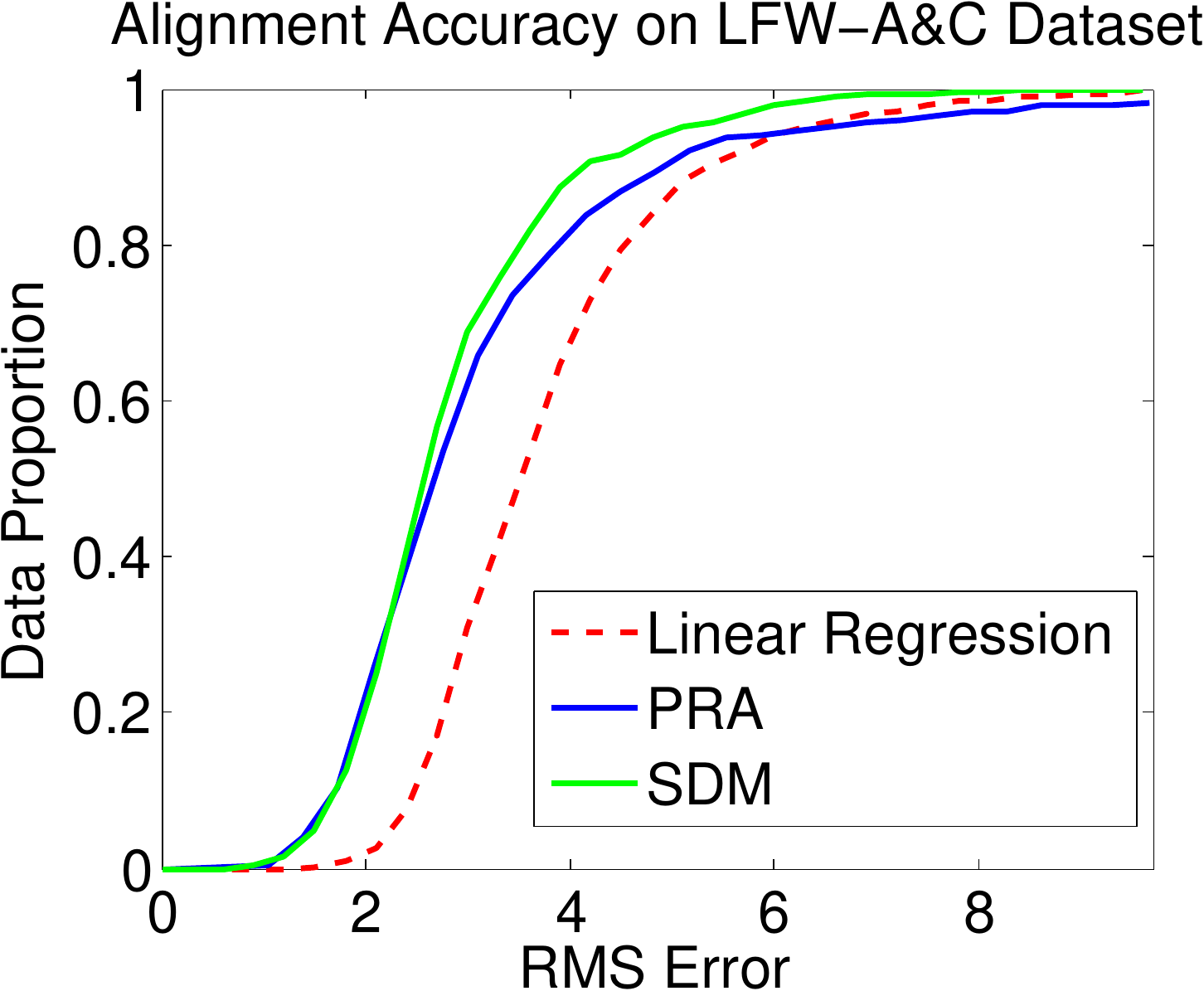}\\[-2.5pt]
(a)&(b)
\end{tabular}
\caption{CED curves from LFPW and LFW-A\&C datasets.}
\label{fig::cde}
\end{figure}
\subsection{Facial feature tracking}

This section tested the use of SDM for facial feature tracking. The main idea is to use
SDM for detection in each frame, but initialize the frame with the landmark estimates of the previous frame.

We trained our model with $66$ landmarks on the MPIE~\cite{Gross-et-al-MPIE07} and LFW-A\&C datasets. The standard deviations of the scaling and translational perturbation were set to 0.05 and 10, respectively. This indicates that in two consecutive frames, the probability of a tracked face shifting more than 20 pixels or scaling more than $10\%$ is less than $5\%$. We evaluated SDM's tracking performance on two datasets: RU-FACS~\cite{nsf_database} and  Youtube Celebrities~\cite{Kim08}.

{\bf RU-FACS} dataset consists of $29$ sequences of different subjects recorded
in a constrained environment. Each sequence has an average of $6300$ frames.
The dataset is labeled with the same 66 landmarks of our trained model except for
the 17 jaw points, which are defined slightly differently (See Fig.~\ref{fig::ru-facs-acc}b). We used the remaining $49$ landmarks for evaluation.
The ground truth was given by person-specific AAMs~\cite{Matthews-Baker-IJCV04}.
For each of the $29$ sequences, the average RMS error and standard deviation
are plotted in red in Fig. \ref{fig::ru-facs-acc}a. The blue lines were generated by a new model re-trained with the first image of each of the 29 sequences following the online SDM algorithm introduced in section~\ref{sec::online-sdm}. The tracking results improved for {\em all} sequences.
To make sense of the numerical results, in Fig.~\ref{fig::ru-facs-acc}b, we also showed one tracking result overlaid with ground truth and in this example, it gives us a RMS error of $5.03$.
There are no obvious differences between the two labelings.
Also, the person-specific AAM gives unreliable results when the subject's face is partially occluded, while SDM still provides a robust estimation (See Fig.~\ref{fig::ru-facs}). In the $170,787$ frames of the RU-FACS videos, the SDM tracker {\em never} lost track, even in cases of partial occlusion.

\begin{figure}[t]
\centering
\addtolength{\tabcolsep}{-2pt}
\begin{tabular}{cc}
\includegraphics[width=0.6\linewidth]{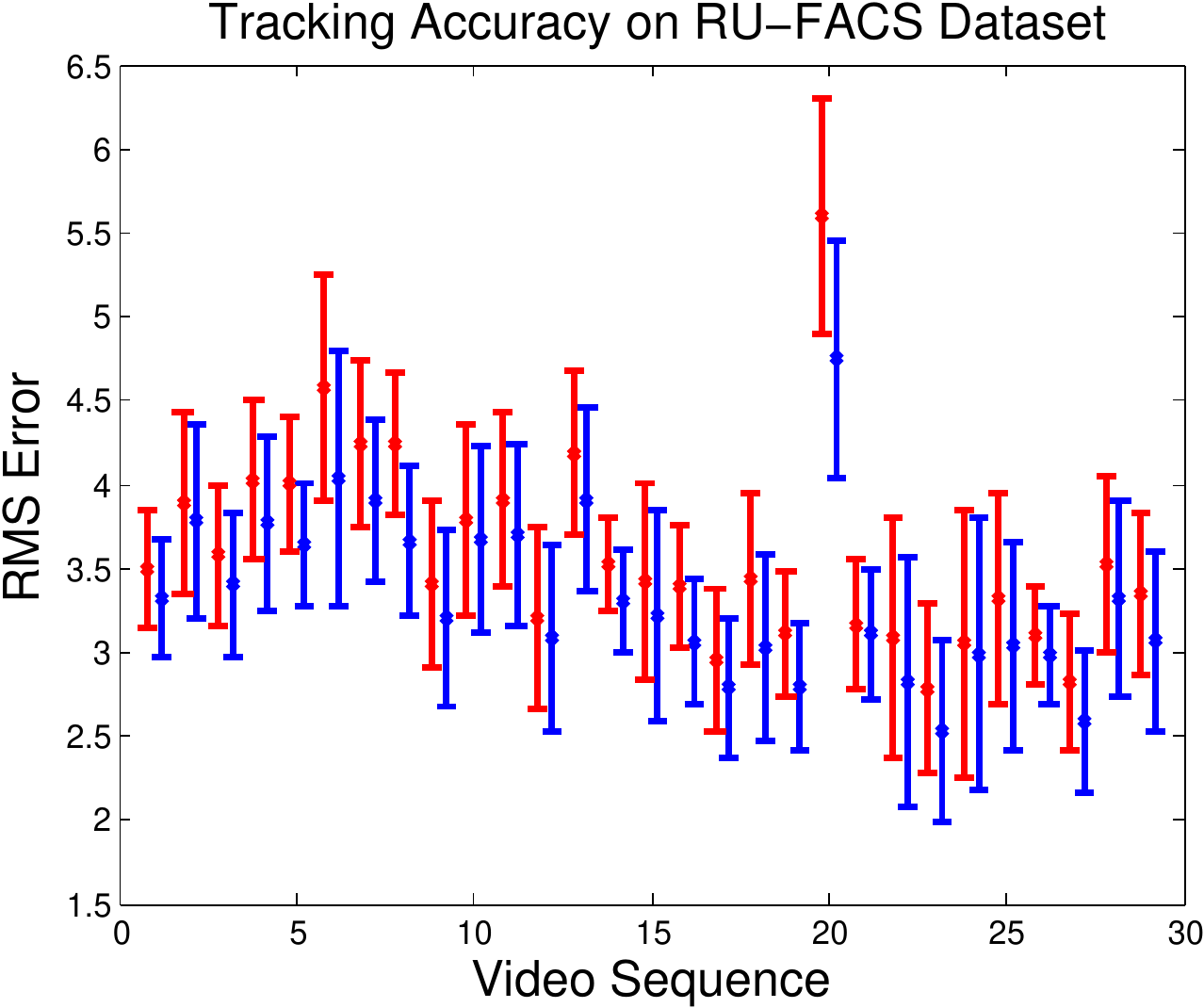}&
\raisebox{6.5pt}{\includegraphics[width=0.35\linewidth]{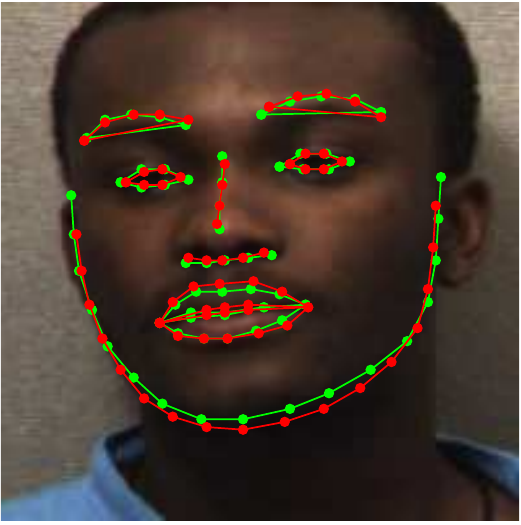}}\\[-2.5pt]
(a)&(b)
\end{tabular}
\caption{a) Average RMS errors and standard deviations on 29 video sequences in RU-FACS dataset. Red: orignal model. Blue: re-trained model.
b) RMS error between the SDM detection (green) and ground truth (red) is $5.03$.}
\label{fig::ru-facs-acc}
\end{figure}

{\bf Youtube Celebrities} is a public ``in-the-wild''
dataset\footnote{\url{www.seqam.rutgers.edu/site/media/data_files/ytcelebrity.tar}}
that is composed of videos of celebrities during an interview or on a TV show.
It contains $1910$ sequences of $47$ subjects, but most of them are shorter than 3 seconds.
It was released as a dataset for face tracking and recognition, thus no labeled facial
landmarks are given. See Fig.~\ref{fig::youtube} for example tracking results from this dataset. Tracked video sequences can be found below\footnote{\url{http://www.youtube.com/user/xiong828/videos}}.
From the videos, we can see that SDM is able to reliably track facial landmarks
with large pose variation ($\pm45^\circ$ yaw, $\pm90^\circ$ roll and, $\pm30^\circ$ pitch), occlusion, and illumination changes. All results are generated without re-initialization.

Both Matlab and C++ implementations of facial feature detection and tracking can be found at the link below\footnote{\url{http://www.humansensing.cs.cmu.edu/intraface}}. The C++ implementation averages around 11ms per frame, tested with an Intel i7 3752M processor. On a mobile platform (iPhone 5s), the tracker runs at 35fps.

\subsection{3D pose estimation}
\label{sec::exp::pose}

This section reports the experimental results on 3D pose estimation using SDM and a comparison with the widely popular POSIT method~\cite{DeMenthon95}.

\begin{figure}[t]
\centering
\begin{tabular}{cc}
\includegraphics[width=0.35\linewidth]{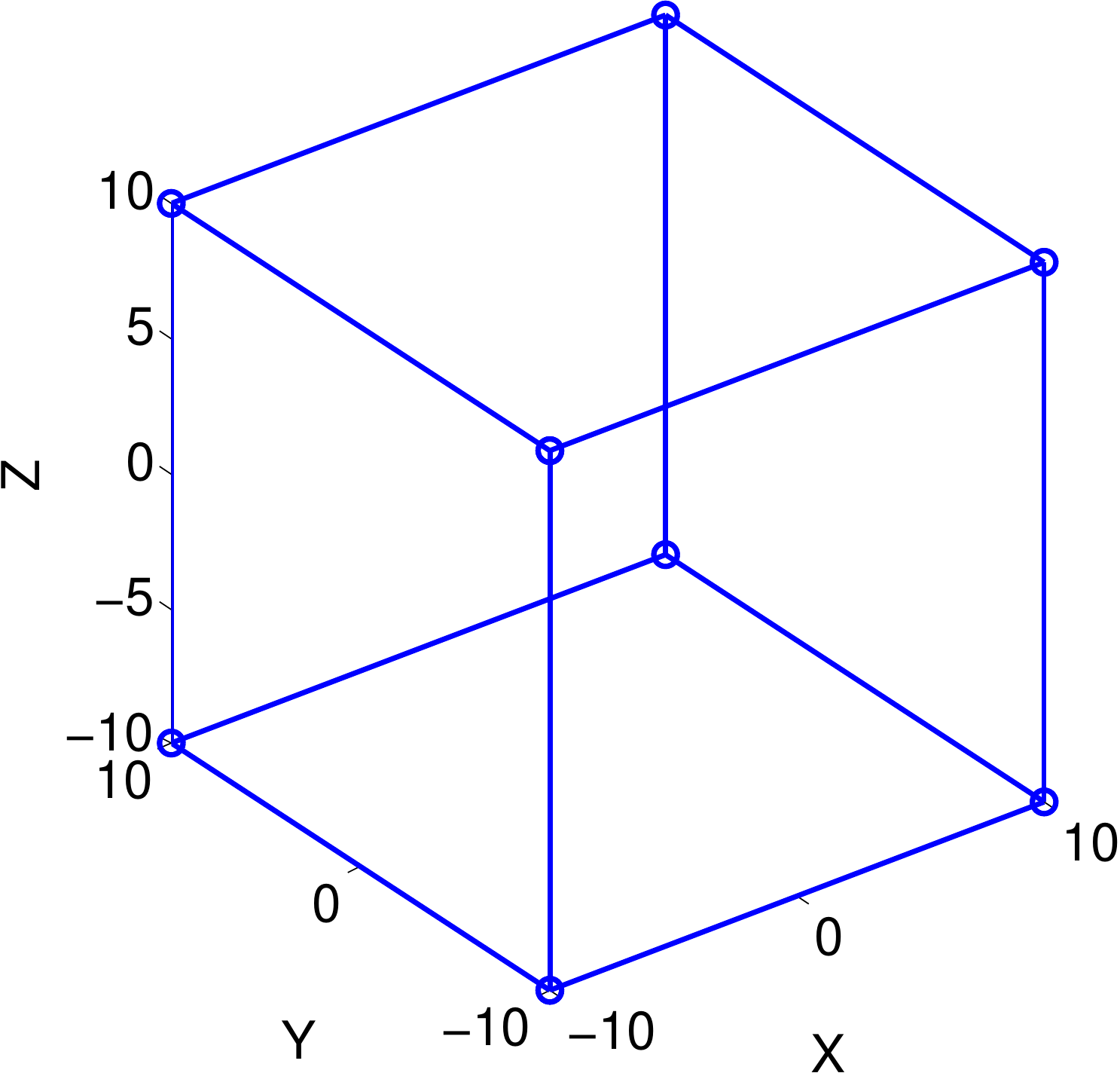}&
\includegraphics[width=0.35\linewidth]{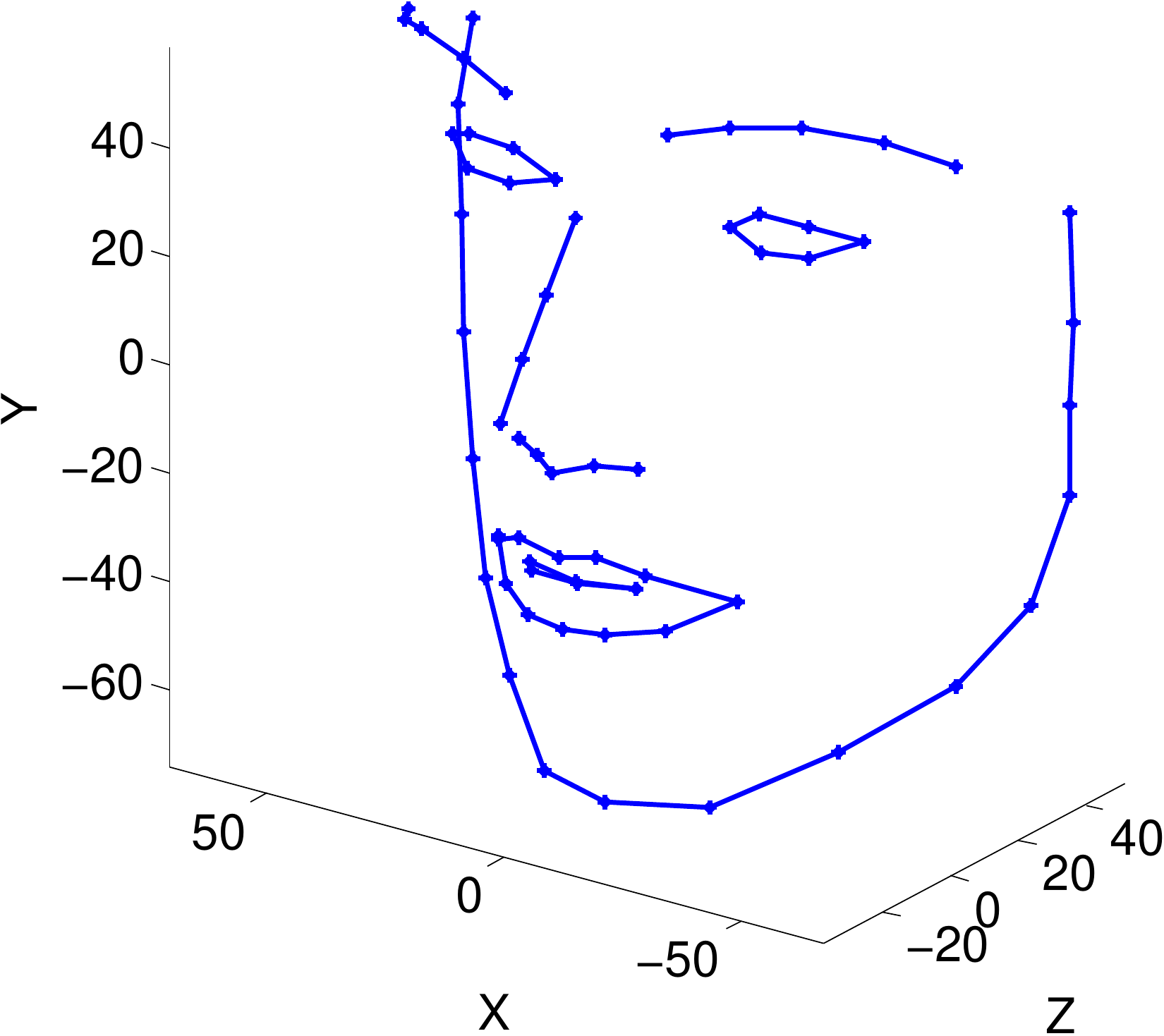}\\
\multicolumn{2}{c}{
\includegraphics[width=0.65\linewidth]{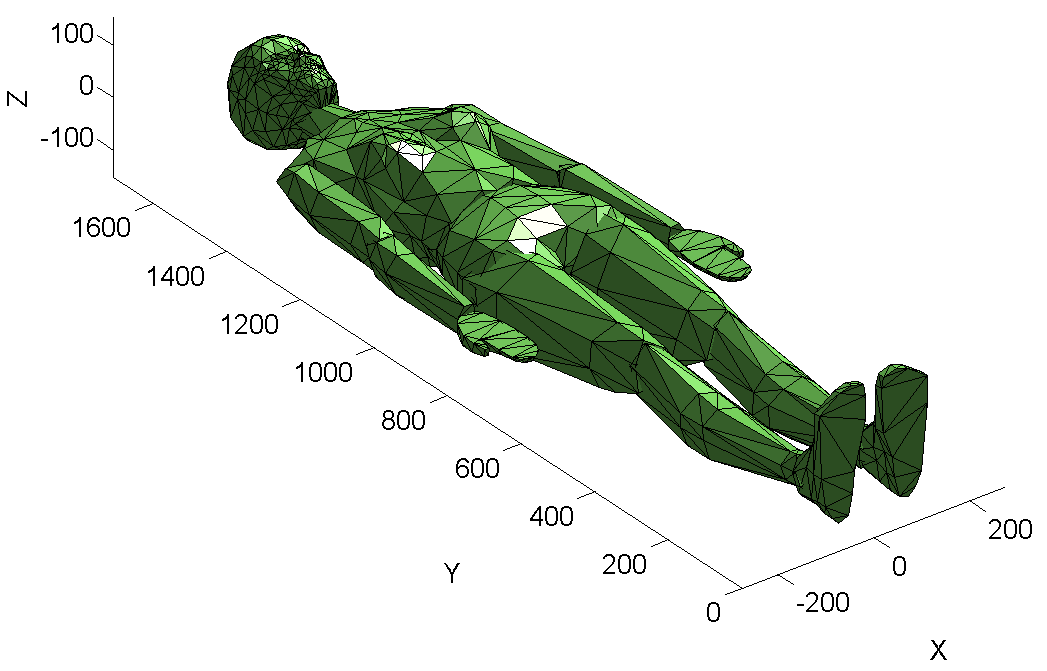}}
\end{tabular}
\caption{3D objects used in pose estimation experiments. Units are in millimeters (mm).}
\label{fig::exp::pose-objects}
\end{figure}
The experiment is set up as follows. We selected three different meshes of $3$D objects: a cube, a face, and a human body\footnote{\url{www.robots.ox.ac.uk/~wmayol/3D/nancy_matlab.html}} (see Fig. \ref{fig::exp::pose-objects}).
We placed a virtual camera at the origin of the world coordinates. In this experiment, we set the focal length (in terms of pixels) to be $f_x = f_y = 1000$ and principle point to be $[u_0, v_0] = [500, 500]$. The skew coefficient was set to be zero. The training and testing data were generated by placing a 3D object at $[0,0,2000]$, perturbed with different 3D translations and rotations. The POSIT algorithm does not require labeled data. Three rotation angles were uniformly sampled from $-30^\circ$ to $30^\circ$ with increments of $10^\circ$ in training and  $7^\circ$ in testing. Three translation values were uniformly sampled from -400mm to 400mm with increments of 200mm in training and 170mm in testing. Then, for each combination of the six values, we computed the object's image projection using the above virtual camera and used it as the input for both algorithms. White noise ($\sigma^2 = 4$) was added to the projected points.  In our implementation of SDM, to ensure numerical stability, the image coordinates $[u,v]$ of the projection were normalized as follows:
$
\begin{bmatrix}
\hat{u}\\
\hat{v}
\end{bmatrix}
=
\begin{bmatrix}
(u-u_0)/f_x\\
(v-v_0)/f_y
\end{bmatrix}.
$

Fig.~\ref{fig::exp::pose}a shows the mean errors and standard deviations of the estimated rotations (in degree) and translations (in mm) for both algorithms. Both algorithms achieved around $1^\circ$ accuracy for rotation estimation, but SDM was more accurate for translation. This is because POSIT assumes a scaled orthographic projection, while the true image points are generated by a perspective projection. Fig.~\ref{fig::exp::pose}b states the computational time in milliseconds (ms) for both methods. Both algorithms are efficient, although POSIT is slightly faster than SDM.

\begin{figure*}[t]
\centering
\addtolength{\tabcolsep}{-3pt}
\begin{tabular}{cc}
\includegraphics[width=0.70\linewidth]{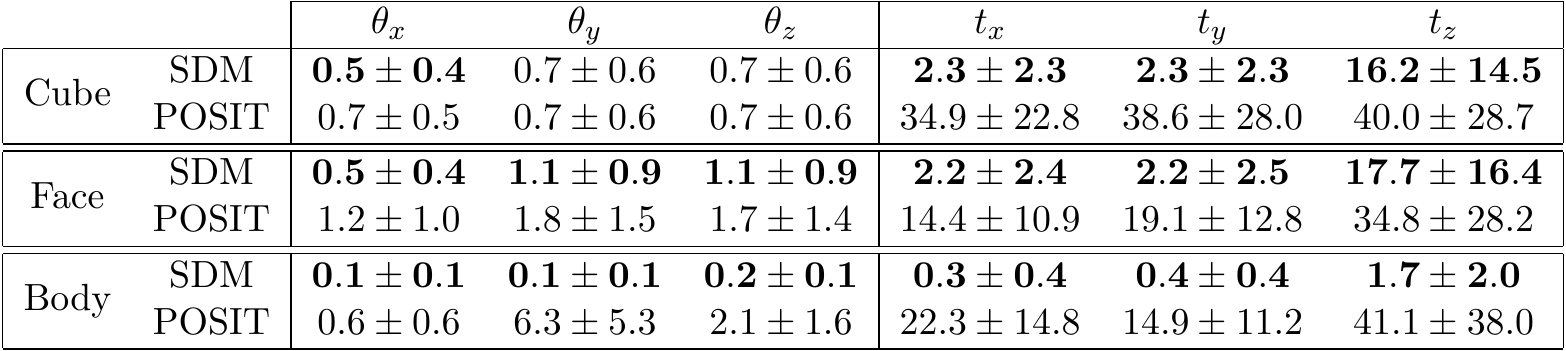}&
\includegraphics[width=0.28\linewidth]{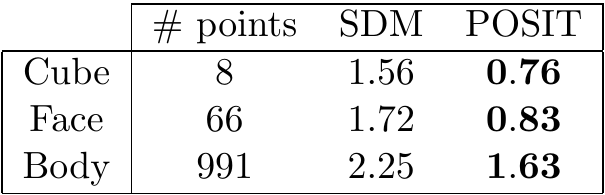}\\[-2.5pt]
(a)&(b)
\end{tabular}
\caption{Accuracy and speed comparison between SDM and POSIT algorithms on estimating 3D object pose. a) Rotation (in degree) and translation (in mm) errors and their standard deviations.
b) Running times (in ms) of both algorithms.}
\label{fig::exp::pose}
\end{figure*}

\section{Conclusions}

This paper presents SDM, a method for solving NLS problems. SDM learns generic DMs in a supervised manner, and is able to overcome many drawbacks of second order optimization schemes, such as non-differentiability and expensive computation of the Jacobians and Hessians. Moreover, it is extremely fast and accurate. We have illustrated the benefits of our approach in three important computer vision problems and achieve state-of-the-art performance
in the problem of facial feature detection.

Beyond SDM, an important contribution of this work in the context discriminative face alignment is to propose a cost function for discriminative
image alignment (Eq.~\ref{eq::face-alignment}).  Existing discriminative methods for facial alignment pose the problem as a regression one, but lack a well-defined alignment error function. Therefore, performing theoretical analysis of these methods is difficult. Eq.~\ref{eq::face-alignment} establishes a direct connection with existing PAMs, and existing algorithms can be used to minimize it, such as Gauss-Newton or the supervised version proposed in this paper. In addition, we were able to use robust feature representations (\eg, HoG) within commonly used alignment algorithms, greatly improving the robustness and performance. 

While SDM has shown to be a promising alternative to second order methods of solving NLS problems, according  to Theorem~\ref{th::high-dim}, the generic DM only exists within a local neighborhood of the optimal parameters. When we train SDM with a large range of parameters, the results may not be accurate. 
For instance, in the experiment of pose estimation, when we trained SDM with a larger range of rotations (\eg, $-60^\circ$ to $60^\circ$), the performance dropped dramatically. 
A simple way to solve this problem is to partition the output space into grids and train a different SDM for each grid. During testing, we can iterate over all the models and return the solution that give us the minimum reprojection error. We will explore more about how to use SDM for global optimization in future work.

\section{Appendix}
\label{sec::app}
\begin{theorem}
\label{th::one-dim}
If the function $h(x)$ satisfies the following two conditions:
\begin{enumerate}
\item $h(x)$ is monotonic around the minimum $x_*$,
\item $h(x)$ is locally Lipschitz continuous anchored at $x_*$ with $K$ as the Lipschitz constant,
\end{enumerate}
then $r$ is a generic DM if $|r| < \frac{2}{K}$ and $\text{sign}(r) = \text{sign}(h^\prime(x))$.
\end{theorem}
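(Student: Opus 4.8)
The plan is to show that the update map $x_{k-1}\mapsto x_k = x_{k-1}-r\big(h(x_{k-1})-h(x_*)\big)$ is a contraction toward $x_*$ on $U(x_*)$, which is exactly the defining property of a generic DM. Write the error $e_k := x_*-x_k$. Subtracting the update from $x_*$ gives the error recursion
$$
e_k = e_{k-1} + r\big(h(x_{k-1})-h(x_*)\big),
$$
so it suffices to exhibit a constant $0<c<1$ with $|e_k|\le c\,|e_{k-1}|$ for every $x_{k-1}\in U(x_*)$.

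First I would linearize the increment in terms of the error. For $x_{k-1}\ne x_*$ set $\beta := \dfrac{h(x_{k-1})-h(x_*)}{x_{k-1}-x_*}$, so that $h(x_{k-1})-h(x_*) = -\beta\,e_{k-1}$ and hence
$$
e_k = (1-r\beta)\,e_{k-1}, \qquad |e_k| = |1-r\beta|\;|e_{k-1}|;
$$
the case $x_{k-1}=x_*$ is trivial. Thus the whole theorem reduces to bounding $|1-r\beta|$ strictly away from $1$, uniformly over $U(x_*)$. Next I would bound $\beta$ and combine with the hypotheses on $r$: by the mean value theorem $\beta = h'(\xi)$ for some $\xi$ between $x_{k-1}$ and $x_*$, so condition~1 (monotonicity, i.e.\ $h'$ of one sign on $U(x_*)$) forces $\text{sign}(\beta)=\text{sign}(h'(x))$, which with $\text{sign}(r)=\text{sign}(h'(x))$ gives $r\beta>0$; condition~2 (locally Lipschitz anchored at $x_*$) gives $|h(x_{k-1})-h(x_*)|\le K|x_{k-1}-x_*|$, i.e.\ $|\beta|\le K$, so $0<r\beta=|r|\,|\beta|\le |r|K<2$ by the hypothesis $|r|<2/K$, whence $-1<1-r\beta<1$ pointwise. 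To upgrade this to a \emph{uniform} contraction I would invoke a lower bound $m := \inf_{x\in U(x_*)}|h'(x)|>0$, available because $\text{sign}(h'(x))$ is well defined (so $h'$ does not vanish) on $U(x_*)$; then $|r|m\le r\beta\le |r|K$, and $c := \max\big(|1-|r|\,m|,\;|1-|r|K|\big)<1$ works. Finally, since $|e_k|<|e_{k-1}|$, the iterate $x_k$ stays in $U(x_*)$ when $U(x_*)$ is an interval about $x_*$, so the recursion is well posed and $x_k\to x_*$ geometrically, establishing that $r$ is a generic DM.

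The step I expect to be the main obstacle is obtaining the uniform lower bound on $|\beta|$ — equivalently, ensuring the contraction factor is bounded strictly below $1$ over the whole neighborhood rather than merely at each point. Strict monotonicity alone does not suffice: for $h(x)=x^3$ at $x_*=0$ one has $\beta=x_{k-1}^2\to 0$ and $|1-r\beta|\to 1$, so no single $c<1$ exists. This is precisely why $h'$ must be bounded away from zero on $U(x_*)$, a requirement implicitly built into the hypothesis that $\text{sign}(h'(x))$ is meaningful there; the write-up should make that dependence explicit, or equivalently shrink $U(x_*)$ to a neighborhood on which $\inf_{x}|h'(x)|>0$. All remaining steps are elementary manipulations of the scalar inequality $|1-r\beta|<1$.
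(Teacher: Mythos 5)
Your core argument coincides with the paper's proof: both reduce the claim to the one-step error ratio, which after cancelling $|x_*-x_{k-1}|$ equals $|1-r\beta|$ with $\beta=\tfrac{h(x_{k-1})-h(x_*)}{x_{k-1}-x_*}$; monotonicity together with $\text{sign}(r)=\text{sign}(h')$ gives $r\beta>0$, and the anchored Lipschitz bound gives $|\beta|\le K$, hence $0<r\beta<2$ and $|1-r\beta|<1$. The paper writes this as $0<r<2/K^i_{k-1}$ with $K^i_{k-1}\le K$, working directly with the difference quotient instead of the mean value theorem (so it never needs differentiability inside the proof); your MVT detour is harmless since the statement already presupposes $h'$ through $\text{sign}(h'(x))$. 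Where you genuinely go beyond the paper is the uniformity issue: the definition of a generic DM requires a single $c<1$ valid on all of $U(x_*)$, and the paper's proof stops at the pointwise inequality and never exhibits such a $c$. Your observation that monotonicity plus Lipschitz continuity alone cannot supply it (the $h(x)=x^3$, $x_*=0$ example, where the ratio tends to $1$) is correct, and your repair---a lower bound $m=\inf_{U(x_*)}|h'|>0$, giving $c=\max\{|1-|r|m|,\,|1-|r|K|\}<1$, plus the remark that $|e_k|<|e_{k-1}|$ keeps iterates in a symmetric neighborhood---is the right one. One caveat: $m>0$ does not follow merely from $h'$ being nonvanishing on $U(x_*)$ (an infimum over an open neighborhood can still be $0$); what is actually needed is $h'(x_*)\neq 0$, after which the difference quotients converge to $h'(x_*)$ and are bounded away from zero once $U(x_*)$ is shrunk---essentially the strengthening you yourself flag in your closing paragraph. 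With that hypothesis made explicit, your argument is sound, and it is in fact more faithful to the paper's own definition of a generic DM than the paper's proof, which only establishes the per-point contraction.
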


\begin{IEEEproof}
Without loss of generality, we assume that $h(x)$ is monotonically increasing, and that $h(x_k) \neq h(x_*)$. Otherwise, the optimization has reached the minimum.
In the following, we use $\Delta x_k^i$ to denote $x_* - x^i_k$ and $\Delta h^i_k$ to denote $h(x_*)-h(x^i_k)$. We want to find $r$ such that
\begin{equation}
\label{eq::proof1}
\frac{|\Delta x^i_{k}|}{|\Delta x^i_{k-1}|}  < 1, \text{if } x_* \neq x_{k-1}^i.
\end{equation}
We replace $x_{k}^i$ with $x_{k-1}^i$ using Eq.~\ref{eq::update-1d} and the left side of Eq.~\ref{eq::proof1} becomes
\begin{align}
 \frac{|\Delta x^i_{k}|}{|\Delta x^i_{k-1}|}
 &=\frac{|\Delta x^i_{k-1}-r\Delta h_{k-1}^i|}{|\Delta x^i_{k-1}|} =\frac{|\Delta x^i_{k-1}(1-r\frac{\Delta h_{k-1}^i}{\Delta x^i_{k-1}})|}{|\Delta x^i_{k-1}|} \nonumber \\
& =\frac{|\Delta x^i_{k-1}||1-r\frac{\Delta h_{k-1}^i}{\Delta x^i_{k-1}}|}{|\Delta x^i_{k-1}|} =\left|1-r\frac{\Delta h_{k-1}^i}{\Delta x^i_{k-1}}\right|\nonumber\\
\label{eq::proof2}
& =\left|1-r\frac{|\Delta h_{k-1}^i|}{|\Delta x^i_{k-1}|}\right|.
\end{align}
The last step is derived from condition 1. Denoting $\frac{|\Delta h_{k-1}^i|}{|\Delta x^i_{k-1}|}$ as $K_{k-1}^i$ and setting Eq.~\ref{eq::proof2} $ < 1$ gives us
\begin{alignat}{3}
            -1 &<{}& 1 - &rK_{k-1}^i{}& &< 1 \nonumber\\
\label{eq::proof3}
\Rightarrow 0  &<{}&  &r{}&             &< \frac{2}{K_{k-1}^i}.
\end{alignat}
From condition 2, we know that $K^{i}_{k-1} \leq K$. Any $0<r<\frac{2}{K}$ will satisfy the inequalities in Eq.~\ref{eq::proof3}, and therefore, there exists a generic DM. Similarly, we can show $0>r>-\frac{2}{K}$ is a generic DM when $h(x)$ is a monotonically decreasing.
\end{IEEEproof}

\begin{theorem}
\label{th::high-dim}
If function $\h(\x)$ satisfies the following two conditions:
\begin{enumerate}
\item $\g(\x) = \R\h(\x)$ is a locally monotone operator anchored at the minimum $\x_*$,
\item $\h(\x)$ is locally Lipschitz continuous anchored at $\x_*$ with $K$ as the Lipschitz constant,
\end{enumerate}
then there exists a generic DM $\R$.
\end{theorem}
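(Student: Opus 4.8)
The plan is to run the argument of Theorem~\ref{th::one-dim} ``in energy'': track the squared error $\|\x_*-\x_k\|_2^2$ and show it strictly decreases under the update~(\ref{eq::update-hd}). Set $\Delta\x_{k-1}=\x_*-\x_{k-1}$ and $\u_{k-1}=\g(\x_{k-1})-\g(\x_*)=\R\big(\h(\x_{k-1})-\h(\x_*)\big)$. Substituting~(\ref{eq::update-hd}) into $\x_*-\x_k$ gives $\x_*-\x_k=\Delta\x_{k-1}+\u_{k-1}$, hence
\begin{equation*}
\|\x_*-\x_k\|_2^2=\|\Delta\x_{k-1}\|_2^2+2\langle\Delta\x_{k-1},\u_{k-1}\rangle+\|\u_{k-1}\|_2^2 .
\end{equation*}
The two hypotheses are exactly what is needed to control the right-hand side: Condition~1 will make the cross term nonpositive, and Condition~2 will cap the quadratic term.

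First I would invoke Condition~1. Local (strict) monotonicity of $\g$ anchored at $\x_*$ gives $\langle\x_{k-1}-\x_*,\g(\x_{k-1})-\g(\x_*)\rangle\ge 0$, i.e.\ $\langle\Delta\x_{k-1},\u_{k-1}\rangle\le 0$, strictly when $\x_{k-1}\ne\x_*$; write $\langle\Delta\x_{k-1},\u_{k-1}\rangle=-\gamma_{k-1}\|\Delta\x_{k-1}\|_2^2$ with $\gamma_{k-1}>0$. Next I would invoke Condition~2: since $\h$ is locally Lipschitz anchored at $\x_*$, so is $\g=\R\h$, say with constant $L$, whence $\|\u_{k-1}\|_2\le L\|\Delta\x_{k-1}\|_2$. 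Dividing the displayed identity by $\|\Delta\x_{k-1}\|_2^2$,
\begin{equation*}
\frac{\|\x_*-\x_k\|_2^2}{\|\Delta\x_{k-1}\|_2^2}=1-2\gamma_{k-1}+\frac{\|\u_{k-1}\|_2^2}{\|\Delta\x_{k-1}\|_2^2}\le 1-2\gamma_{k-1}+L^2 .
\end{equation*}
Replacing a monotone $\R$ by $\alpha\R$ ($\alpha>0$) preserves Condition~1 and turns $L$ into $\alpha L$ and $\gamma_{k-1}$ into $\alpha\gamma_{k-1}$, so the bound becomes $1-2\alpha\gamma_{k-1}+\alpha^2L^2$, which is $<1$ whenever $\alpha<2\gamma_{k-1}/L^2$; hence for $\R$ of small enough magnitude the update contracts, $\|\x_*-\x_k\|_2\le c\,\|\x_*-\x_{k-1}\|_2$ with $c<1$, and a generic DM exists. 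When $n=m=1$ this collapses to the estimate $(1-rK_{k-1}^i)^2<1$ of Theorem~\ref{th::one-dim}.

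The hard part is upgrading this to a \emph{single} contraction constant $c$ valid for all $\x\in U(\x_*)$, as the definition of a generic DM demands: plain monotonicity only yields $\gamma_{k-1}>0$ pointwise, and $\gamma_{k-1}$ may tend to $0$ as $\x_{k-1}\to\x_*$ (as already happens for $h(x)=x^3$ in Section~\ref{sec::one-dim}), so $2\gamma_{k-1}/L^2$ need not stay bounded away from $0$ and $\alpha$ cannot be chosen uniformly from the pointwise inequality alone. I would close this gap by strengthening Condition~1 to a coercivity (strong-monotonicity) estimate $\langle\x-\x_*,\g(\x)-\g(\x_*)\rangle\ge m\|\x-\x_*\|_2^2$ for some $m>0$ on $U(\x_*)$, the multidimensional analogue of ``$h'$ bounded away from $0$'' implicit in the $1$D proof, which forces $\gamma_{k-1}\ge m$, so that shrinking $\R$ until $L<\sqrt{2m}$ yields the uniform constant $c=\sqrt{1-2m+L^2}<1$ for all $\x_{k-1}\in U(\x_*)$.
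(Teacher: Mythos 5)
Your proposal is correct and, at its core, follows the same route as the paper's proof of Theorem~\ref{th::high-dim}: your displayed identity for $\|\x_*-\x_k\|_2^2$ is exactly Eq.~\ref{eq::proof5} (the paper writes $\x_*-\x_k=\Delta\x_{k-1}-\R\Delta\h_{k-1}$ with $\Delta\h_{k-1}=\h(\x_*)-\h(\x_{k-1})$, which is your $\Delta\x_{k-1}+\u_{k-1}$), the cross term is handled by Condition~1, the quadratic term by Condition~2 plus Cauchy--Schwartz, and the conclusion is reached by making $\R$ small. The differences are in bookkeeping and in the endgame: the paper encodes the cross term through the angle $\theta^i$ between $\Delta\x_{k-1}^i$ and $\R\Delta\h_{k-1}^i$ (your $\gamma_{k-1}$ equals $\tfrac{\|\R\Delta\h_{k-1}^i\|}{\|\Delta\x_{k-1}^i\|}\cos\theta^i$) and concludes that any $\R$ with $\|\R\|_F<\frac{2}{K}\min_i\cos\theta^i$ works, whereas you rescale $\R$ by $\alpha$ and require $\alpha<2\gamma_{k-1}/L^2$; these are equivalent pointwise statements. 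The uniformity concern you raise is genuine, and it applies to the paper's own proof as well: strict monotonicity only gives $\cos\theta^i>0$ pointwise, the derived inequality is a pointwise ratio $<1$ rather than a single $c<1$ valid on all of $U(\x_*)$ as the definition of a generic DM requires, and $\min_i\cos\theta^i$ is only guaranteed positive over a finite sample set, not over the whole neighborhood (your $h(x)=x^3$ example shows the infimum can be $0$). Your strong-monotonicity strengthening $\langle\x-\x_*,\g(\x)-\g(\x_*)\rangle\ge m\|\x-\x_*\|_2^2$ makes explicit the assumption the paper leaves implicit and buys a uniform contraction constant $c<1$; so your write-up is, if anything, more careful than the published proof, at the cost of a stronger Condition~1 (one should just note that after rescaling $\R\mapsto\alpha\R$ both $m$ and $L$ scale, so the requirement reads $\alpha L^2<2m$, which small $\alpha$ always satisfies).
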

\begin{proof}
To simplify the notation, we denote $\x_* - \x$ as $\Delta\x$, $\h(\x_*)-\h(\x)$ as $\Delta\h$, and use $\|\x\|$ to represent the L-2 norm. We want to show that there exists $\R$ such that
\begin{equation}
\label{eq::proof4}
\frac{\|\x_{*}-\x^i_{k}\|}{\|\x_{*}-\x^i_{k-1}\|}  < 1, \text{if } \x_* \neq \x_{k-1}^i.
\end{equation}
We replace $\x_k^i$ with $\x_{k-1}^i$ using Eq.~\ref{eq::update-hd} and squaring the left side of Eq.~\ref{eq::proof4} gives us
\begin{align}
& \frac{\|\Delta\x^i_k\|^2}{\|\Delta\x^i_{k-1}\|^2}
 = \frac{\|\Delta\x^i_{k-1}-\R\Delta\h_{k-1}^i\|^2}{\|\Delta\x^i_{k-1}\|^2}\nonumber\\
 = & \frac{\|\Delta\x^i_{k-1}\|^2}{\|\Delta\x^i_{k-1}\|^2} + \frac{\|\R\Delta\h_{k-1}^i\|^2}{\|\Delta\x^i_{k-1}\|^2}
-2\frac{\Delta\x_{k-1}^{i^\top}\R\Delta\h_{k-1}^i}{\|\Delta\x^i_{k-1}\|^2}\nonumber\\
 = & 1 + \frac{\|\R\Delta\h_{k-1}^i\|}{\|\Delta\x^i_{k-1}\|^2}
\bigg(\|\R\Delta\h_{k-1}^i\|
\label{eq::proof5}
-2\Delta\x_{k-1}^{i^\top}\frac{\R\Delta\h_{k-1}^i}{\|\R\Delta\h_{k-1}^i\|}\bigg).
\end{align}
Setting Eq.~\ref{eq::proof5} $< 1$ gives us,
\begin{equation}
\label{eq::proof6}
\|\R\Delta\h_{k-1}^i\| \leq 2\Delta\x_{k-1}^{i^\top} \frac{\R\Delta\h_{k-1}^i}{\|\R\Delta\h_{k-1}^i\|}
\end{equation}
Condition 1 ensures that $\Delta\x_{k-1}^{i^\top} \R\Delta\h_{k-1}^i > 0$. From the geometric definition of dot product, we can rewrite the right side of the inequality~\ref{eq::proof6}  as,
\begin{equation*}
2\Delta\x_{k-1}^{i^\top} \frac{\R\Delta\h_{k-1}^i}{\|\R\Delta\h_{k-1}^i\|} =
2\|\Delta\x_{k-1}^i\| \cos\theta^i,
\end{equation*}
where $\theta^i$ is the angle between vectors $\Delta\x_{k-1}^i$ and $\R\Delta\h_{k-1}^i$. Using condition 2 we have
\begin{equation}
\label{eq::proof7}
2\|\Delta\x_{k-1}^i\| \cos\theta^i \geq \frac{2}{K} \|\Delta\h_{k-1}^i\| \cos\theta^i
\end{equation}
From the Cauchy-Schwartz inequality,
\begin{equation}
\label{eq::proof8}
\|\R\Delta\h_{k-1}^i\| \leq \|\R\|_F\|\Delta\h_{k-1}^i\|.
\end{equation}
Given the inequalities in Eqs.~\ref{eq::proof7} and~\ref{eq::proof8}, the condition that makes Eq.~\ref{eq::proof6} hold is, 
\begin{equation}
\|\R\|_F \leq \frac{2}{K}\cos\theta^i.
\end{equation}
Any $\R$ whose $\|\R\|_F < \frac{2}{K}\min_i\cos\theta^i$ gaurantees the inequality stated in Eq.~\ref{eq::proof4}. Therefore, there exists a generic DM.
\end{proof}


%


\ifCLASSOPTIONcompsoc
\else
\fi


\ifCLASSOPTIONcaptionsoff
  \newpage
\fi



\bibliographystyle{IEEEtran}
\bibliography{bibliografia}
%

%
\begin{IEEEbiography}[{\includegraphics[width=1in,height=1.25in,clip,keepaspectratio]{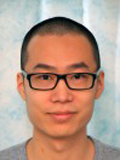}}]{Xuehan Xiong}
received the BS degree in computer science from University of Arizona in 2009, the MS degree in Robotics from Carnegie Mellon University in 2011. He is now working towards the PhD degree in Robotics at Carnegie Mellon University. His research interests include computer vision and optimization. His work has won the best paper award in ISARC 2011 and has been nominated to be the best vision paper in ICRA 2011. His software (http://humansensing.cs.cmu.edu/intraface) on facial feature detection and tracking has been downloaded over 3800 times.
\end{IEEEbiography}

\begin{IEEEbiography}[{\includegraphics[width=1in,height=1.25in,clip,keepaspectratio]{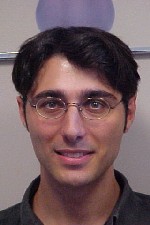}}]{Fernando De la Torre}
received the BSc degree in telecommunications, and the MSc and PhD degrees in electronic engineering from the La Salle School of Engineering at Ramon Llull University, Barcelona, Spain in 1994, 1996, and 2002, respectively. He is an associate research professor in the Robotics Institute at Carnegie Mellon University. 
His research interests are in the fields of computer vision and machine learning. Currently, he is directing the Component Analysis Laboratory (http://ca.cs.cmu.edu) and the Human Sensing Laboratory (http://humansensing.cs.cmu.edu) at Carnegie Mellon University. He has more than 100 publications in refereed journals and conferences. He has organized and co-organized several workshops and has given tutorials at international conferences on the use and extensions of component analysis
\end{IEEEbiography}


\vfill


\setlength{\tabcolsep}{0.5pt}
\begin{figure*}[t]
\centering
\begin{tabular}{cccccccccc}
\includegraphics[width=0.099\linewidth]{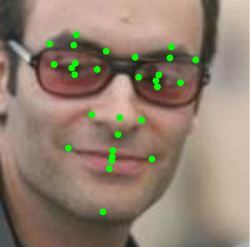}&
\includegraphics[width=0.099\linewidth]{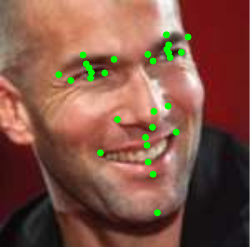}&
\includegraphics[width=0.099\linewidth]{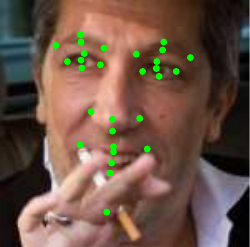}&
\includegraphics[width=0.099\linewidth]{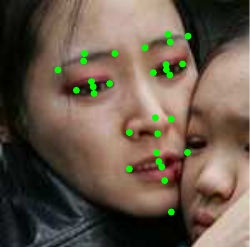}&
\includegraphics[width=0.099\linewidth]{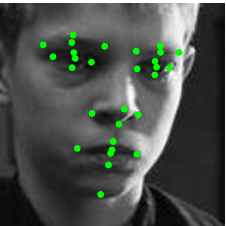}&
\includegraphics[width=0.099\linewidth]{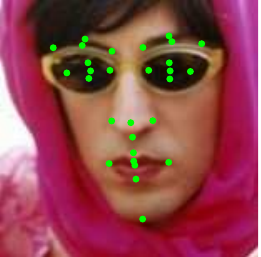}&
\includegraphics[width=0.099\linewidth]{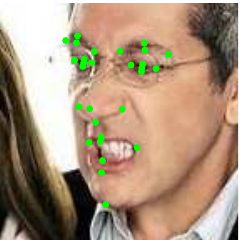}&
\includegraphics[width=0.099\linewidth]{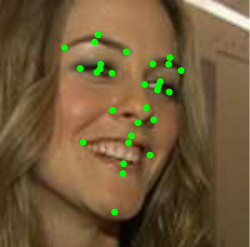}&
\includegraphics[width=0.099\linewidth]{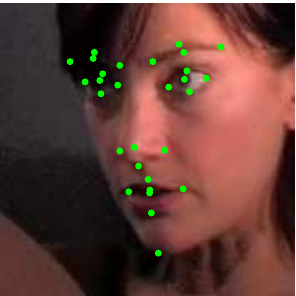}&
\includegraphics[width=0.099\linewidth]{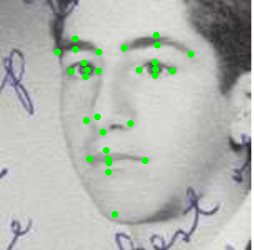}
\\[-2.5pt]
\includegraphics[width=0.099\linewidth]{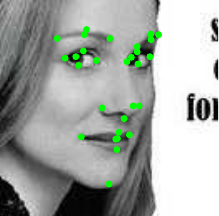}&
\includegraphics[width=0.099\linewidth]{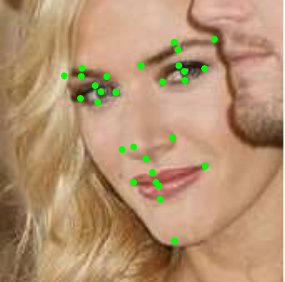}&
\includegraphics[width=0.099\linewidth]{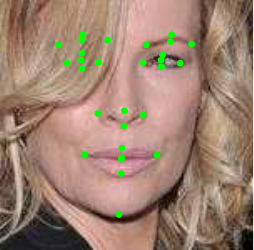}&
\includegraphics[width=0.099\linewidth]{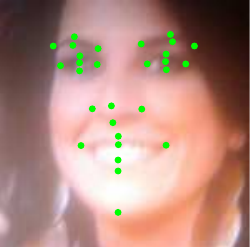}&
\includegraphics[width=0.099\linewidth]{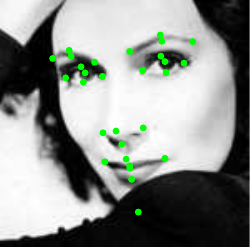}&
\includegraphics[width=0.099\linewidth]{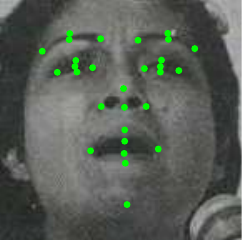}&
\includegraphics[width=0.099\linewidth]{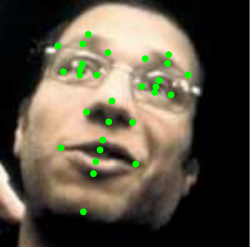}&
\includegraphics[width=0.099\linewidth]{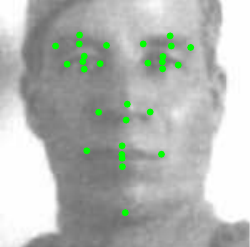}&
\includegraphics[width=0.099\linewidth]{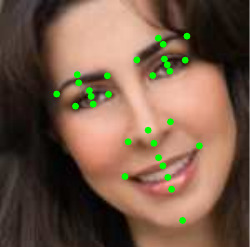}&
\includegraphics[width=0.099\linewidth]{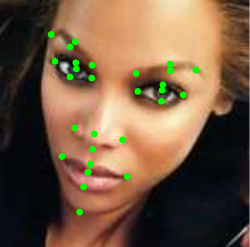}
\\[-2.5pt]
\includegraphics[width=0.099\linewidth]{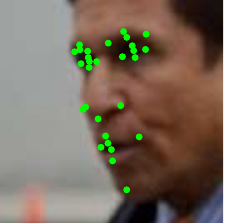}&
\includegraphics[width=0.099\linewidth]{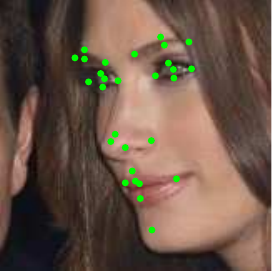}&
\includegraphics[width=0.099\linewidth]{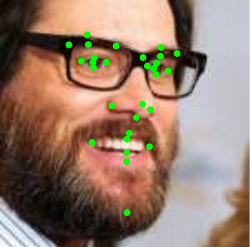}&
\includegraphics[width=0.099\linewidth]{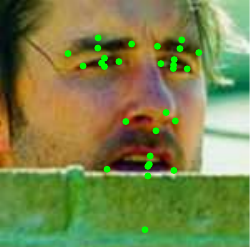}&
\includegraphics[width=0.099\linewidth]{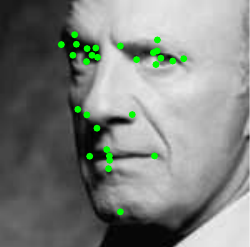}&
\includegraphics[width=0.099\linewidth]{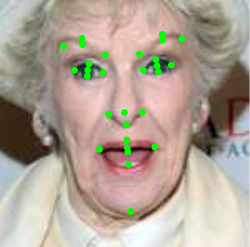}&
\includegraphics[width=0.099\linewidth]{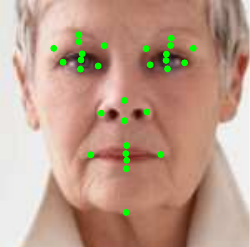}&
\includegraphics[width=0.099\linewidth]{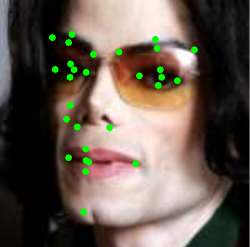}&
\includegraphics[width=0.099\linewidth]{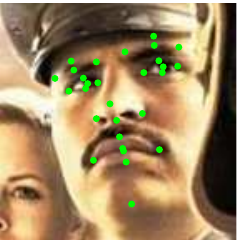}&
\includegraphics[width=0.099\linewidth]{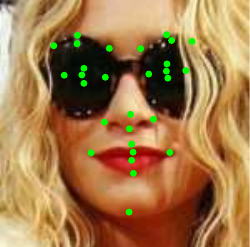}
\end{tabular}
\vspace{-10pt}
\caption{Example results from our method on LFPW dataset.
The first two rows show faces with strong changes in pose and illumination,
and faces that are partially occluded. The last row shows the  10 {\em worst} images w.r.t normalized mean error.}
\vspace{-10pt}
\label{fig::lfpw}
\end{figure*}
\begin{figure*}[t]
\centering
\begin{tabular}{cccccccccc}
\includegraphics[width=0.099\linewidth]{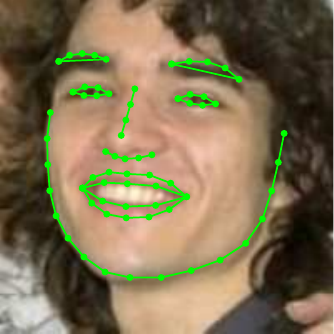}&
\includegraphics[width=0.099\linewidth]{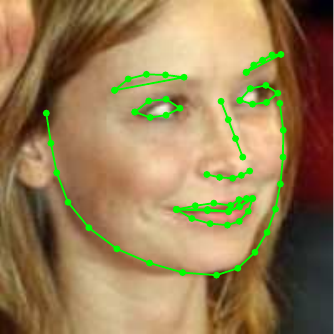}&
\includegraphics[width=0.099\linewidth]{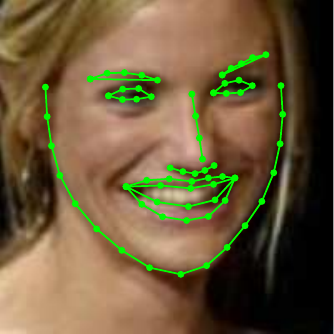}&
\includegraphics[width=0.099\linewidth]{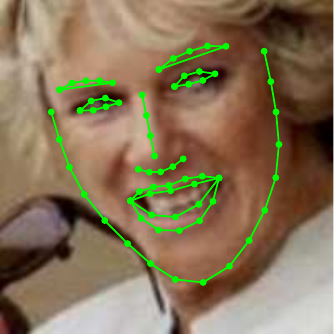}&
\includegraphics[width=0.099\linewidth]{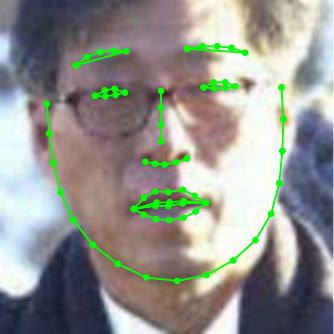}&
\includegraphics[width=0.099\linewidth]{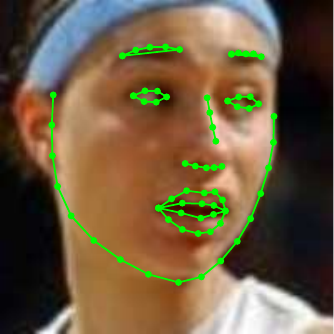}&
\includegraphics[width=0.099\linewidth]{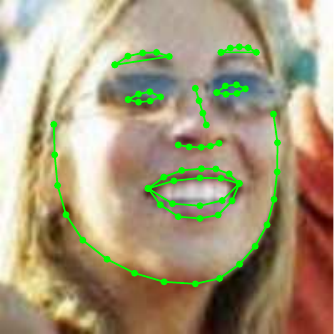}&
\includegraphics[width=0.099\linewidth]{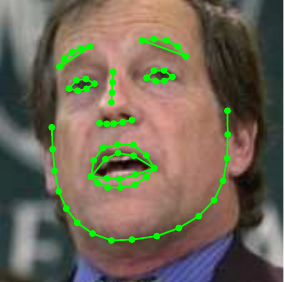}&
\includegraphics[width=0.099\linewidth]{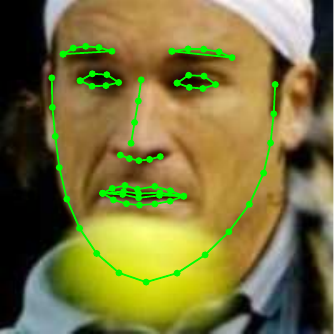}&
\includegraphics[width=0.099\linewidth]{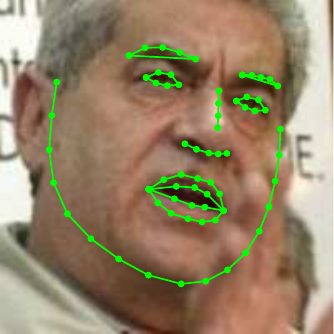}
\\[-2.5pt]
\includegraphics[width=0.099\linewidth]{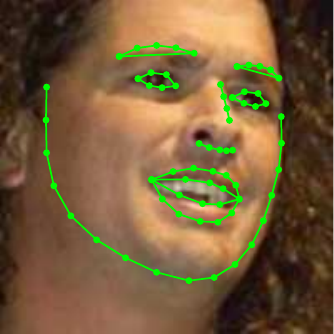}&
\includegraphics[width=0.099\linewidth]{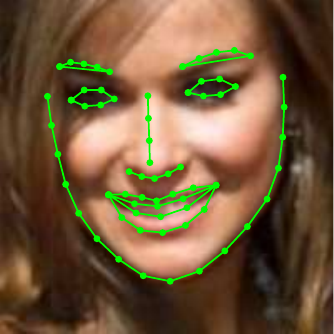}&
\includegraphics[width=0.099\linewidth]{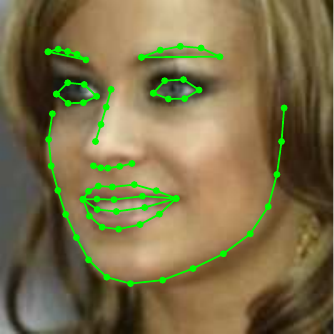}&
\includegraphics[width=0.099\linewidth]{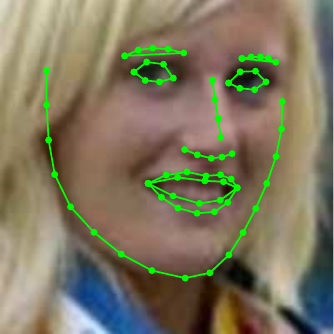}&
\includegraphics[width=0.099\linewidth]{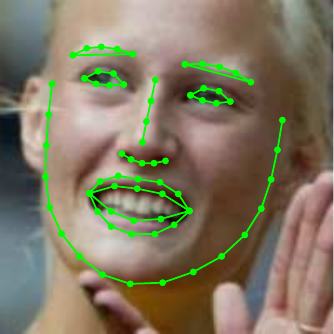}&
\includegraphics[width=0.099\linewidth]{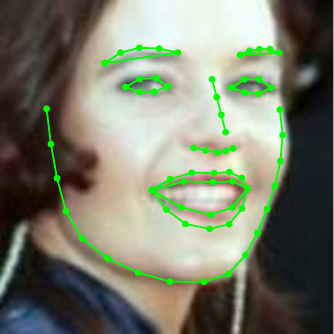}&
\includegraphics[width=0.099\linewidth]{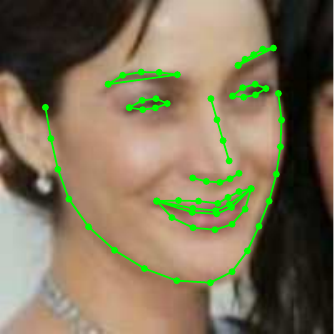}&
\includegraphics[width=0.099\linewidth]{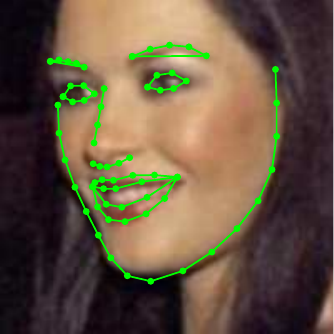}&
\includegraphics[width=0.099\linewidth]{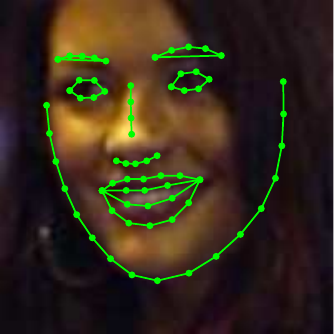}&
\includegraphics[width=0.099\linewidth]{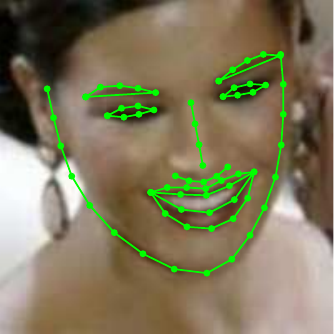}
\\[-2.5pt]
\includegraphics[width=0.099\linewidth]{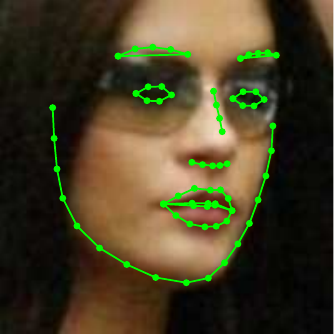}&
\includegraphics[width=0.099\linewidth]{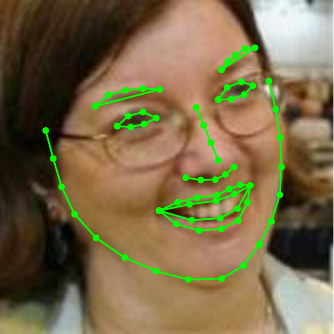}&
\includegraphics[width=0.099\linewidth]{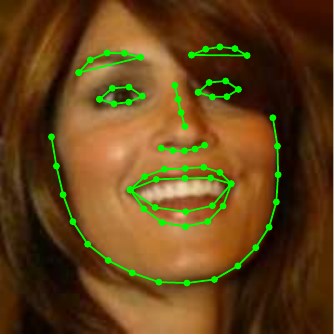}&
\includegraphics[width=0.099\linewidth]{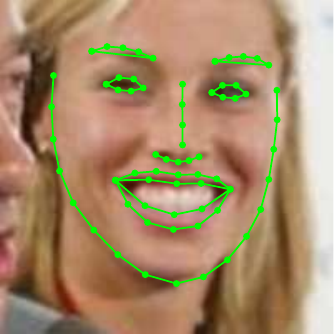}&
\includegraphics[width=0.099\linewidth]{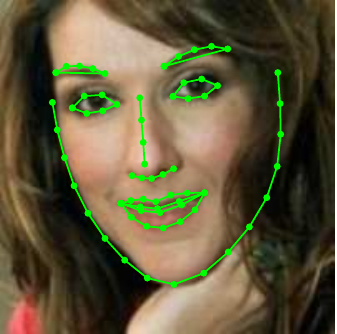}&
\includegraphics[width=0.099\linewidth]{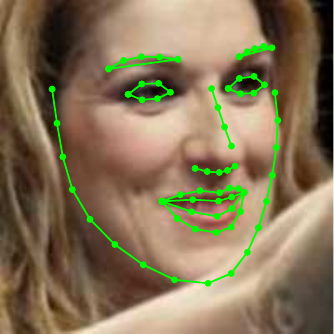}&
\includegraphics[width=0.099\linewidth]{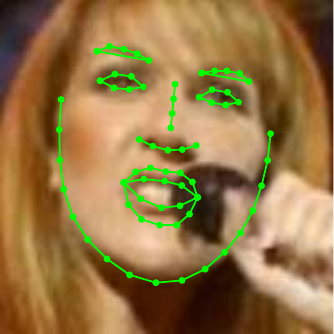}&
\includegraphics[width=0.099\linewidth]{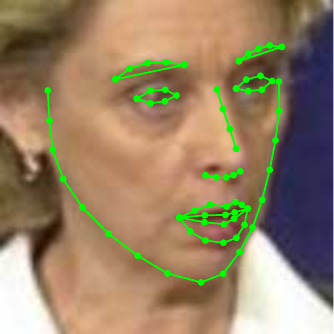}&
\includegraphics[width=0.099\linewidth]{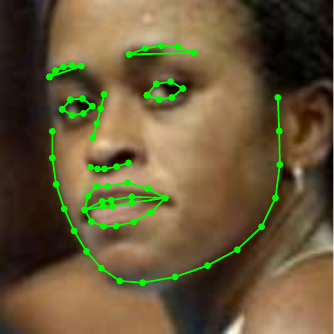}&
\includegraphics[width=0.099\linewidth]{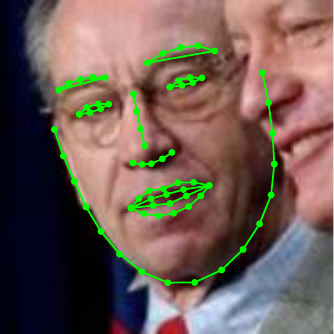}
\\[-2.5pt]
\includegraphics[width=0.099\linewidth]{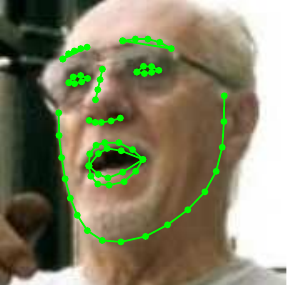}&
\includegraphics[width=0.099\linewidth]{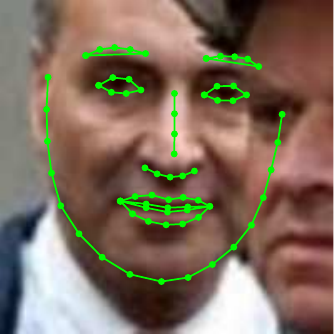}&
\includegraphics[width=0.099\linewidth]{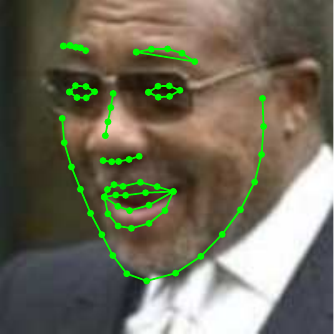}&
\includegraphics[width=0.099\linewidth]{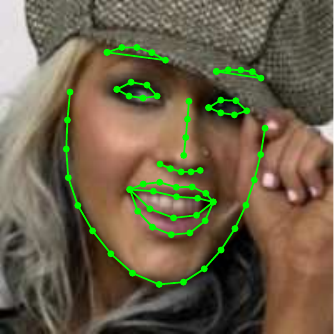}&
\includegraphics[width=0.099\linewidth]{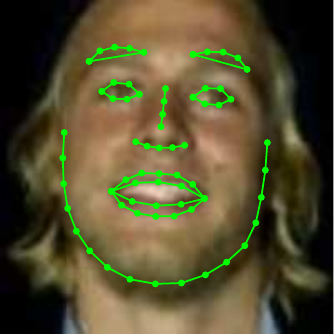}&
\includegraphics[width=0.099\linewidth]{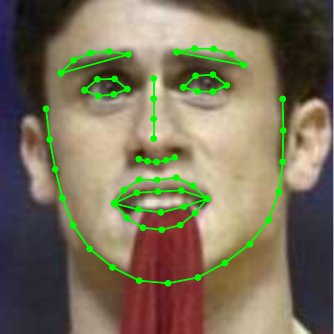}&
\includegraphics[width=0.099\linewidth]{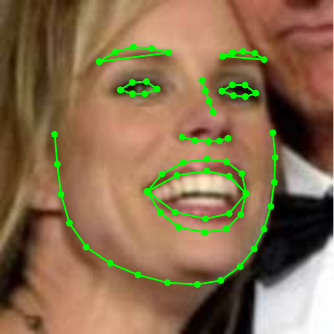}&
\includegraphics[width=0.099\linewidth]{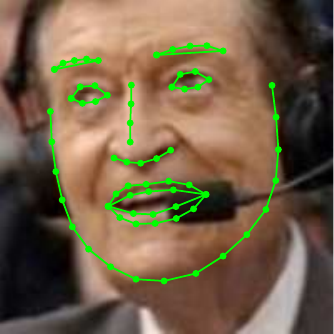}&
\includegraphics[width=0.099\linewidth]{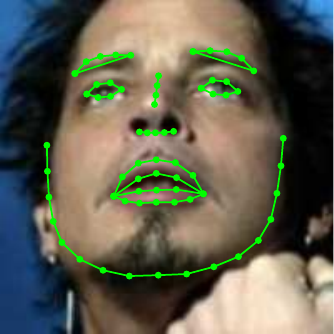}&
\includegraphics[width=0.099\linewidth]{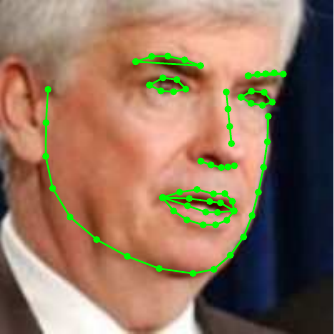}
\end{tabular}
\vspace{-10pt}
\caption{Example results on LFW-A\&C dataset.}
\vspace{-10pt}
\label{fig::lfw}
\end{figure*}
\begin{figure*}[t]
\centering
\begin{tabular}{cccccccccc}
\includegraphics[width=0.099\linewidth]{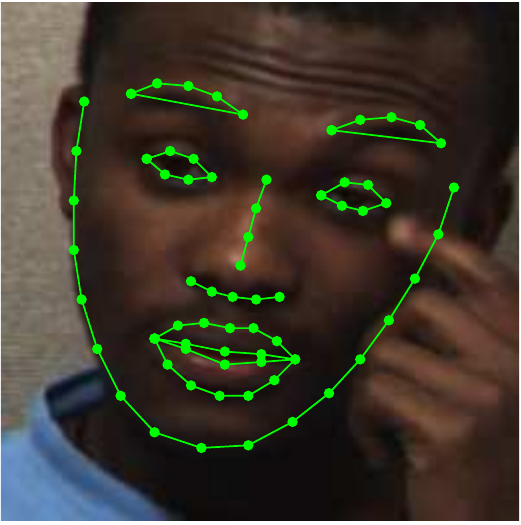}&
\includegraphics[width=0.099\linewidth]{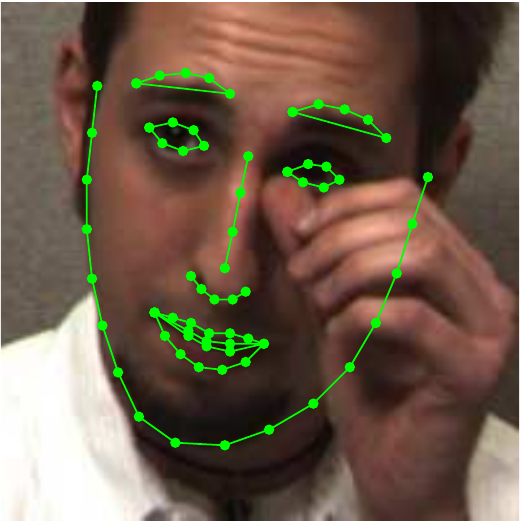}&
\includegraphics[width=0.099\linewidth]{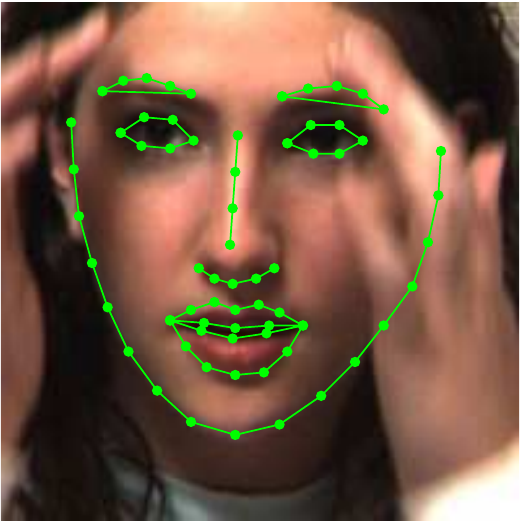}&
\includegraphics[width=0.099\linewidth]{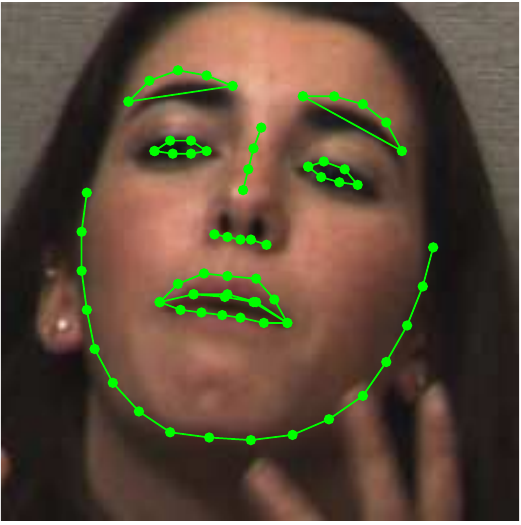}&
\includegraphics[width=0.099\linewidth]{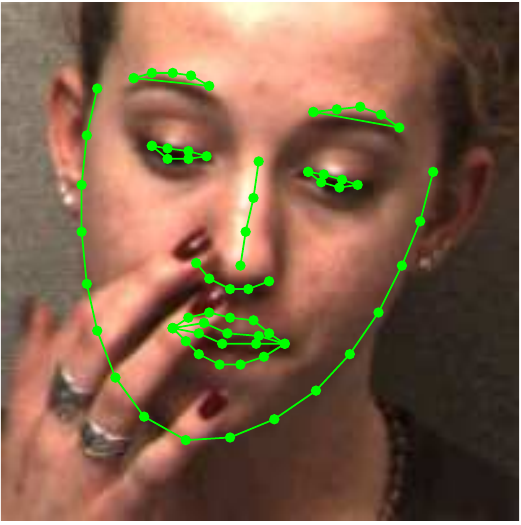}&
\includegraphics[width=0.099\linewidth]{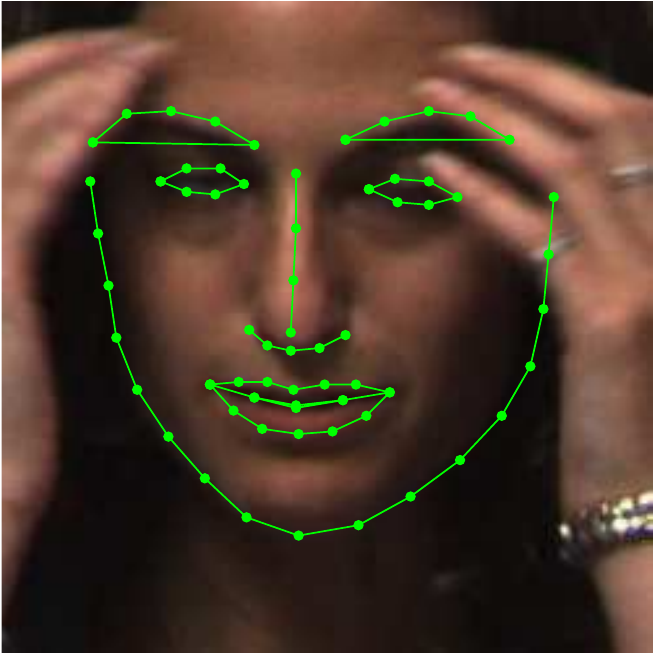}&
\includegraphics[width=0.099\linewidth]{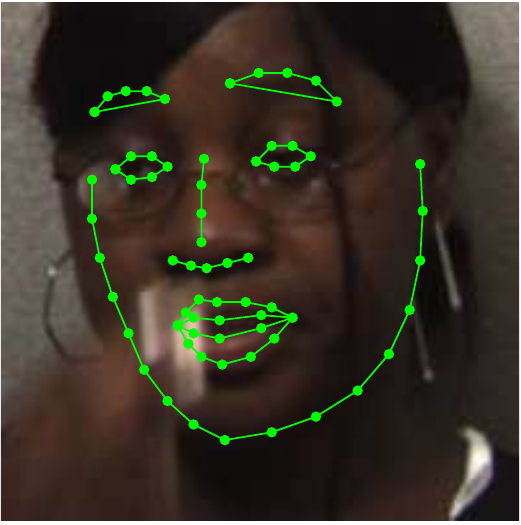}&
\includegraphics[width=0.099\linewidth]{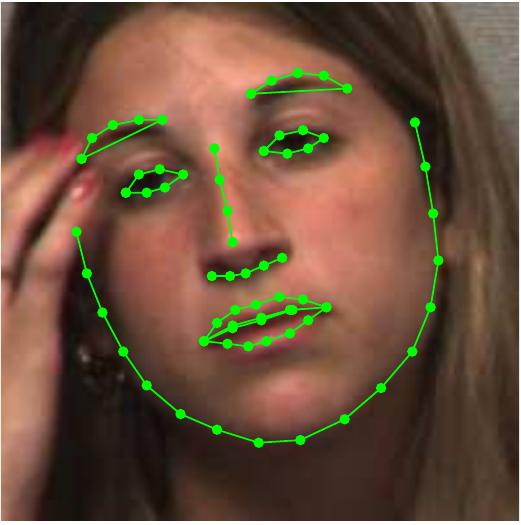}&
\includegraphics[width=0.099\linewidth]{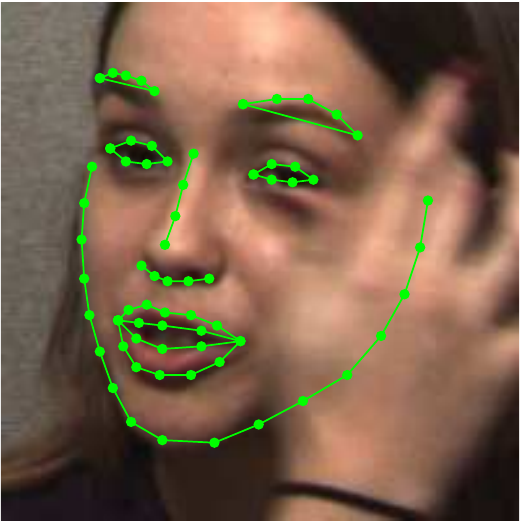}&
\includegraphics[width=0.099\linewidth]{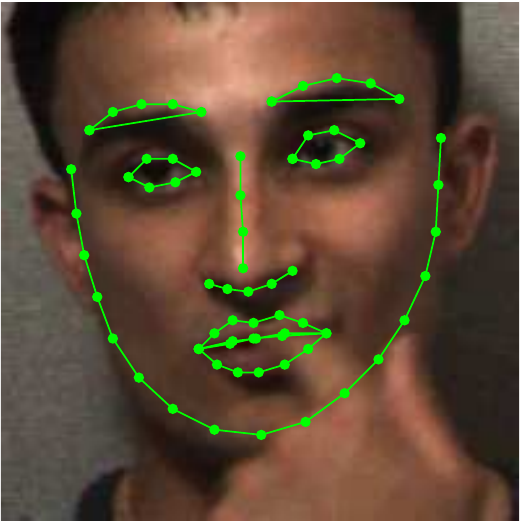}
\\[-2.5pt]
\includegraphics[width=0.099\linewidth]{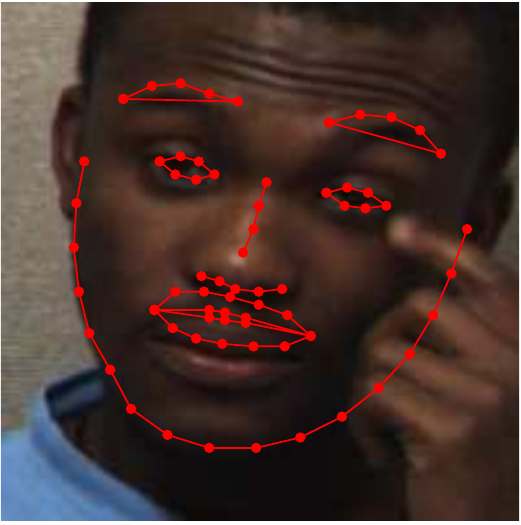}&
\includegraphics[width=0.099\linewidth]{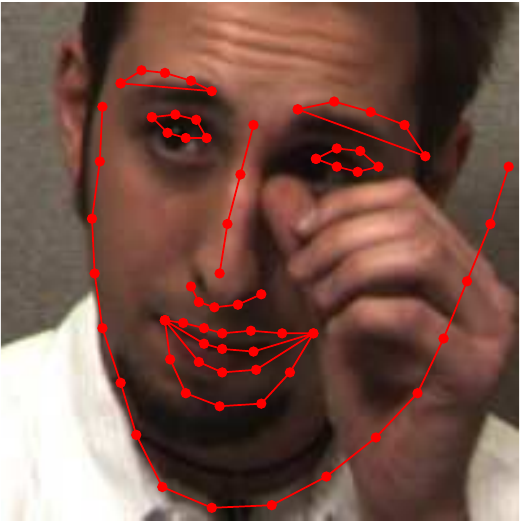}&
\includegraphics[width=0.099\linewidth]{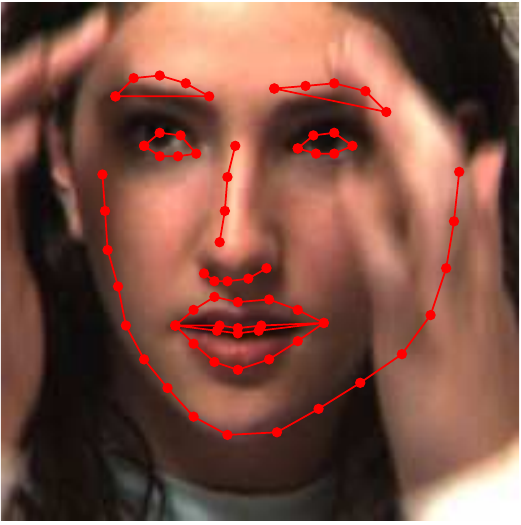}&
\includegraphics[width=0.099\linewidth]{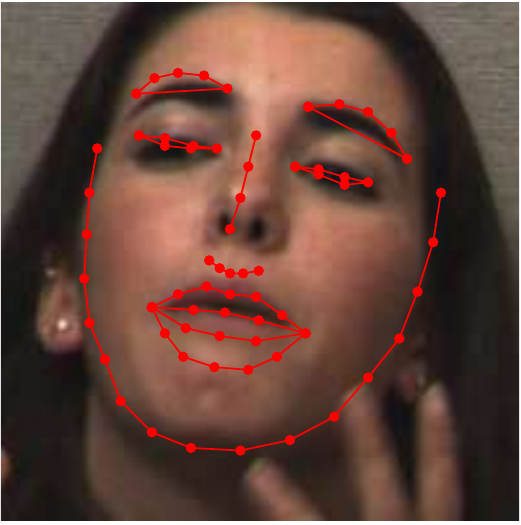}&
\includegraphics[width=0.099\linewidth]{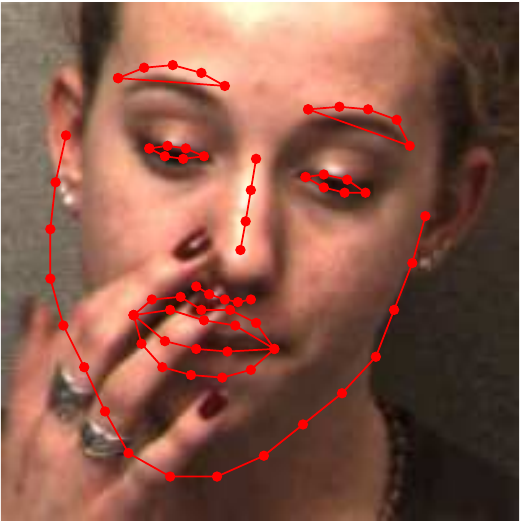}&
\includegraphics[width=0.099\linewidth]{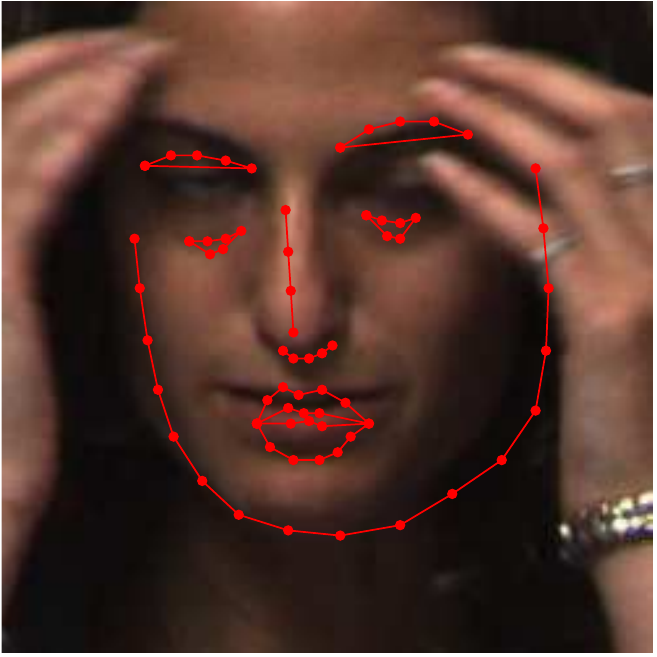}&
\includegraphics[width=0.099\linewidth]{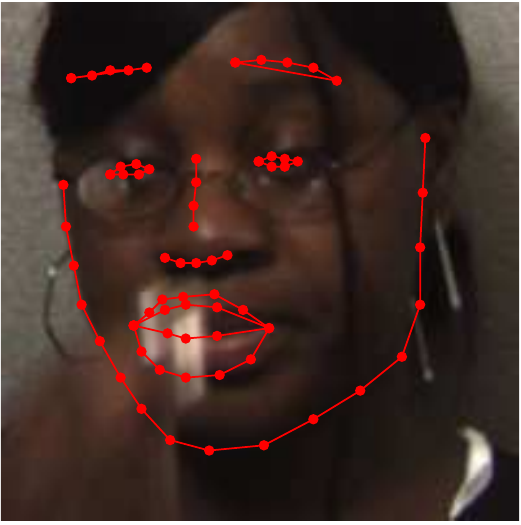}&
\includegraphics[width=0.099\linewidth]{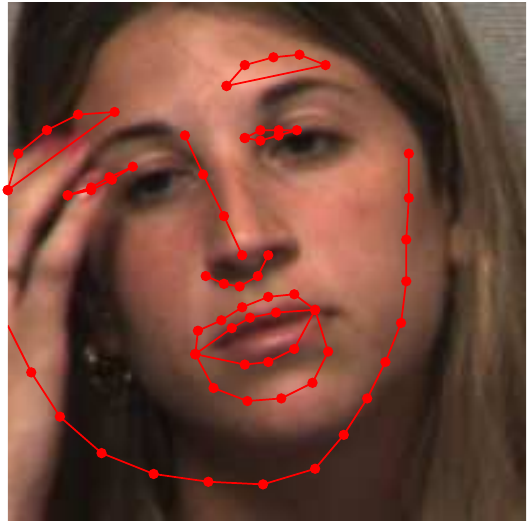}&
\includegraphics[width=0.099\linewidth]{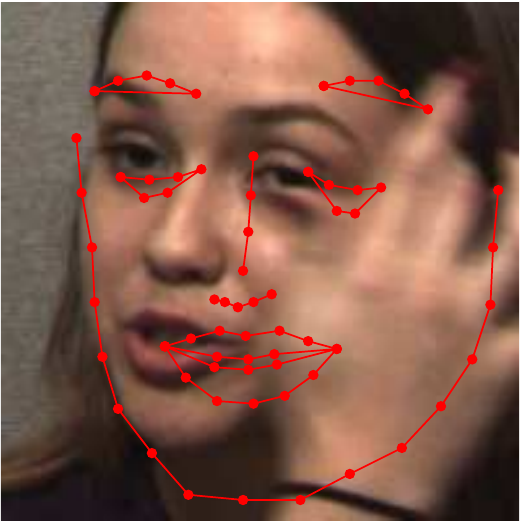}&
\includegraphics[width=0.099\linewidth]{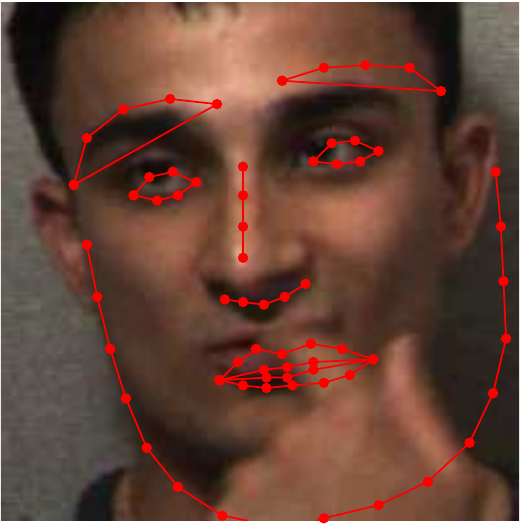}
\end{tabular}
\vspace{-10pt}
\caption{Comparison between the tracking results from SDM (top row)
and person-specific tracker (bottom row).}
\vspace{-10pt}
\label{fig::ru-facs}
\end{figure*}
\begin{figure*}[t]
\centering
\begin{tabular}{cccccccccc}
\includegraphics[width=0.099\linewidth]{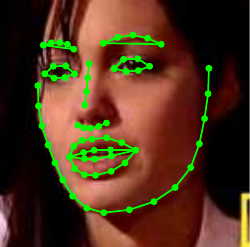}&
\includegraphics[width=0.099\linewidth]{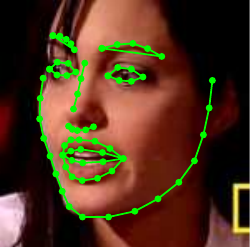}&
\includegraphics[width=0.099\linewidth]{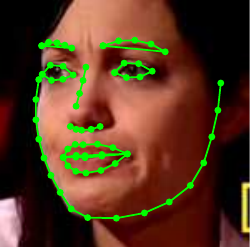}&
\includegraphics[width=0.099\linewidth]{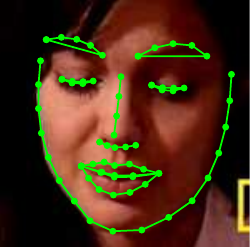}&
\includegraphics[width=0.099\linewidth]{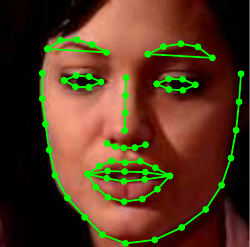}&
\includegraphics[width=0.099\linewidth]{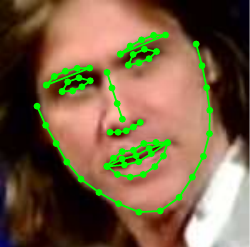}&
\includegraphics[width=0.099\linewidth]{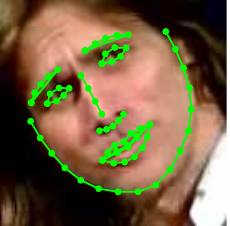}&
\includegraphics[width=0.099\linewidth]{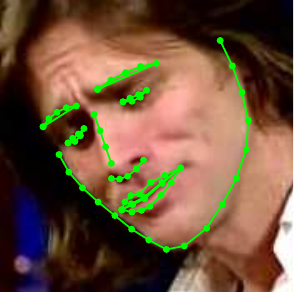}&
\includegraphics[width=0.099\linewidth]{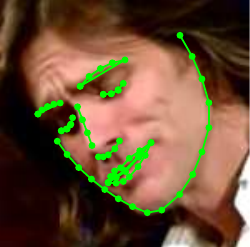}&
\includegraphics[width=0.099\linewidth]{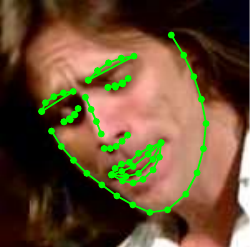}
\\[-2.5pt]
\includegraphics[width=0.099\linewidth]{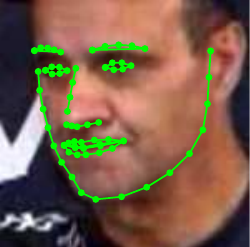}&
\includegraphics[width=0.099\linewidth]{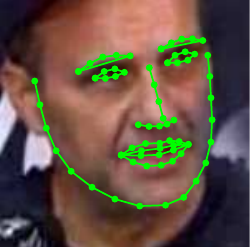}&
\includegraphics[width=0.099\linewidth]{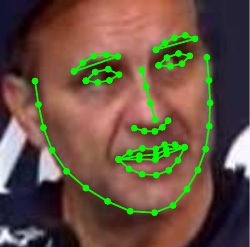}&
\includegraphics[width=0.099\linewidth]{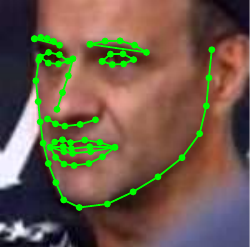}&
\includegraphics[width=0.099\linewidth]{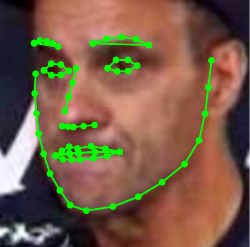}&
\includegraphics[width=0.099\linewidth]{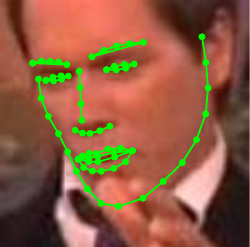}&
\includegraphics[width=0.099\linewidth]{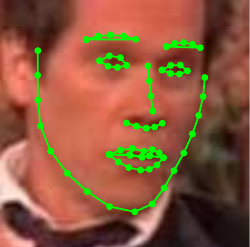}&
\includegraphics[width=0.099\linewidth]{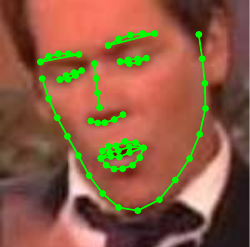}&
\includegraphics[width=0.099\linewidth]{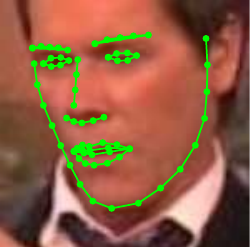}&
\includegraphics[width=0.099\linewidth]{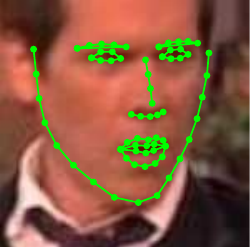}
\end{tabular}
\vspace{-10pt}
\caption{Example results on the Youtube Celebrity dataset.}
\vspace{-10pt}
\label{fig::youtube}
\end{figure*}

\end{document}